\pdfoutput=1
\PassOptionsToPackage{table}{xcolor}
\documentclass[11pt]{article}

\usepackage[preprint]{acl}

\usepackage{times}
\usepackage{latexsym}

\usepackage[T1]{fontenc}

\usepackage[utf8]{inputenc}

\usepackage{microtype}

\usepackage{inconsolata}
\usepackage{makecell}
\usepackage{siunitx}
\usepackage{graphicx}
\usepackage{amsmath}
\usepackage{amssymb}
\usepackage{multirow}
\usepackage{booktabs}

\usepackage{url}
\usepackage{amsthm}
\usepackage{xcolor}
\usepackage{array}
\usepackage[most]{tcolorbox}
\tcbuselibrary{skins, breakable}
\usepackage{pifont}
\newtheorem{theorem}{Theorem}
\newtheorem{lemma}{Lemma}

\definecolor{reasoninggreen}{RGB}{0, 120, 60}
\definecolor{failurered}{RGB}{200, 30, 30}
\newcommand{\cmark}{\textcolor{reasoninggreen}{\large\ding{51}}}
\newcommand{\xmark}{\textcolor{failurered}{\large\ding{55}}}
\newtcolorbox{motifbox}[1][]{
    enhanced,
    colback=gray!5!white,
    colframe=gray!60!black,
    title={\textbf{Case: Reasoning-Answer Mismatch}},
    fonttitle=\small\bfseries\sffamily,,
    coltitle=white,
    attach boxed title to top left={xshift=25pt, yshift*=-10pt},
    boxed title style={colback=gray!60!black},
    boxrule=1.5pt,
    arc=3pt,
    #1
}

\tcbset{
  boxrule = 0.5pt,
  colback = white,
  colframe = black,
  left = 3mm, right = 3mm, top = 3mm, bottom = 3mm,
  enlarge left by = 0mm,
  fonttitle = \bfseries\color{white},
  colbacktitle = black!70,
  coltitle = white,
}
\newtcolorbox{exbox}[1]{title=#1}

\definecolor{natureblue}{HTML}{00468B}
\definecolor{naturegreen}{HTML}{42B540}
\definecolor{naturered}{HTML}{ED0000}

\newtcolorbox{finalbox}[1][]{
    enhanced,
    frame hidden,
    borderline west={3pt}{0pt}{natureblue},
    colback=white,
    sharp corners,
    boxsep=0pt,
    left=8pt, right=2pt, top=2pt, bottom=2pt,
    fontupper=\small,
    #1
}

\title{Don't Peek at the Answer: Outcome-Masked Group Relative Policy Optimization for Label-Free RLVR}

\author{
Yongshi Ye\textsuperscript{1,2},
Liang Zhang\textsuperscript{1},
Yidong Chen\textsuperscript{1,2},
Xiaodong Shi\textsuperscript{1,2,}\thanks{\,\,Corresponding authors.},
Biao Fu\textsuperscript{1,2,}\footnotemark[1]
\\[0.5em]
\textsuperscript{1}Xiamen University \\
\textsuperscript{2}Key Laboratory of Digital Protection and Intelligent Processing of Intangible Cultural \\
Heritage of Fujian and Taiwan (Xiamen University), Ministry of Culture and Tourism\\
\texttt{\{yeyongshi,biaofu\}@stu.xmu.edu.cn,mandel@xmu.edu.cn}
}

\begin{document}
\maketitle

\begin{abstract}

Reinforcement Learning with Verifiable Rewards (RLVR) improves LLM reasoning but typically relies on ground-truth (GT) answers, limiting scalability.
Voting-based label-free RLVR replace gold supervision with answer-level consensus from model samples. However, collapse arises when the same answer-level signal is used both to estimate rewards and to drive token-level policy optimization, encouraging the model to directly reinforce answer tokens rather than improve reasoning.
We propose OM-GRPO, a label-free RLVR framework that decouples reward estimation from policy optimization. OM-GRPO masks gradients on the answer span while retaining answer-level rewards through a soft consensus signal, shifting optimization pressure away from answer tokens. We further introduce Contrast-Augmented Reward, which refines reward estimation via low-cost pairwise comparisons over existing trajectories without additional rollouts.
Across diverse reasoning benchmarks and three LLM backbones, OM-GRPO consistently outperforms existing label-free RLVR methods and matches supervised GT-reward training with stable optimization. This stability is particularly beneficial in the Test-Time Training setting, where OM-GRPO surpasses majority voting by 4.24 points.

\end{abstract}

\definecolor{lightpurple1}{rgb}{1.00, 1.00, 1.00}
\definecolor{lightpurple2}{rgb}{0.98, 0.98, 0.99}
\definecolor{lightpurple3}{rgb}{0.96, 0.96, 0.98}
\definecolor{lightpurple4}{rgb}{0.94, 0.93, 0.97}
\definecolor{lightpurple5}{rgb}{0.91, 0.89, 0.96}
\definecolor{lightpurple6}{rgb}{0.89, 0.85, 0.94}
\definecolor{lightpurple7}{rgb}{0.86, 0.80, 0.92}

\definecolor{lightblue1}{rgb}{1.00, 1.00, 1.00}
\definecolor{lightblue2}{rgb}{0.96, 0.99, 1.00}
\definecolor{lightblue3}{rgb}{0.92, 0.97, 1.00}
\definecolor{lightblue4}{rgb}{0.88, 0.95, 1.00}
\definecolor{lightblue5}{rgb}{0.83, 0.93, 1.00}
\definecolor{lightblue6}{rgb}{0.77, 0.90, 1.00}
\definecolor{lightblue7}{rgb}{0.70, 0.86, 1.00}

\definecolor{tealgreen1}{rgb}{1.00, 1.00, 1.00}
\definecolor{tealgreen2}{rgb}{0.94, 0.98, 0.97}
\definecolor{tealgreen3}{rgb}{0.88, 0.97, 0.94}
\definecolor{tealgreen4}{rgb}{0.82, 0.96, 0.91}
\definecolor{tealgreen5}{rgb}{0.76, 0.95, 0.88}
\definecolor{tealgreen6}{rgb}{0.69, 0.94, 0.85}
\definecolor{tealgreen7}{rgb}{0.61, 0.91, 0.85}

\definecolor{lightpink1}{rgb}{1.00, 0.98, 0.99}
\definecolor{lightpink2}{rgb}{1.00, 0.96, 0.98}
\definecolor{lightpink3}{rgb}{1.00, 0.94, 0.97}
\definecolor{lightpink4}{rgb}{1.00, 0.92, 0.96}
\definecolor{lightpink5}{rgb}{1.00, 0.90, 0.95}
\definecolor{lightpink6}{rgb}{1.00, 0.88, 0.94}
\definecolor{lightpink7}{rgb}{1.00, 0.85, 0.93}

\definecolor{lightcyan1}{rgb}{1.00, 1.00, 1.00}
\definecolor{lightcyan2}{rgb}{0.97, 0.99, 0.99}
\definecolor{lightcyan3}{rgb}{0.92, 0.98, 0.98}
\definecolor{lightcyan4}{rgb}{0.84, 0.95, 0.96}
\definecolor{lightcyan5}{rgb}{0.76, 0.91, 0.94}
\definecolor{lightcyan6}{rgb}{0.68, 0.87, 0.92}
\definecolor{lightcyan7}{rgb}{0.60, 0.83, 0.90}

\definecolor{pinkpurple1}{rgb}{1.00, 1.00, 1.00}
\definecolor{pinkpurple2}{rgb}{0.98, 0.95, 0.98}
\definecolor{pinkpurple3}{rgb}{0.96, 0.90, 0.96}
\definecolor{pinkpurple4}{rgb}{0.93, 0.85, 0.94}
\definecolor{pinkpurple5}{rgb}{0.89, 0.77, 0.91}
\definecolor{pinkpurple6}{rgb}{0.85, 0.69, 0.88}
\definecolor{pinkpurple7}{rgb}{0.80, 0.60, 0.85}

\definecolor{peach1}{rgb}{1.00, 0.99, 0.98}
\definecolor{peach2}{rgb}{1.00, 0.97, 0.95}
\definecolor{peach3}{rgb}{1.00, 0.95, 0.92}
\definecolor{peach4}{rgb}{1.00, 0.93, 0.89}
\definecolor{peach5}{rgb}{1.00, 0.90, 0.86}
\definecolor{peach6}{rgb}{1.00, 0.87, 0.83}
\definecolor{peach7}{rgb}{1.00, 0.84, 0.80}

\definecolor{pinkgrad1}{RGB}{255, 255, 255}
\definecolor{pinkgrad2}{RGB}{254, 248, 250}
\definecolor{pinkgrad3}{RGB}{254, 241, 245}
\definecolor{pinkgrad4}{RGB}{253, 234, 240}
\definecolor{pinkgrad5}{RGB}{253, 226, 235}
\definecolor{pinkgrad6}{RGB}{252, 218, 230}
\definecolor{pinkgrad7}{RGB}{250, 202, 220}

\section{Introduction}
\label{sec:intro}

\begin{figure*}[th]
    \centering
    \includegraphics[width=0.845\textwidth]{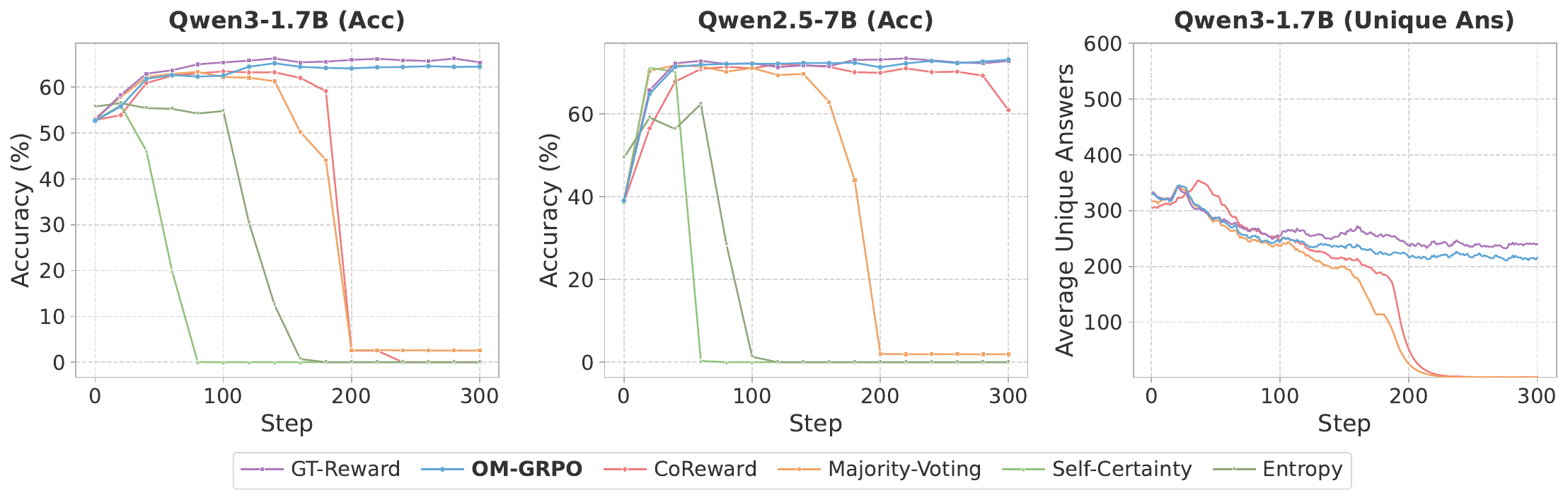}
    \caption{ Training dynamics of different method on MATH5000 validation set.
Left and middle: validation accuracy over training steps.
Right: average number of unique final answers per batch during training.
}
    \label{fig:training_collapse}
\end{figure*}

As large language models (LLMs) evolve from pattern matching toward complex reasoning~\cite{jaech2024openai,guo2025deepseekr1,qwen2025techreport}, reinforcement learning with verifiable rewards (RLVR) has emerged as a central paradigm in post-training.
RLVR improves reasoning by optimizing outcome-based rewards reflecting final-answer correctness, which are typically computed using curated ground-truth answers~\cite{shao2024deepseekmath,yu2025dapo}.
Despite these promising advances, this reliance on high-quality human annotations makes reward construction costly and difficult to scale with training data~\citep{ouyang2022rlhf,shao2024deepseekmath,yue2025doesreinforcementlearningreally}.

Recent studies explore label-free approaches that derive rewards directly from model outputs~\citep{prabhudesai2025confidence,zhao2025absolutezero}.
A representative strategy is majority-voting-based self-rewarding~\citep{shafayat2025selftrain,zhang2025coreward}: the model samples multiple rollouts for the same prompt, extracts their final answers, and rewards trajectories whose answers agree with the group consensus.
However, this design introduces a fundamental issue.
The answer-level consensus is used not only to estimate reward, but also to drive token-level policy optimization.
Consequently, the model can improve reward by directly sharpening high-frequency answer tokens, without improving the reasoning that leads to them.
This shortcut makes voting-based label-free RLVR particularly vulnerable to reward hacking and mode collapse.

To empirically examine this issue, we evaluate this class of self-rewarding methods across multiple LLM backbones.
As shown in Figure~\ref{fig:training_collapse}, these methods often exhibit early performance improvements followed by a sharp degradation as training progresses.
To further diagnose this behavior, we track the diversity of extracted final answers throughout training.
The right panel of Figure~\ref{fig:training_collapse} shows that the performance collapse coincides with a rapid reduction in answer diversity, with the model eventually converging to the same final answer across entire batches and even across different batches.
Overall, these results indicate that the observed collapse arises from over-optimization of answer tokens.

In this work, we propose Outcome-Masked Group Relative Policy Optimization (OM-GRPO),
a simple and effective framework that improves reasoning by optimizing reasoning trajectories quality rather than answer-level agreement.
Specifically, OM-GRPO uses a soft reward (based on answer frequency within the group) and masks the answer span during gradient updates, ensuring that learning signals are confined to the reasoning trajectory.
Under this design, reward improvements cannot be achieved by directly sharpening or repeating answer tokens, but must arise from improved reasoning.
As a result, OM-GRPO exhibits stable, non-collapsing training dynamics across backbones (see Figure~\ref{fig:training_collapse} and~\ref{fig:training_more}).
Notably, OM-GRPO also achieves performance comparable to GT-Reward across backbones (see Figure~\ref{fig:training_collapse}).
We attribute this gain to the soft reward, which provides an implicit contrastive signal by assigning higher credit to answers that receive broader support within the group.

To amplify contrastive gains, we further introduce Contrast-Augmented Reward (CAR) to better capture the relative quality of reasoning trajectories.
CAR explicitly constructs trajectory-level pairwise comparisons within each group by prompting the model to compare pairs of reasoning traces from the same input and produce only a short final answer to the original question.
This yields a large number of additional outcomes, enabling more reliable estimation of soft rewards for the original trajectories without re-sampling long reasoning trajectories.
Intuitively, consider two trajectories $(y_1, z_1)$ and $(y_2, z_2)$ for the same input, where $y_1$ represents higher-quality reasoning.
Through pairwise comparison, if the model favors $y_1$ and generates $z_1$, then $z_1$ appears more frequently within the group, thereby strengthening the soft reward assigned to $y_1$.
By expanding the answer pool via low-cost pairwise comparisons, this comparison-based signal yields more reliable reward estimates, resulting in more stable optimization toward higher-quality reasoning trajectories without regenerating full reasoning processes.
Experiments on a diverse suite of reasoning benchmarks across three LLM backbones demonstrate that OM-GRPO consistently outperforms prior label-free RLVR baselines and achieves performance comparable to supervised GT-Reward training.
Crucially, this stability extends to Test-Time Training, where OM-GRPO prevents optimization collapse and outperforms the majority-voting baseline by 4.24 points.

\noindent \textbf{Contributions.}
\textbf{(1)} We identify an overlooked failure mode in label-free RLVR, where answer-token shortcuts cause reward hacking and training collapse.
\textbf{(2)} We propose OM-GRPO, a label-free RLVR framework that mitigates reward hacking via answer-span gradient masking.
\textbf{(3)} We further introduce CAR, a low-cost strategy for refining reward estimation through pairwise comparisons.
\textbf{(4)} Across multiple backbones and benchmarks, OM-GRPO achieves stable training and competitive or superior performance.

\section{Related Work}

\noindent \textbf{LLM Reasoning.}
LLMs have achieved remarkable progress across diverse tasks~\citep{dubey2024llama3,qwen2025techreport}, yet their reasoning outputs often remain fluent but logically unreliable~\citep{ouyang2022rlhf,Rafailov2023dpo,llama2}.
Prior work improves reasoning through prompting strategies such as chain-of-thought (CoT) and self-consistency~\citep{wei2022cot,wang2023selfconsistency}, which encourage intermediate reasoning and aggregate multiple sampled solutions.
More recently, RLVR has emerged as an effective post-training paradigm for reasoning tasks~\citep{shao2024deepseekmath,guo2025deepseekr1,lambert2025tulu3pushingfrontiers,hu2025openreason,kimik15,yang2025qwen3}, where rewards are computed from answer correctness using GT solutions or programmatic verifiers.
While RLVR substantially improves reasoning performance, its reliance on curated supervision limits scalability.

\noindent \textbf{Label-Free RLVR.}
To improve scalability, recent work explores label-free RLVR, which derives training signals without relying on gold answers.
Existing approaches include agreement-based self-rewarding~\citep{shafayat2025selftrain,zhang2025coreward}, uncertainty-based signals such as entropy or confidence~\citep{prabhudesai2025confidence,agarwal2025entropy,zhao2025l2r_no_external}, and external feedback from LLM judges or symbolic verification~\citep{pang2024sirlc,lee2024rlaif,su2025llmasajudge,zhao2025absolutezero}.
Among them, voting-based self-rewarding is particularly common, rewarding trajectories whose answers match the group consensus~\citep{shafayat2025selftrain,zhang2025coreward}.
However, when this answer-level signal is used both for reward estimation and token-level optimization, the policy can exploit a shortcut by reinforcing answer tokens rather than improving reasoning, leading to reward hacking and mode collapse.
Recent methods attempt to stabilize training through regularization~\cite{yu2025restrainspuriousvotessignals} or multi-view distillation~\citep{zhang2025coreward}, but the underlying shortcut remains difficult to eliminate.

\section{Method: Outcome-Masked GRPO}
\begin{figure*}[th]
    \centering
    \includegraphics[width=\textwidth, trim={0 3.8cm 0 0.05cm}, clip]{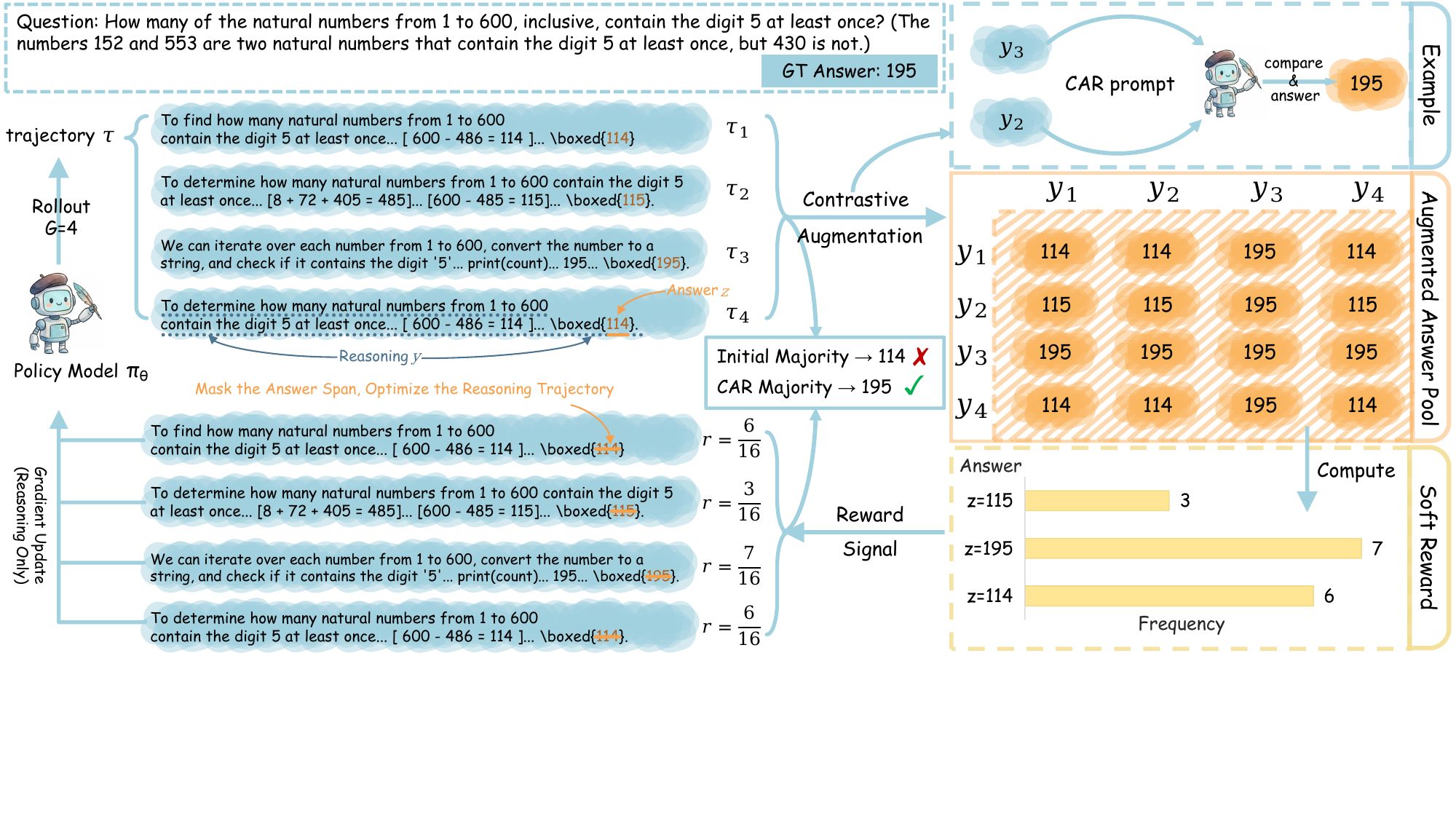}
    \caption{Overview of OM-GRPO with outcome masking updates and contrast-augmented reward estimation.}
    \label{fig:method}
\end{figure*}

\label{sec:method}

We propose Outcome-Masked Group Relative Policy Optimization (OM-GRPO), a label-free RLVR algorithm that optimizes reasoning trajectories while preventing answer-level shortcut learning.
For each input $x$, OM-GRPO samples a group of $G$ trajectories $\{\tau_i\}_{i=1}^G$ and updates the policy using a group-relative objective with masked gradients:
\begin{equation}
\begin{aligned}
\mathcal{J}_{\text{OM-GRPO}}(\theta)
&= \mathbb{E}_{x \sim \mathcal{D},\, \{\tau_i\}_{i=1}^G \sim \pi_{\theta_{\text{old}}}(\cdot \mid x)} \\
& \Bigg[
\frac{1}{G}\sum_{i=1}^G
\sum_{t=1}^{|\tau_i|} m_{i,t} \,
\min\!\Big(
\rho_{i,t}(\theta)\,\hat{A}_i,\,\\
& \mathrm{clip}(\rho_{i,t}(\theta),\,1-\epsilon,\,1+\epsilon)\,\hat{A}_i
\Big)\\
& \;-\;\beta\,\mathrm{KL}\!\big(\pi_\theta \,\|\, \pi_{\text{ref}}\big)
\Bigg]
\end{aligned}
\label{eq:grpo}
\end{equation}
where $\rho_{i,t} = \frac{\pi_\theta(o_{i,t} \mid o_{i,<t})}{\pi_{\theta_{\text{old}}}(o_{i,t} \mid o_{i,<t})}$ is the importance sampling ratio.
The advantage term $\hat{A}_i$ is computed by standardizing the rewards within each group to reduce variance: $\hat{A}_i = \frac{r_i - \mathrm{mean}(\mathbf{r})}{\mathrm{std}(\mathbf{r}) + \delta}$.
Crucially, $m_{i,t}$ is a binary mask set to $0$ for tokens in the answer span $z_i$ and $1$ otherwise.
We next introduce the key components of OM-GRPO.

\subsection{Reward Formulation}
\label{sec:reward}

\noindent \textbf{Soft Reward.}
Given an input question $x$, the policy $\pi_\theta$ samples a trajectory $\tau=(y,z)$, where $y$ denotes the chain-of-thought (CoT) reasoning and $z$ denotes the span of answer tokens inside \texttt{\textbackslash boxed\{\}}, excluding the formatting tokens.
In the absence of gold labels, a common approach is to evaluate correctness via MV~\citep{wang2023selfconsistency}, which assigns a hard reward based on whether an answer matches the majority outcome within a group of sampled trajectories.
However, this binary signal is sparse and unstable, motivating a soft reward based on group-level answer frequency.
Given a group of $G$ sampled trajectories, we define the soft reward for trajectory $\tau_i$ as the probability mass assigned to its answer $z_i$ within the group of sampled answers:
\begin{equation}
r_i^{q} = \Pr\!\left(z = z_i \mid z \in (z_1,\dots,z_G)\right)
\end{equation}
Unlike hard MV, this frequency-based signal retains relative comparisons among candidate answers instead of enforcing a binary consensus.

\noindent \textbf{Format Reward.}
To ensure the model produces parseable outputs, we incorporate a format reward $r_i^{f}$. This is a binary indicator that equals 1 if the response contains a valid extractable \texttt{\textbackslash boxed\{$\cdot$\}} span, and 0 otherwise.

\noindent \textbf{Final Reward.}
The final reward for $\tau_i$ combines the soft and format rewards: $r_i = r_i^{q} + r_i^{f}$.

\subsection{Outcome-Masked Update}
\label{sec:masking}
As discussed in Section~\ref{sec:intro}, existing label-free RLVR methods suffer from a critical failure mode: the policy tends to exploit a trivial shortcut by directly increasing the probability of high-frequency answer tokens, regardless of the input.
This leads to mode collapse, where the model memorizes answers rather than learning to reason\footnote{Theoretical proof provided in Appendix~\ref{sec:mv_collapse}.}.
To address this, we introduce the Outcome-Masked Update (OMU), which blocks this shortcut by preventing gradient backpropagation through the answer span $z$. As a result, gradient updates are applied only to the reasoning chain $y$ and the format tokens, encouraging the policy to increase the likelihood of correct answers through better reasoning\footnote{See theoretical proof in Appendix~\ref{sec:answer_masking}.}.

Formally, given an input problem $x$, the policy $\pi_\theta$ samples a group of $G$ trajectories $\{\tau_i\}_{i=1}^G$ from the old policy $\pi_{\theta_{old}}$. Each trajectory $\tau_i = (y_i, z_i)$ consists of a reasoning chain $y_i$ and a final answer span $z_i$.
We define a binary mask $\mathbf{m}_i$ with $m_{i,t}=0$ on answer-span tokens $z_i$ and $m_{i,t}=1$ otherwise.
Intuitively, although rewards are computed from answer-level statistics, masking prevents direct optimization of answer tokens and forces learning to improve the reasoning trajectory instead.

\subsection{Contrast-Augmented Reward}
\label{sec:PAR}

While increasing the number of sampled trajectories $G$ improves the reliability of voting-based rewards, it incurs prohibitive training cost for long CoT reasoning~\cite{wang2023selfconsistency}.
We therefore propose Contrast-Augmented Reward (CAR), an efficient alternative that strengthens the relative signal of soft rewards through pairwise comparison without additional rollouts.

\noindent \textbf{Augmented Answer Pool.}
We first construct an augmented answer pool $\mathcal{Z}_{\text{aug}}$ via pairwise comparison.
For each pair of trajectories $(\tau_i, \tau_j)$ within the group, we construct a contrastive prompt (Figure~\ref{fig:par_prompt}) that presents the two candidate reasoning traces and asks the model to produce a final answer for the original question.
Crucially, the model generates only a short answer $\hat{z}_{ij}$, rather than re-generating a full reasoning trajectory, resulting in negligible additional cost.
By enumerating all pairwise combinations with $i \neq j$, this procedure expands the augmented answer pool from $G$ to $O(G^2)$ while reusing existing trajectories.
Formally, the augmented answer pool is defined as:
\begin{equation}
\mathcal{Z}_{\text{aug}} = \{z_i\}_{i=1}^G \;\cup\; \{\hat{z}_{ij}\}_{i \neq j}
\end{equation}

\noindent \textbf{Soft Reward Estimation.}
We then use the augmented answer pool $\mathcal{Z}_{\text{aug}}$ to estimate the soft reward of each original trajectory $\tau_i$.
Specifically, the soft reward $r_i^{q}$ is defined as the probability mass assigned to its answer $z_i$ within the augmented pool:
\begin{equation}
    r_i^{q} = \Pr\!\left(z = z_i \mid z \in \mathcal{Z}_{\text{aug}}\right).
\end{equation}
Importantly, the pairwise-generated answers $\hat{z}_{ij}$ serve only as validators for reward estimation and are not treated as additional training trajectories in policy-gradient updates.

This contrastive mechanism yields two key benefits.
First, answers arising from fragile reasoning or lucky guessing are unlikely to be consistently reproduced across pairwise comparisons, causing their proportion in $\mathcal{Z}_{\text{aug}}$ to be diluted.
Second, through pairwise comparison, robust reasoning traces are repeatedly favored over weaker alternatives and assigned higher rewards, which in turn reinforce the learning of better reasoning trajectories.

\section{Experiments}
\subsection{Setting}
\label{subsec:setting}
\noindent \textbf{Datasets and Evaluation Metrics.}
We train on 7,500 problems from the MATH training split~\citep{hendrycks2021math} and validate on the MATH validation set (MATH5000).
We evaluate on a diverse suite of benchmarks spanning mathematical reasoning, code generation, and general capabilities, including AIME, MATH, GSM8K, AMC, LiveCodeBench, CRUX, IFEval, and MMLU-Pro.
Evaluation protocols and metric definitions are provided in Appendix~\ref{appendix:eval_details}.

\begin{table*}[!t]
    \centering
    \renewcommand{\arraystretch}{1.05}
    \resizebox{\textwidth}{!}{
    \begin{tabular}{l|rrrrrccccc}
    \toprule[1.6pt]
        \multirow{2}*{\textbf{Methods}} & \multicolumn{5}{c}{\textbf{Mathematics}} & \multicolumn{2}{c}{\textbf{Code}} & \textbf{Instruction} & \textbf{Multi-Task} & \multirow{2}*{\textbf{Average}} \\ \cmidrule(lr){2-6} \cmidrule(lr){7-8} \cmidrule(lr){9-9} \cmidrule(lr){10-10}

        ~ & \textbf{AIME24} & \textbf{AIME25} & \textbf{MATH-500} & \textbf{GSM8K} & \textbf{AMC} & \textbf{LiveCode} & \textbf{CRUX} & \textbf{IFEval} & \textbf{MMLU-Pro} &  ~ \\
        \midrule
            \multicolumn{11}{c}{\textit{\textbf{Qwen3-1.7B-Base}}} \\
        \midrule

        Before RL & $3.12_{\, \pm 1.8}$ & $1.25_{\, \pm 1.1}$ & $46.9_{\, \pm 2.9}$ & $64.20_{\, \pm 1.8}$ & $20.78_{\, \pm 2.5}$ & $4.59_{\, \pm 0.7}$ & $7.52_{\, \pm 1.1}$ & $33.66_{\, \pm 1.5}$ & $33.60_{\, \pm 0.7}$ & 23.96 \\
        GT-Reward & $6.88_{\, \pm 2.1}$ & $5.21_{\, \pm 2.0}$ & $68.3_{\, \pm 2.3}$ & $82.01_{\, \pm 1.1}$ & $30.87_{\, \pm 3.3}$ & $14.84_{\, \pm 0.5}$ & $33.73_{\, \pm 0.6}$ & $38.70_{\, \pm 1.7}$ & $39.43_{\, \pm 0.7}$ & 35.55 \\
        \midrule
        Self-Certainty & \cellcolor{lightpurple2}{$3.54_{\, \pm 1.6}$} & \cellcolor{lightpurple4}{$3.12_{\, \pm 1.8}$} & \cellcolor{lightpurple1}{$57.0_{\, \pm 2.7}$} & \cellcolor{lightpurple1}{$73.71_{\, \pm 1.4}$} & \cellcolor{lightpurple1}{$27.41_{\, \pm 2.3}$} & \cellcolor{lightblue1}{$10.37_{\, \pm 0.9}$} & \cellcolor{lightblue1}{$18.73_{\, \pm 1.3}$} & \cellcolor{lightblue5}{$38.58_{\, \pm 1.7}$} & \cellcolor{lightblue1}{$35.28_{\, \pm 0.7}$} & \cellcolor{tealgreen2}{29.75} \\
        Entropy  & \cellcolor{lightpurple5}{$6.67_{\, \pm 1.6}$} & \cellcolor{lightpurple7}{$4.17_{\, \pm 1.5}$} & \cellcolor{lightpurple4}{$65.0_{\, \pm 1.7}$} & \cellcolor{lightpurple2}{$79.45_{\, \pm 1.6}$} & \cellcolor{lightpurple4}{$31.17_{\, \pm 3.0}$} & \cellcolor{lightblue2}{$12.64_{\, \pm 1.0}$} & \cellcolor{lightblue2}{$30.38_{\, \pm 0.5}$} & \cellcolor{lightblue1}{$35.00_{\, \pm 1.6}$} & \cellcolor{lightblue5}{$37.01_{\, \pm 0.7}$} & \cellcolor{tealgreen4}{33.50} \\
        Majority~Voting & \cellcolor{lightpurple1}{$2.08_{\, \pm 1.3}$} & \cellcolor{lightpurple1}{$2.08_{\, \pm 1.3}$} & \cellcolor{lightpurple2}{$59.6_{\, \pm 0.9}$} & \cellcolor{lightpurple5}{$81.08_{\, \pm 0.8}$} & \cellcolor{lightpurple2}{$29.82_{\, \pm 1.7}$} & \cellcolor{lightblue5}{$14.29_{\, \pm 0.5}$} & \cellcolor{lightblue4}{$32.00_{\, \pm 2.3}$} & \cellcolor{lightblue2}{$37.16_{\, \pm 1.6}$} & \cellcolor{lightblue2}{$35.68_{\, \pm 0.7}$} & \cellcolor{tealgreen3}{32.64} \\
        CoReward & \cellcolor{lightpurple7}{$6.88_{\, \pm 1.5}$} & \cellcolor{lightpurple4}{$3.12_{\, \pm 1.5}$} & \cellcolor{lightpurple5}{$65.4_{\, \pm 1.2}$} & \cellcolor{lightpurple4}{$81.07_{\, \pm 0.8}$} & \cellcolor{lightpurple5}{$33.13_{\, \pm 3.0}$} & \cellcolor{lightblue4}{$14.27_{\, \pm 0.5}$} & \cellcolor{lightblue5}{$34.10_{\, \pm 1.1}$} & \cellcolor{lightblue4}{$37.28_{\, \pm 1.6}$} & \cellcolor{lightblue7}{$37.39_{\, \pm 0.7}$} & \cellcolor{tealgreen5}{34.74} \\
        OM-GRPO & \cellcolor{lightpurple4}{$5.62_{\, \pm 1.8}$} & \cellcolor{lightpurple2}{$2.50_{\, \pm 1.4}$} & \cellcolor{lightpurple7}{$67.3_{\, \pm 0.8}$} & \cellcolor{lightpurple7}{$82.09_{\, \pm 0.6}$} & \cellcolor{lightpurple7}{$34.49_{\, \pm 3.8}$} & \cellcolor{lightblue7}{$14.60_{\, \pm 0.8}$} & \cellcolor{lightblue7}{$34.50_{\, \pm 1.4}$} & \cellcolor{lightblue7}{$39.53_{\, \pm 1.6}$} & \cellcolor{lightblue4}{$36.08_{\, \pm 0.7}$} & \cellcolor{tealgreen6}{35.19} \\
        \specialrule{1pt}{0.4ex}{0.4ex}
            \multicolumn{11}{c}{\textit{\textbf{Llama-3.2-3B-Instruct}}} \\
        \midrule

        Before RL & $6.88_{\, \pm 2.1}$ & $0.42_{\, \pm 0.6}$ & $44.4_{\, \pm 2.1}$ & $68.78_{\, \pm 1.2}$ & $17.47_{\, \pm 3.3}$ & $3.00_{\, \pm 0.4}$ & $24.70_{\, \pm 1.2}$ & $54.41_{\, \pm 1.8}$ & $32.01_{\, \pm 0.7}$ & 28.01 \\
        GT-Reward & $10.62_{\, \pm 2.1}$ & $0.21_{\, \pm 0.4}$ & $47.5_{\, \pm 1.4}$ & $78.60_{\, \pm 0.8}$ & $22.89_{\, \pm 1.9}$ & $7.05_{\, \pm 0.6}$ & $32.48_{\, \pm 0.8}$ & $50.16_{\, \pm 1.8}$ & $34.26_{\, \pm 0.7}$ & 31.53 \\
        \midrule
        Self-Certainty & \cellcolor{lightpurple1}{$2.71_{\, \pm 1.3}$} & \cellcolor{lightpurple7}{$0.62_{\, \pm 0.7}$} & \cellcolor{lightpurple2}{$42.4_{\, \pm 0.9}$} & \cellcolor{lightpurple2}{$73.52_{\, \pm 1.2}$} & \cellcolor{lightpurple2}{$17.62_{\, \pm 1.8}$} & \cellcolor{lightblue4}{$5.57_{\, \pm 0.9}$} & \cellcolor{lightblue1}{$24.55_{\, \pm 1.2}$} & \cellcolor{lightblue5}{$54.11_{\, \pm 1.8}$} & \cellcolor{lightblue7}{$34.42_{\, \pm 0.7}$} & \cellcolor{tealgreen3}{28.39} \\
        Entropy  & \cellcolor{lightpurple2}{$3.54_{\, \pm 1.5}$} & \cellcolor{lightpurple4}{$0.21_{\, \pm 0.4}$} & \cellcolor{lightpurple1}{$40.5_{\, \pm 2.9}$} & \cellcolor{lightpurple1}{$68.57_{\, \pm 1.7}$} & \cellcolor{lightpurple1}{$17.17_{\, \pm 2.6}$} & \cellcolor{lightblue2}{$5.38_{\, \pm 0.8}$} & \cellcolor{lightblue2}{$26.30_{\, \pm 1.7}$} & \cellcolor{lightblue7}{$54.24_{\, \pm 1.8}$} & \cellcolor{lightblue2}{$33.54_{\, \pm 0.7}$} & \cellcolor{tealgreen1}{27.72} \\
        Majority~Voting & \cellcolor{lightpurple5}{$8.33_{\, \pm 1.3}$} & \cellcolor{lightpurple1}{$0.00_{\, \pm 0.0}$} & \cellcolor{lightpurple5}{$48.4_{\, \pm 3.0}$} & \cellcolor{lightpurple5}{$78.64_{\, \pm 1.4}$} & \cellcolor{lightpurple5}{$21.69_{\, \pm 2.8}$} & \cellcolor{lightblue7}{$8.11_{\, \pm 0.5}$} & \cellcolor{lightblue4}{$30.83_{\, \pm 1.4}$} & \cellcolor{lightblue1}{$48.36_{\, \pm 1.8}$} & \cellcolor{lightblue4}{$33.94_{\, \pm 0.7}$} & \cellcolor{tealgreen5}{30.92} \\
        CoReward & \cellcolor{lightpurple4}{$7.71_{\, \pm 1.4}$} & \cellcolor{lightpurple1}{$0.00_{\, \pm 0.0}$} & \cellcolor{lightpurple4}{$47.2_{\, \pm 1.9}$} & \cellcolor{lightpurple7}{$80.00_{\, \pm 1.7}$} & \cellcolor{lightpurple4}{$19.58_{\, \pm 3.6}$} & \cellcolor{lightblue4}{$5.57_{\, \pm 0.4}$} & \cellcolor{lightblue5}{$30.87_{\, \pm 1.2}$} & \cellcolor{lightblue4}{$50.46_{\, \pm 1.8}$} & \cellcolor{lightblue1}{$32.95_{\, \pm 0.7}$} & \cellcolor{tealgreen4}{30.48} \\
        OM-GRPO & \cellcolor{lightpurple7}{$8.96_{\, \pm 1.7}$} & \cellcolor{lightpurple4}{$0.21_{\, \pm 0.4}$} & \cellcolor{lightpurple7}{$50.0_{\, \pm 1.8}$} & \cellcolor{lightpurple4}{$78.58_{\, \pm 1.0}$} & \cellcolor{lightpurple7}{$23.34_{\, \pm 2.1}$} & \cellcolor{lightblue1}{$4.47_{\, \pm 0.5}$} & \cellcolor{lightblue7}{$32.30_{\, \pm 1.1}$} & \cellcolor{lightblue2}{$49.01_{\, \pm 1.8}$} & \cellcolor{lightblue5}{$33.99_{\, \pm 0.7}$} & \cellcolor{tealgreen6}{31.21} \\
        \specialrule{1pt}{0.4ex}{0.4ex}
            \multicolumn{11}{c}{\textit{\textbf{Qwen2.5-7B}}} \\
        \midrule

        Before RL & $4.38_{\, \pm 1.4}$ & $1.67_{\, \pm 1.4}$ & $49.2_{\, \pm 2.6}$ & $71.00_{\, \pm 1.9}$ & $22.44_{\, \pm 3.7}$ & $4.57_{\, \pm 0.5}$ & $27.38_{\, \pm 2.3}$ & $40.61_{\, \pm 1.7}$ & $44.03_{\, \pm 0.7}$ & 29.48 \\
        GT-Reward & $18.33_{\, \pm 2.3}$ & $11.25_{\, \pm 1.7}$ & $75.1_{\, \pm 1.4}$ & $90.83_{\, \pm 0.5}$ & $46.54_{\, \pm 2.5}$ & $12.78_{\, \pm 1.6}$ & $53.67_{\, \pm 1.0}$ & $41.50_{\, \pm 1.7}$ & $45.09_{\, \pm 0.8}$ & 43.90 \\
        \midrule
        Self-Certainty & \cellcolor{lightpurple1}{$9.79_{\, \pm 2.4}$} & \cellcolor{lightpurple7}{$8.96_{\, \pm 1.8}$} & \cellcolor{lightpurple2}{$72.5_{\, \pm 2.8}$} & \cellcolor{lightpurple2}{$88.21_{\, \pm 0.5}$} & \cellcolor{lightpurple3}{$40.51_{\, \pm 2.1}$} & \cellcolor{lightblue2}{$11.87_{\, \pm 0.9}$} & \cellcolor{lightblue4}{$53.18_{\, \pm 1.4}$} & \cellcolor{lightblue1}{$39.66_{\, \pm 1.6}$} & \cellcolor{lightblue7}{$43.89_{\, \pm 0.8}$} & \cellcolor{tealgreen2}{40.95} \\
        Entropy  & \cellcolor{lightpurple2}{$11.04_{\, \pm 2.3}$} & \cellcolor{lightpurple4}{$8.33_{\, \pm 2.2}$} & \cellcolor{lightpurple4}{$73.2_{\, \pm 1.7}$} & \cellcolor{lightpurple1}{$87.85_{\, \pm 0.6}$} & \cellcolor{lightpurple3}{$40.51_{\, \pm 3.1}$} & \cellcolor{lightblue5}{$15.68_{\, \pm 0.9}$} & \cellcolor{lightblue1}{$51.08_{\, \pm 1.3}$} & \cellcolor{lightblue2}{$40.25_{\, \pm 1.7}$} & \cellcolor{lightblue2}{$42.61_{\, \pm 0.7}$} & \cellcolor{tealgreen3}{41.17} \\
        Majority~Voting & \cellcolor{lightpurple4}{$11.25_{\, \pm 2.1}$} & \cellcolor{lightpurple2}{$4.17_{\, \pm 1.2}$} & \cellcolor{lightpurple1}{$71.0_{\, \pm 0.7}$} & \cellcolor{lightpurple5}{$90.52_{\, \pm 0.7}$} & \cellcolor{lightpurple1}{$38.70_{\, \pm 1.3}$} & \cellcolor{lightblue7}{$18.37_{\, \pm 0.9}$} & \cellcolor{lightblue2}{$52.20_{\, \pm 1.1}$} & \cellcolor{lightblue5}{$42.72_{\, \pm 1.7}$} & \cellcolor{lightblue5}{$43.83_{\, \pm 0.8}$} & \cellcolor{tealgreen5}{41.42} \\
        CoReward & \cellcolor{lightpurple4}{$11.25_{\, \pm 1.8}$} & \cellcolor{lightpurple1}{$3.96_{\, \pm 1.9}$} & \cellcolor{lightpurple5}{$73.8_{\, \pm 1.1}$} & \cellcolor{lightpurple4}{$90.16_{\, \pm 0.9}$} & \cellcolor{lightpurple5}{$40.81_{\, \pm 1.8}$} & \cellcolor{lightblue1}{$10.37_{\, \pm 0.5}$} & \cellcolor{lightblue7}{$55.08_{\, \pm 2.4}$} & \cellcolor{lightblue7}{$43.75_{\, \pm 1.7}$} & \cellcolor{lightblue1}{$42.08_{\, \pm 0.7}$} & \cellcolor{tealgreen4}{41.25} \\
        OM-GRPO & \cellcolor{lightpurple7}{$14.17_{\, \pm 2.3}$} & \cellcolor{lightpurple5}{$8.54_{\, \pm 1.9}$} & \cellcolor{lightpurple7}{$75.0_{\, \pm 1.9}$} & \cellcolor{lightpurple7}{$90.54_{\, \pm 0.7}$} & \cellcolor{lightpurple7}{$46.23_{\, \pm 1.9}$} & \cellcolor{lightblue4}{$14.90_{\, \pm 0.8}$} & \cellcolor{lightblue5}{$54.65_{\, \pm 1.8}$} & \cellcolor{lightblue4}{$42.06_{\, \pm 1.7}$} & \cellcolor{lightblue4}{$43.17_{\, \pm 0.8}$} & \cellcolor{tealgreen6}{43.25} \\

        \bottomrule[1.6pt]
    \end{tabular}}
    \caption{RL results (\%) across diverse benchmarks; darker colors denote better results within each model group.}
    \label{tab:main_math}
\end{table*}

\noindent \textbf{Backbone Models.}
Our experiments are conducted on a set of open-source backbone language models, including Qwen2.5-7B~\citep{qwen2025techreport}, Qwen3-1.7B-Base~\citep{yang2025qwen3}, and Llama-3.2-3B-Instruct~\citep{dubey2024llama3}.
These backbones cover multiple architectures and model scales, enabling us to evaluate the robustness of our method across different backbones.
Implementation details are provided in Appendix \ref{appendix:implementation_details}.

\noindent \textbf{Baselines.}
We compare OM-GRPO against both supervised and label-free RLVR baselines. The supervised oracle uses ground-truth rewards (\textbf{GT-Reward}). Label-free baselines construct self-rewards from model outputs, including \textbf{Majority Voting (MV)} \citep{shafayat2025selftrain}, confidence-based rewards (\textbf{Self-Certainty}) \citep{zhao2025l2r_no_external}, entropy minimization (\textbf{Entropy}) \citep{prabhudesai2025confidence}, and a stronger variant \textbf{CoReward} \citep{zhang2025coreward} that aggregates consensus across paraphrased input variants.
For baselines with late-stage collapse, we report the best checkpoint based on validation set.
Additional details are provided in Appendix \ref{appendix:baseline_details}.

\begin{table*}[!t]
    \centering
    \renewcommand{\arraystretch}{1.2}
    \resizebox{0.9\textwidth}{!}{
    \begin{tabular}{l|ccccccccccc}
    \toprule[1.6pt]
            \multirow{2}{*}{\textbf{Methods}}
        & \multicolumn{2}{c}{\textbf{AIME24}}
        & \multicolumn{2}{c}{\textbf{AIME25}}
        & \multicolumn{2}{c}{\textbf{MATH500}}
        & \multicolumn{2}{c}{\textbf{GSM8K}}
        & \multicolumn{2}{c}{\textbf{AMC}}
        & \multirow{2}{*}{\textbf{Avg.}} \\
        \cmidrule(lr){2-3} \cmidrule(lr){4-5} \cmidrule(lr){6-7} \cmidrule(lr){8-9} \cmidrule(lr){10-11}
         & Avg@16 & Pass@16 & Avg@16 & Pass@16 & Avg@4 & Pass@4 & Avg@4 & Pass@4 & Avg@8 & Pass@8 & \\

        \midrule
            \multicolumn{12}{c}{\textit{\textbf{Qwen3-1.7B-Base}}} \\
        \midrule

        Before TTRL & \cellcolor{lightpurple4}{$3.12_{\, \pm 1.8}$} & \cellcolor{lightpurple4}{$16.67_{\, \pm 1.8}$} & \cellcolor{lightpurple1}{$1.25_{\, \pm 1.1}$} & \cellcolor{lightpurple7}{$13.33_{\, \pm 1.1}$} & \cellcolor{lightpurple1}{$46.9_{\, \pm 2.9}$} & \cellcolor{lightpurple2}{$72.0_{\, \pm 2.9}$} & \cellcolor{lightpurple1}{$64.20_{\, \pm 1.8}$} & \cellcolor{lightpurple1}{$89.31_{\, \pm 1.8}$} & \cellcolor{lightpurple1}{$20.78_{\, \pm 2.5}$} & \cellcolor{lightpurple4}{$56.63_{\, \pm 2.5}$} & \cellcolor{lightpurple1}{38.42} \\
        Majority~Voting & \cellcolor{lightpurple1}{$2.29_{\, \pm 1.2}$} & \cellcolor{lightpurple1}{$13.33_{\, \pm 1.2}$} & \cellcolor{lightpurple4}{$2.71_{\, \pm 1.5}$} & \cellcolor{lightpurple1}{$10.00_{\, \pm 1.5}$} & \cellcolor{lightpurple4}{$67.1_{\, \pm 1.3}$} & \cellcolor{lightpurple2}{$72.0_{\, \pm 1.3}$} & \cellcolor{lightpurple7}{$89.16_{\, \pm 0.2}$} & \cellcolor{lightpurple4}{$91.89_{\, \pm 0.2}$} & \cellcolor{lightpurple4}{$28.61_{\, \pm 1.7}$} & \cellcolor{lightpurple1}{$49.40_{\, \pm 1.7}$} & \cellcolor{lightpurple4}{42.65} \\
        OM-GRPO & \cellcolor{lightpurple7}{$9.58_{\, \pm 1.6}$} & \cellcolor{lightpurple7}{$20.00_{\, \pm 1.6}$} & \cellcolor{lightpurple7}{$4.58_{\, \pm 1.4}$} & \cellcolor{lightpurple7}{$13.33_{\, \pm 1.4}$} & \cellcolor{lightpurple7}{$69.1_{\, \pm 1.9}$} & \cellcolor{lightpurple7}{$78.2_{\, \pm 1.9}$} & \cellcolor{lightpurple4}{$87.81_{\, \pm 0.7}$} & \cellcolor{lightpurple7}{$92.12_{\, \pm 0.7}$} & \cellcolor{lightpurple7}{$36.30_{\, \pm 2.0}$} & \cellcolor{lightpurple7}{$57.83_{\, \pm 2.0}$} & \cellcolor{lightpurple7}{46.89} \\
        \specialrule{1pt}{0.4ex}{0.4ex}
            \multicolumn{12}{c}{\textit{\textbf{Llama-3.2-3B-Instruct}}} \\
        \midrule

        Before TTRL & \cellcolor{lightpurple4}{$6.88_{\, \pm 2.1}$} & \cellcolor{lightpurple7}{$23.33_{\, \pm 2.1}$} & \cellcolor{lightpurple4}{$0.42_{\, \pm 0.6}$} & \cellcolor{lightpurple7}{$6.67_{\, \pm 0.6}$} & \cellcolor{lightpurple1}{$44.4_{\, \pm 2.1}$} & \cellcolor{lightpurple7}{$65.0_{\, \pm 2.1}$} & \cellcolor{lightpurple1}{$68.78_{\, \pm 1.2}$} & \cellcolor{lightpurple1}{$86.88_{\, \pm 1.2}$} & \cellcolor{lightpurple1}{$17.47_{\, \pm 3.3}$} & \cellcolor{lightpurple7}{$49.40_{\, \pm 3.3}$} & \cellcolor{lightpurple4}{36.92} \\
        Majority~Voting & \cellcolor{lightpurple1}{$6.67_{\, \pm 0.0}$} & \cellcolor{lightpurple1}{$6.67_{\, \pm 0.0}$} & \cellcolor{lightpurple1}{$0.00_{\, \pm 0.0}$} & \cellcolor{lightpurple1}{$0.00_{\, \pm 0.0}$} & \cellcolor{lightpurple7}{$52.5_{\, \pm 0.5}$} & \cellcolor{lightpurple1}{$57.8_{\, \pm 0.5}$} & \cellcolor{lightpurple4}{$85.80_{\, \pm 1.1}$} & \cellcolor{lightpurple4}{$89.01_{\, \pm 1.1}$} & \cellcolor{lightpurple4}{$21.08_{\, \pm 1.9}$} & \cellcolor{lightpurple1}{$31.33_{\, \pm 1.9}$} & \cellcolor{lightpurple1}{35.09} \\
        OM-GRPO & \cellcolor{lightpurple7}{$8.75_{\, \pm 1.9}$} & \cellcolor{lightpurple7}{$23.33_{\, \pm 1.9}$} & \cellcolor{lightpurple7}{$0.83_{\, \pm 0.8}$} & \cellcolor{lightpurple4}{$3.33_{\, \pm 0.8}$} & \cellcolor{lightpurple7}{$52.5_{\, \pm 1.5}$} & \cellcolor{lightpurple4}{$61.2_{\, \pm 1.5}$} & \cellcolor{lightpurple7}{$87.26_{\, \pm 0.3}$} & \cellcolor{lightpurple7}{$91.05_{\, \pm 0.3}$} & \cellcolor{lightpurple7}{$21.54_{\, \pm 1.1}$} & \cellcolor{lightpurple4}{$40.96_{\, \pm 1.1}$} & \cellcolor{lightpurple7}{39.08} \\
        \specialrule{1pt}{0.4ex}{0.4ex}
            \multicolumn{12}{c}{\textit{\textbf{Qwen2.5-7B}}} \\
        \midrule

        Before TTRL & \cellcolor{lightpurple1}{$4.38_{\, \pm 1.4}$} & \cellcolor{lightpurple7}{$30.00_{\, \pm 1.4}$} & \cellcolor{lightpurple1}{$1.67_{\, \pm 1.4}$} & \cellcolor{lightpurple7}{$20.00_{\, \pm 1.4}$} & \cellcolor{lightpurple1}{$49.2_{\, \pm 2.6}$} & \cellcolor{lightpurple1}{$76.2_{\, \pm 2.6}$} & \cellcolor{lightpurple1}{$71.00_{\, \pm 1.9}$} & \cellcolor{lightpurple1}{$93.78_{\, \pm 1.9}$} & \cellcolor{lightpurple1}{$22.44_{\, \pm 3.7}$} & \cellcolor{lightpurple7}{$63.86_{\, \pm 3.7}$} & \cellcolor{lightpurple1}{43.25} \\
        Majority~Voting & \cellcolor{lightpurple7}{$15.62_{\, \pm 1.2}$} & \cellcolor{lightpurple2}{$20.00_{\, \pm 1.2}$} & \cellcolor{lightpurple4}{$6.25_{\, \pm 1.1}$} & \cellcolor{lightpurple1}{$10.00_{\, \pm 1.1}$} & \cellcolor{lightpurple7}{$78.4_{\, \pm 1.4}$} & \cellcolor{lightpurple4}{$81.2_{\, \pm 1.4}$} & \cellcolor{lightpurple7}{$93.69_{\, \pm 0.4}$} & \cellcolor{lightpurple4}{$94.31_{\, \pm 0.4}$} & \cellcolor{lightpurple4}{$40.66_{\, \pm 2.7}$} & \cellcolor{lightpurple1}{$55.42_{\, \pm 2.7}$} & \cellcolor{lightpurple4}{49.55} \\
        OM-GRPO & \cellcolor{lightpurple4}{$12.71_{\, \pm 1.3}$} & \cellcolor{lightpurple2}{$20.00_{\, \pm 1.3}$} & \cellcolor{lightpurple7}{$7.50_{\, \pm 1.6}$} & \cellcolor{lightpurple7}{$20.00_{\, \pm 1.6}$} & \cellcolor{lightpurple4}{$78.3_{\, \pm 1.3}$} & \cellcolor{lightpurple7}{$83.4_{\, \pm 1.3}$} & \cellcolor{lightpurple4}{$93.54_{\, \pm 0.3}$} & \cellcolor{lightpurple7}{$94.62_{\, \pm 0.3}$} & \cellcolor{lightpurple7}{$42.02_{\, \pm 2.7}$} & \cellcolor{lightpurple4}{$57.83_{\, \pm 2.7}$} & \cellcolor{lightpurple7}{50.99} \\

        \bottomrule[1.6pt]
    \end{tabular}
    }
    \caption{Test-Time Training results (\%) on math reasoning benchmarks.}
    \label{tab:main_ttt}
\end{table*}

\subsection{Main Results}

\begin{table*}[!t]
    \centering
    \resizebox{0.9\textwidth}{!}{
    \begin{tabular}{l|rrcccrcccc}
    \toprule[1.6pt]
        \multirow{2}*{\textbf{Methods}} & \multicolumn{5}{c}{\textbf{Mathematics}} & \multicolumn{2}{c}{\textbf{Code}} & \textbf{Instruction} & \textbf{Multi-Task} & \multirow{2}*{\textbf{Average}} \\ \cmidrule(lr){2-6} \cmidrule(lr){7-8} \cmidrule(lr){9-9} \cmidrule(lr){10-10}

        ~ & \textbf{AIME24} & \textbf{AIME25} & \textbf{MATH-500} & \textbf{GSM8K} & \textbf{AMC} & \textbf{LiveCode} & \textbf{CRUX} & \textbf{IFEval} & \textbf{MMLU-Pro} &  ~ \\
        \midrule
            \multicolumn{11}{c}{\textit{\textbf{Qwen3-1.7B-Base}}} \\
        \midrule

        GT-Reward & $6.88_{\, \pm 2.1}$ & $5.21_{\, \pm 2.0}$ & $68.3_{\, \pm 2.3}$ & $82.01_{\, \pm 1.1}$ & $30.87_{\, \pm 3.3}$ & $14.84_{\, \pm 0.5}$ & $33.73_{\, \pm 0.6}$ & $38.70_{\, \pm 1.6}$ & $39.43_{\, \pm 0.7}$ & 35.55 \\
        \midrule
        OM-GRPO & \cellcolor{lightpurple3}{$5.62_{\, \pm 1.8}$} & \cellcolor{lightpurple3}{$2.50_{\, \pm 1.4}$} & \cellcolor{lightpurple5}{$67.3_{\, \pm 0.8}$} & \cellcolor{lightpurple5}{$82.09_{\, \pm 0.6}$} & \cellcolor{lightpurple7}{$34.49_{\, \pm 3.8}$} & \cellcolor{lightblue4}{$14.60_{\, \pm 0.8}$} & \cellcolor{lightblue5}{$34.50_{\, \pm 1.4}$} & \cellcolor{lightblue7}{$39.53_{\, \pm 1.7}$} & \cellcolor{lightblue4}{$36.08_{\, \pm 0.7}$} & \cellcolor{tealgreen6}{35.19} \\
        \quad w/o Soft Reward & \cellcolor{lightpurple2}{$4.58_{\, \pm 1.8}$} & \cellcolor{lightpurple4}{$2.92_{\, \pm 1.4}$} & \cellcolor{lightpurple7}{$67.5_{\, \pm 2.7}$} & \cellcolor{lightpurple7}{$83.28_{\, \pm 0.6}$} & \cellcolor{lightpurple3}{$32.83_{\, \pm 2.3}$} & \cellcolor{lightblue5}{$14.65_{\, \pm 0.9}$} & \cellcolor{lightblue3}{$33.62_{\, \pm 1.3}$} & \cellcolor{lightblue1}{$33.82_{\, \pm 1.6}$} & \cellcolor{lightblue7}{$38.60_{\, \pm 0.7}$} & \cellcolor{tealgreen5}{34.64} \\
        \quad w/o CAR & \cellcolor{lightpurple7}{$7.08_{\, \pm 1.9}$} & \cellcolor{lightpurple1}{$1.88_{\, \pm 1.4}$} & \cellcolor{lightpurple3}{$65.0_{\, \pm 1.8}$} & \cellcolor{lightpurple2}{$80.88_{\, \pm 0.9}$} & \cellcolor{lightpurple2}{$31.02_{\, \pm 1.7}$} & \cellcolor{lightblue1}{$14.09_{\, \pm 0.6}$} & \cellcolor{lightblue7}{$35.00_{\, \pm 0.8}$} & \cellcolor{lightblue4}{$36.65_{\, \pm 1.6}$} & \cellcolor{lightblue5}{$37.44_{\, \pm 0.7}$} & \cellcolor{tealgreen3}{34.34} \\
        \quad w/o OMU & \cellcolor{lightpurple5}{$6.67_{\, \pm 1.9}$} & \cellcolor{lightpurple7}{$4.17_{\, \pm 1.9}$} & \cellcolor{lightpurple2}{$64.2_{\, \pm 1.1}$} & \cellcolor{lightpurple4}{$81.60_{\, \pm 0.9}$} & \cellcolor{lightpurple5}{$34.34_{\, \pm 3.4}$} & \cellcolor{lightblue2}{$14.14_{\, \pm 0.6}$} & \cellcolor{lightblue4}{$33.88_{\, \pm 2.0}$} & \cellcolor{lightblue2}{$33.99_{\, \pm 1.5}$} & \cellcolor{lightblue1}{$34.78_{\, \pm 0.7}$} & \cellcolor{tealgreen2}{34.20} \\
        \quad w/o CAR and OMU & \cellcolor{lightpurple1}{$2.08_{\, \pm 1.3}$} & \cellcolor{lightpurple2}{$2.08_{\, \pm 1.3}$} & \cellcolor{lightpurple1}{$59.6_{\, \pm 0.9}$} & \cellcolor{lightpurple3}{$81.08_{\, \pm 0.8}$} & \cellcolor{lightpurple1}{$29.82_{\, \pm 1.7}$} & \cellcolor{lightblue3}{$14.29_{\, \pm 0.5}$} & \cellcolor{lightblue1}{$32.00_{\, \pm 2.3}$} & \cellcolor{lightblue5}{$37.16_{\, \pm 1.6}$} & \cellcolor{lightblue2}{$35.68_{\, \pm 0.7}$} & \cellcolor{tealgreen1}{32.64} \\
        \specialrule{1pt}{0.4ex}{0.4ex}
            \multicolumn{11}{c}{\textit{\textbf{Llama-3.2-3B-Instruct}}} \\
        \midrule

        GT-Reward & $10.62_{\, \pm 2.1}$ & $0.21_{\, \pm 0.4}$ & $47.5_{\, \pm 1.4}$ & $78.60_{\, \pm 0.8}$ & $22.89_{\, \pm 1.9}$ & $7.05_{\, \pm 0.6}$ & $32.48_{\, \pm 0.8}$ & $50.16_{\, \pm 1.8}$ & $34.26_{\, \pm 0.7}$ & 31.53 \\
        \midrule
        OM-GRPO & \cellcolor{lightpurple4}{$8.96_{\, \pm 1.7}$} & \cellcolor{lightpurple2}{$0.21_{\, \pm 0.4}$} & \cellcolor{lightpurple7}{$50.0_{\, \pm 1.8}$} & \cellcolor{lightpurple2}{$78.58_{\, \pm 1.0}$} & \cellcolor{lightpurple7}{$23.34_{\, \pm 2.1}$} & \cellcolor{lightblue1}{$4.47_{\, \pm 0.5}$} & \cellcolor{lightblue7}{$32.30_{\, \pm 1.1}$} & \cellcolor{lightblue1}{$49.01_{\, \pm 1.8}$} & \cellcolor{lightblue3}{$33.99_{\, \pm 0.7}$} & \cellcolor{tealgreen6}{31.21} \\
        \quad w/o Soft Reward & \cellcolor{lightpurple3}{$8.75_{\, \pm 1.6}$} & \cellcolor{lightpurple2}{$0.21_{\, \pm 0.4}$} & \cellcolor{lightpurple5}{$49.6_{\, \pm 0.9}$} & \cellcolor{lightpurple5}{$78.87_{\, \pm 0.6}$} & \cellcolor{lightpurple5}{$21.84_{\, \pm 3.1}$} & \cellcolor{lightblue2}{$4.78_{\, \pm 0.6}$} & \cellcolor{lightblue2}{$29.90_{\, \pm 1.4}$} & \cellcolor{lightblue4}{$50.28_{\, \pm 1.7}$} & \cellcolor{lightblue7}{$35.10_{\, \pm 0.7}$} & \cellcolor{tealgreen5}{31.04} \\
        \quad w/o CAR & \cellcolor{lightpurple7}{$11.46_{\, \pm 1.3}$} & \cellcolor{lightpurple2}{$0.21_{\, \pm 0.4}$} & \cellcolor{lightpurple3}{$48.2_{\, \pm 1.6}$} & \cellcolor{lightpurple3}{$78.64_{\, \pm 1.9}$} & \cellcolor{lightpurple2}{$19.73_{\, \pm 2.7}$} & \cellcolor{lightblue3}{$4.80_{\, \pm 0.6}$} & \cellcolor{lightblue1}{$28.82_{\, \pm 1.9}$} & \cellcolor{lightblue7}{$51.90_{\, \pm 1.8}$} & \cellcolor{lightblue4}{$34.05_{\, \pm 0.7}$} & \cellcolor{tealgreen3}{30.87} \\
        \quad w/o OMU & \cellcolor{lightpurple1}{$8.12_{\, \pm 1.1}$} & \cellcolor{lightpurple7}{$1.67_{\, \pm 1.1}$} & \cellcolor{lightpurple1}{$45.1_{\, \pm 1.6}$} & \cellcolor{lightpurple4}{$78.83_{\, \pm 1.6}$} & \cellcolor{lightpurple3}{$20.93_{\, \pm 2.1}$} & \cellcolor{lightblue7}{$7.91_{\, \pm 0.9}$} & \cellcolor{lightblue3}{$30.32_{\, \pm 0.8}$} & \cellcolor{lightblue3}{$49.63_{\, \pm 1.7}$} & \cellcolor{lightblue1}{$33.62_{\, \pm 0.7}$} & \cellcolor{tealgreen2}{30.68} \\
        \quad w/o CAR and OMU & \cellcolor{lightpurple5}{$9.79_{\, \pm 2.2}$} & \cellcolor{lightpurple5}{$0.83_{\, \pm 0.8}$} & \cellcolor{lightpurple2}{$46.4_{\, \pm 2.9}$} & \cellcolor{lightpurple7}{$79.68_{\, \pm 0.4}$} & \cellcolor{lightpurple1}{$18.83_{\, \pm 2.8}$} & \cellcolor{lightblue4}{$5.54_{\, \pm 0.4}$} & \cellcolor{lightblue4}{$30.95_{\, \pm 1.3}$} & \cellcolor{lightblue2}{$49.32_{\, \pm 1.7}$} & \cellcolor{lightblue5}{$34.32_{\, \pm 0.7}$} & \cellcolor{tealgreen1}{30.63} \\
        \specialrule{1pt}{0.4ex}{0.4ex}
            \multicolumn{11}{c}{\textit{\textbf{Qwen2.5-7B}}} \\
        \midrule

        GT-Reward & $18.33_{\, \pm 2.3}$ & $11.25_{\, \pm 1.7}$ & $75.1_{\, \pm 1.4}$ & $90.83_{\, \pm 0.5}$ & $46.54_{\, \pm 2.5}$ & $12.78_{\, \pm 1.6}$ & $53.67_{\, \pm 1.0}$ & $41.50_{\, \pm 1.7}$ & $45.09_{\, \pm 0.8}$ & 43.90 \\
        \midrule
        OM-GRPO & \cellcolor{lightpurple7}{$14.17_{\, \pm 2.3}$} & \cellcolor{lightpurple7}{$8.54_{\, \pm 1.9}$} & \cellcolor{lightpurple4}{$75.0_{\, \pm 1.9}$} & \cellcolor{lightpurple7}{$90.54_{\, \pm 0.7}$} & \cellcolor{lightpurple7}{$46.23_{\, \pm 1.9}$} & \cellcolor{lightblue1}{$14.90_{\, \pm 0.8}$} & \cellcolor{lightblue5}{$54.65_{\, \pm 1.8}$} & \cellcolor{lightblue3}{$42.06_{\, \pm 1.7}$} & \cellcolor{lightblue2}{$43.17_{\, \pm 0.7}$} & \cellcolor{tealgreen6}{43.25} \\
        \quad w/o Soft Reward & \cellcolor{lightpurple4}{$12.92_{\, \pm 1.8}$} & \cellcolor{lightpurple1}{$3.96_{\, \pm 1.7}$} & \cellcolor{lightpurple5}{$75.7_{\, \pm 2.6}$} & \cellcolor{lightpurple2}{$89.18_{\, \pm 0.2}$} & \cellcolor{lightpurple5}{$44.88_{\, \pm 2.3}$} & \cellcolor{lightblue3}{$15.51_{\, \pm 1.6}$} & \cellcolor{lightblue7}{$55.38_{\, \pm 0.7}$} & \cellcolor{lightblue2}{$41.85_{\, \pm 1.7}$} & \cellcolor{lightblue7}{$45.91_{\, \pm 0.7}$} & \cellcolor{tealgreen5}{42.81} \\
        \quad w/o CAR & \cellcolor{lightpurple5}{$13.96_{\, \pm 1.9}$} & \cellcolor{lightpurple5}{$8.33_{\, \pm 1.7}$} & \cellcolor{lightpurple7}{$76.4_{\, \pm 2.9}$} & \cellcolor{lightpurple3}{$89.31_{\, \pm 0.6}$} & \cellcolor{lightpurple4}{$43.37_{\, \pm 1.7}$} & \cellcolor{lightblue2}{$15.18_{\, \pm 1.2}$} & \cellcolor{lightblue4}{$54.45_{\, \pm 1.1}$} & \cellcolor{lightblue1}{$39.44_{\, \pm 1.7}$} & \cellcolor{lightblue1}{$41.18_{\, \pm 0.7}$} & \cellcolor{tealgreen3}{42.40} \\
        \quad w/o OMU & \cellcolor{lightpurple1}{$10.62_{\, \pm 1.7}$} & \cellcolor{lightpurple3}{$5.42_{\, \pm 2.3}$} & \cellcolor{lightpurple2}{$73.8_{\, \pm 2.2}$} & \cellcolor{lightpurple5}{$90.52_{\, \pm 1.8}$} & \cellcolor{lightpurple2}{$43.22_{\, \pm 2.7}$} & \cellcolor{lightblue5}{$15.87_{\, \pm 0.4}$} & \cellcolor{lightblue1}{$52.00_{\, \pm 1.7}$} & \cellcolor{lightblue7}{$43.53_{\, \pm 1.7}$} & \cellcolor{lightblue4}{$44.34_{\, \pm 0.7}$} & \cellcolor{tealgreen2}{42.15} \\
        \quad w/o CAR and OMU & \cellcolor{lightpurple3}{$11.25_{\, \pm 2.1}$} & \cellcolor{lightpurple2}{$4.17_{\, \pm 1.2}$} & \cellcolor{lightpurple1}{$71.0_{\, \pm 0.7}$} & \cellcolor{lightpurple5}{$90.52_{\, \pm 0.7}$} & \cellcolor{lightpurple1}{$38.70_{\, \pm 1.3}$} & \cellcolor{lightblue7}{$18.37_{\, \pm 0.9}$} & \cellcolor{lightblue2}{$52.20_{\, \pm 1.1}$} & \cellcolor{lightblue4}{$42.72_{\, \pm 1.7}$} & \cellcolor{lightblue3}{$43.83_{\, \pm 0.7}$} & \cellcolor{tealgreen1}{41.42} \\

        \bottomrule[1.6pt]
    \end{tabular}}
    \caption{Ablation results for the proposed OM-GRPO on reasoning benchmarks.}
    \label{tab:ablation}
\end{table*}

\noindent \textbf{Overall Performance.}
We report the main results in Table~\ref{tab:main_math} under the avg@k setting\footnote{Results are reported as mean $\pm$ 95\% confidence intervals.} across three backbones and a diverse suite of reasoning benchmarks.
Overall, OM-GRPO achieves the best average performance among all label-free methods across the three backbones, while remaining competitive with GT-Reward, indicating that OM-GRPO provides a stronger learning signal than answer-centric self-rewarding alone.

\noindent \textbf{Robustness in Mathematical Reasoning.}
As shown in Table~\ref{tab:main_math}, OM-GRPO achieves consistent improvements on both the in-distribution MATH500 benchmark and the out-of-distribution GSM8K and AMC datasets, with particularly pronounced gains on the more challenging AMC benchmark. This joint improvement suggests that the method is not merely relying on math-specific solution templates, which typically transfer poorly across differing math problem styles.
Instead, the results align with the core design of OM-GRPO, which explicitly restricts optimization to the reasoning portion of each trajectory.
By masking the answer span from gradient updates, the policy cannot increase reward by directly manipulating answer tokens, and is therefore pushed to refine the intermediate reasoning that supports the final answer.
Consequently, OM-GRPO promotes more robust reasoning under distribution shift in mathematics and narrows the gap to supervised GT-Reward.
Pass@k results are reported in Appendix~\ref{appendix:main_passk}, further suggesting that OM-GRPO supports stronger exploration under multiple sampling.
Additional low-contamination benchmark results are provided in Appendix~\ref{appendix:low_contamination}.

\noindent \textbf{Generalization Beyond Math.}
OM-GRPO also transfers well beyond mathematics.
Across non-math benchmarks, it remains among the top-performing label-free methods on all three backbones and in several cases matches or exceeds GT-Reward.
For example, under the Qwen2.5-7B backbone, OM-GRPO achieves a higher average score than GT-Reward on non-math tasks (38.71 vs.\ 38.26).
We also observe particularly stable gains on code-related benchmarks: on both Qwen3-1.7B-Base and Llama-3.2-3B, OM-GRPO attains stronger CRUX performance than other label-free baselines.
Overall, these results show that OM-GRPO generalizes well across diverse reasoning domains without access to ground-truth rewards.

\subsection{Test-Time Training Performance}

We evaluate OM-GRPO in a Test-Time Training (TTRL) setting using both Avg@k and Pass@k metrics. As shown in Table \ref{tab:main_ttt}, OM-GRPO consistently outperforms the MV baseline on both metrics across backbones.
Notably, on Llama-3.2-3B-Instruct, MV leads to a decrease in Avg@k (from 36.92 to 35.09), indicating that the policy converges toward incorrect consensus. In contrast, OM-GRPO prevents this collapse, achieving robust gains in both average response quality and success rates (avg score improves to 39.08). These results suggest that masking answer tokens provides a stable learning signal for online adaptation, effectively avoiding the risks of reward hacking.

\begin{figure*}[t]
    \centering
    \includegraphics[width=0.935\textwidth]{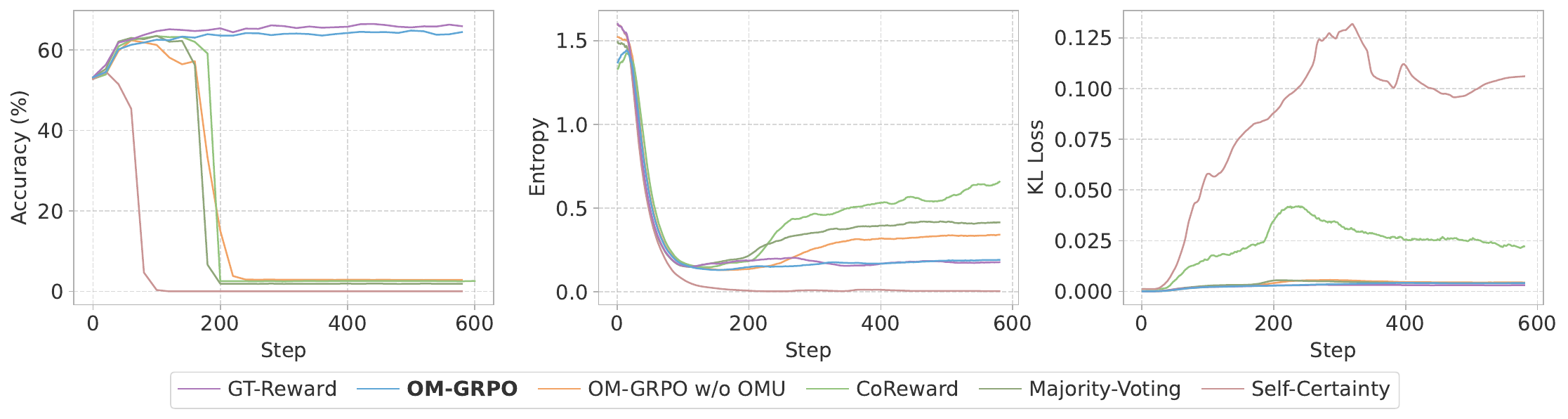}
    \caption{Training dynamics on the MATH5000 validation set, with the KL loss scaled by the KL coefficient.}
    \label{fig:training_more}
\end{figure*}

\subsection{Ablation Study}

\paragraph{Component Ablations.}
Table~\ref{tab:ablation} presents ablation studies on the key components of OM-GRPO.
Across all backbones, the full OM-GRPO configuration consistently achieves the best average performance.
Removing OMU leads to the most significant degradation across models, with average drops of 0.99 on Qwen3-1.7B-Base and 1.10 on Qwen2.5-7B, and reintroduces training collapse (see Figure~\ref{fig:training_more}).
Furthermore, comparing w/o CAR against the vanilla MV baseline shows that OMU alone provides substantial gains (e.g., $+$1.70 on Qwen3-1.7B-Base).
This indicates that excluding the answer span from gradient updates is critical for preventing the policy from exploiting answer-token shortcuts rather than learning the reasoning process.
Similarly, replacing the soft proportional reward with a hard majority signal reduces performance, suggesting that soft rewards provide a smoother and more informative optimization signal.
Overall, these results show that OM-GRPO's gains arise from the synergy between outcome masking and robust reward calibration, enabling stable and effective learning without ground-truth labels.

\paragraph{Soft Answer-Token Downweighting.}
We further examine whether full answer-span masking can be replaced by softer answer-token gradient weighting. Figure~\ref{fig:answer_token_downweighting} compares OM-GRPO ($w{=}0.0$) with variants where answer tokens receive $0.25$, $0.50$, or $0.75$ of the outcome-reward gradient, as well as a format-only variant in which answer tokens receive only the format reward.
Small outcome-reward weights can remain stable: $w{=}0.25$ closely tracks OM-GRPO, and the format-only variant achieves comparable stability.
However, increasing this weight makes training unstable again: $w{=}0.50$ gradually degrades and $w{=}0.75$ rapidly collapses.
Thus, the risk lies in directly applying answer-level pseudo rewards to answer tokens; full masking provides a conservative way to block this feedback loop.

\begin{figure}[t]
    \centering
    \includegraphics[width=0.92\linewidth]{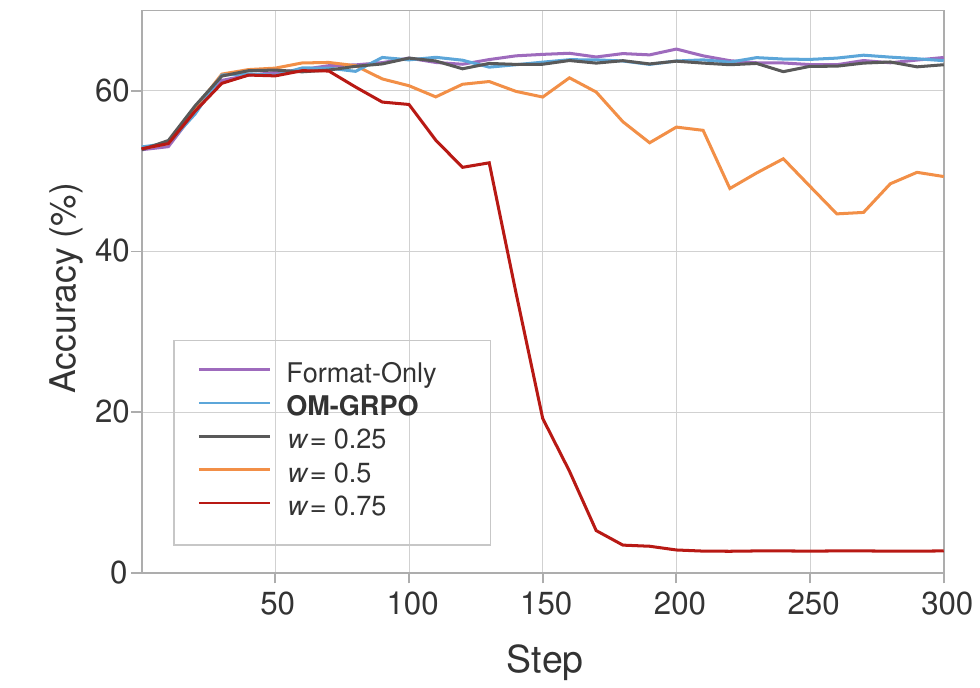}
    \caption{MATH5000 validation accuracy under softer answer-token update variants, where $w$ denotes the outcome-reward gradient weight applied to answer tokens and format-only applies only the format reward to answer tokens.}
    \label{fig:answer_token_downweighting}
\end{figure}

\subsection{Why OMU Prevents Collapse}
\label{sec:analysis}

\noindent \textbf{Training Stability and Collapse Prevention.}
Figure~\ref{fig:training_collapse} shows that OM-GRPO remains stable throughout training, avoiding the collapse behaviors observed in prior label-free baselines.
To further stress-test this stability, we extend training to 10 epochs in Figure~\ref{fig:training_more}.
Under prolonged optimization, different baselines exhibit distinct degeneration patterns:
Self-Certainty collapses into over-confident repetition, as evidenced by spiked KL divergence and vanishing entropy; CoReward drifts toward high-entropy uncertainty; and MV undergoes consensus collapse, degenerating into trivial, high-agreement responses with near-zero accuracy and low KL divergence.
In contrast, OM-GRPO exhibits remarkable stability, maintaining a performance trajectory close to the ground-truth oracle.
Importantly, removing the outcome mask (w/o OMU) immediately reproduces the consensus collapse observed under MV, confirming that masking answer tokens from gradient updates is essential for preventing shortcut exploitation.
The answer-diversity analysis in Appendix~\ref{appendix:ans_diversity} also offers a complementary view of training stability, and rollout-level analysis in Appendix~\ref{appendix:rollout_bias_diagnostics} shows that MV's failure is better explained by bias from the current policy than by independent random noise.
Appendix~\ref{appendix:prm_baseline_dynamics} further shows that PRM-based process reward baselines also collapse quickly in this label-free setting, despite requiring an additional reward model.
The strong-initialization analysis in Appendix~\ref{appendix:strong_base_initialization} shows that Qwen3-14B-Base mitigates immediate collapse but does not eliminate MV's late-stage degradation.

\begin{figure}[t]
    \centering
    \includegraphics[width=\linewidth]{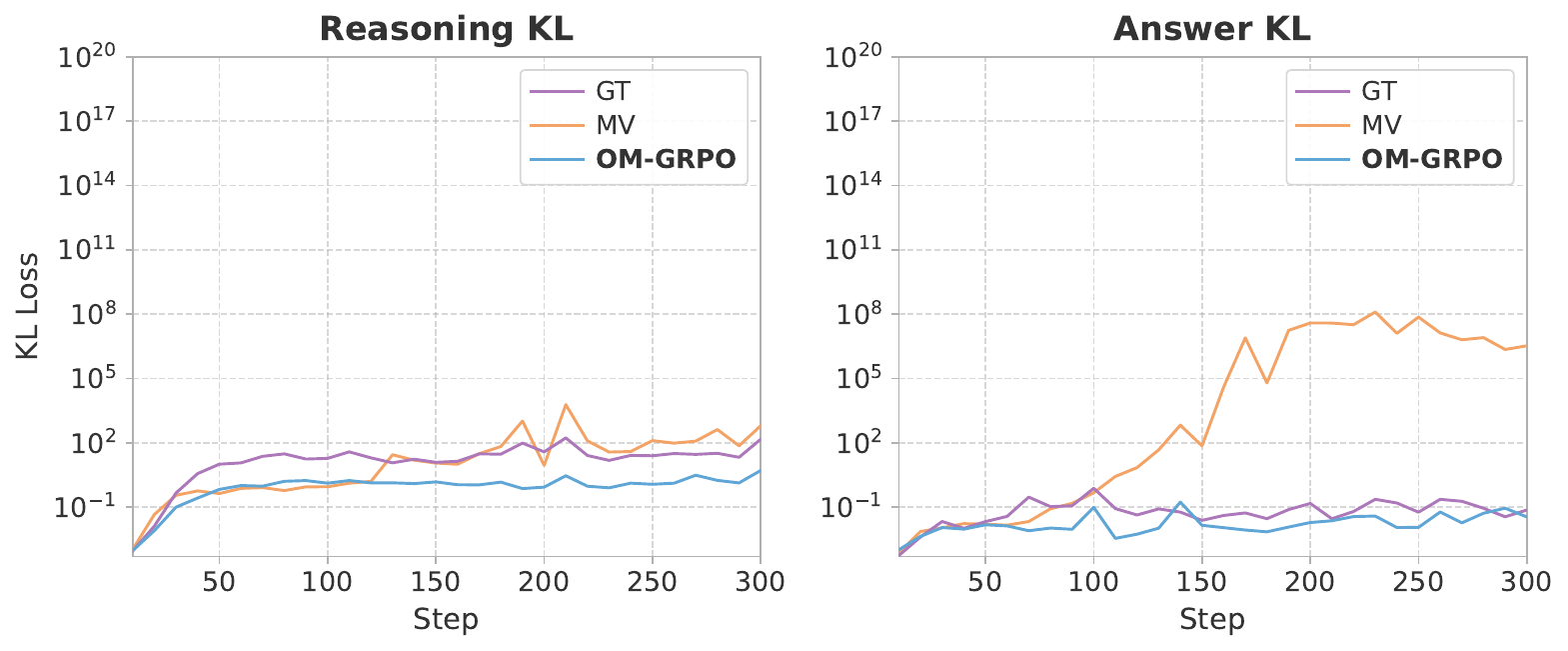}
    \caption{KL divergence of answer tokens and reasoning tokens during training for different methods.}
    \label{fig:compare_kl}
\end{figure}

\noindent \textbf{Answer Tokens Dominate Collapse.}
To understand why masking only the answer span is sufficient, we analyze the training dynamics of answer and reasoning tokens separately in Figure~\ref{fig:compare_kl}. Under the MV baseline, the KL divergence on answer tokens rises sharply during training, while the KL divergence on reasoning tokens grows much more slowly.
This asymmetry shows that the collapse is concentrated on the answer span rather than arising from uniform overfitting of the full trajectory.
The reason is that, in voting-based label-free RLVR, rewards are assigned at the answer level, making answer tokens the most direct shortcut for increasing reward.
Although they occupy only a small fraction of the output sequence, they dominate the collapse dynamics.
OM-GRPO blocks this shortcut by masking gradients on the answer span.
As shown in Figure~\ref{fig:compare_kl}, answer-token KL then remains low and stable, while reasoning-token KL is also well controlled.
As a further check, the within-group reasoning-process consistency remains zero throughout training, indicating that collapse is not shifted from the answer span to the reasoning process.
Together, these results show that masking the answer span is sufficient to prevent collapse and preserve meaningful reasoning updates.

\subsection{How CAR Improves Reward Estimation}

\begin{figure*}[t]
    \centering
    \includegraphics[width=0.825\textwidth]{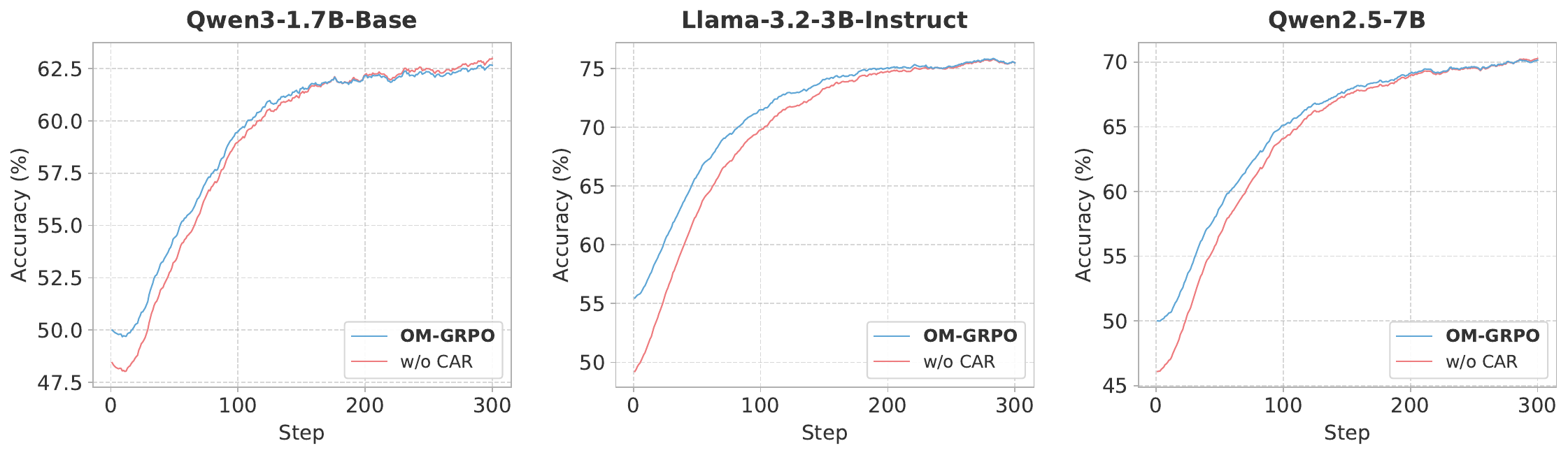}
    \caption{Impact of CAR on MATH5000 validation accuracy over training steps.}
    \label{fig:voting_acc}
\end{figure*}

\noindent \textbf{More Reliable Voting-Based Rewards.}
We next evaluate whether CAR improves the reliability of voting-based reward estimation.
To this end, we track the accuracy of the top-voted answer during training, measuring how often the consensus selected by voting matches the ground-truth answer.
As shown in Figure~\ref{fig:voting_acc}, incorporating CAR consistently improves the accuracy of the voting-selected consensus answer across all three backbones.
The gains are most pronounced in the early and middle stages of training, where vote-based rewards are particularly susceptible to noise under limited rollouts.
This indicates that pairwise augmentation effectively strengthens the empirical consensus signal: by introducing solution-level comparisons through low-cost pairwise completions, CAR expands the answer pool, mitigates spurious majorities, and increases the probability that the top-voted answer matches the ground-truth answer.

\begin{table}[t]
\centering
\resizebox{\linewidth}{!}{%
\begin{tabular}{lccccc}
\toprule[1.6pt]
\textbf{Method} & \textbf{MATH} & \textbf{Others} & \textbf{Overall}  & \textbf{Time/step (s)} & \textbf{Total Time} \\
\midrule
OM-GRPO (n=8) & $38.4_{\, \pm  1.68}$ & $31.18_{\, \pm  1.15 }$ & $35.19$ & $74.74$ & 6:13:41 \\
\ \ \ w/o CAR (n=8) & $37.9_{\, \pm  1.67 }$ & $29.99_{\, \pm  1.05 }$ & $34.38$ & $54.43$ & 4:32:10 \\
\ \ \ w/o CAR (n=16) & $37.9_{\, \pm  1.91 }$ & $30.64_{\, \pm  1.14 }$ & $34.66$ & $76.88$ & 6:24:23 \\
\ \ \ w/o CAR (n=64) & $38.2 _{\, \pm   1.52 }$ & $29.65_{\, \pm  1.33 }$ & $34.42$ & $272.23$ & 22:41:10 \\
\bottomrule[1.6pt]
\end{tabular}%
}
\caption{Comparison of OM-GRPO with CAR and direct sampling under different rollout budgets, reporting average performance, per-step wall-clock time, and total wall-clock time over 300 training steps.}

\label{tab:direct_rollout}
\end{table}

\noindent \textbf{CAR vs. Scaling Rollouts.}
We further compare CAR against the straightforward alternative of increasing the rollout budget.
As shown in Table~\ref{tab:direct_rollout}, OM-GRPO with CAR ($n{=}8$) achieves the best overall average performance while using comparable per-step and total training time to a compute-matched baseline that increases rollouts to $n{=}16$.
Notably, this improvement is obtained with comparable per-step time (74.74s vs. 76.88s) and slightly lower 300-step training time (6:13:41 vs. 6:24:23), indicating that CAR provides a more effective use of the same training budget by extracting a stronger training signal from limited rollouts.
We also evaluate a larger rollout setting $n=64$, which matches the effective answer pool size produced by contrast augmentation. Despite substantially higher cost (272.23s per step and 22:41:10 over 300 steps), directly sampling more rollouts does not improve performance and instead degrades the overall average. CAR reaches the same augmented answer-pool size in 6:13:41, using 3.64x less end-to-end training time than direct $n{=}64$ sampling.
This behavior suggests that simply scaling self-consistency amplifies spurious consensus: with more samples, agreement becomes easier to form even around incorrect answers, strengthening a misaligned learning signal.
This can be attributed to the over-strengthening of consensus-based proxies: when rewards are tied to agreement frequency, increasing the sample size can make the proxy more dominant even when it is not aligned with correctness.
In contrast, CAR introduces structured interaction between trajectories through pairwise comparison, yielding a calibrated relative estimate of which conclusions are robust under comparison. This interaction is absent in naive rollout scaling, where trajectories are independent and the reward relies solely on raw frequency.
Overall, the results show that CAR improves both accuracy and efficiency, and provides evidence that pairwise comparison can better capture relative trajectory quality than increasing the number of samples alone.

We provide more detailed mechanistic analyses of CAR in Appendix~\ref{appendix:car_analysis}.

\section{Conclusion}

In this work, we presented OM-GRPO, a label-free RLVR framework that mitigates the collapse of voting-based training by decoupling reward estimation from policy optimization.
By shifting learning away from answer tokens and toward reasoning trajectories, OM-GRPO improves both robustness and performance.
We further introduce CAR, a low-cost strategy that strengthens soft reward reliability through pairwise contrasts without additional rollouts.
Across diverse reasoning benchmarks and three LLM backbones, OM-GRPO consistently outperforms prior label-free methods and achieves performance comparable to supervised GT-Reward training with stable, non-collapsing dynamics.

\section*{Limitations}
While OM-GRPO is effective on the reasoning tasks studied in this work, several limitations remain.
First, our experiments cover multiple backbones and benchmarks, but are limited to open models up to the 7B scale, so behavior at substantially larger scales remains to be studied.
Second, although CAR is more efficient than increasing rollout counts, it still introduces additional inference overhead, leaving room for further optimization.
Finally, OM-GRPO reduces shortcut optimization on answer tokens, but it still relies on outcome-based reward signals rather than explicit supervision of reasoning quality. Incorporating richer process-level signals without reintroducing heavy annotation remains an interesting direction for future work.

\bibliography{custom,anthology}

\appendix

\section{Benchmark and Metric Details}
\label{appendix:eval_details}

This section summarizes the evaluation protocol for each benchmark in the main text.

\begin{itemize}
    \item \textbf{AIME24/25}~\citep{aime24,aime25}. We report \textbf{avg@16} accuracy, obtained by sampling 16 independent solutions per problem and averaging correctness. Evaluation is performed with \texttt{lighteval}.\footnote{\url{https://github.com/huggingface/lighteval}}

    \item \textbf{MATH500 and GSM8K}~\citep{lightman2024verify,cobbe2021trainverifier}. We report \textbf{avg@4} accuracy for both benchmarks, computed by averaging correctness over 4 independently sampled solutions. Evaluation is performed with \texttt{lighteval}.\footnote{\url{https://github.com/huggingface/lighteval}}

    \item \textbf{AMC}~\citep{li2024amc}. We report \textbf{avg@8} accuracy, i.e., averaged correctness over 8 sampled solutions per problem. We use the \texttt{ttrl} evaluation implementation.\footnote{\url{https://github.com/ruixin31/Spurious_Rewards/tree/main/code/ttrl}}

    \item \textbf{LiveCodeBench}~\citep{jain2025livecodebench}. We report \textbf{avg@5} accuracy, computed over 5 sampled solutions per instance, using the official LiveCodeBench evaluation library.\footnote{\url{https://github.com/LiveCodeBench/LiveCodeBench}}

    \item \textbf{CRUX}~\citep{gu2024cruxeval}. We report \textbf{avg@5} accuracy with evaluation conducted via the \texttt{ZeroEval} framework.\footnote{\url{https://github.com/WildEval/ZeroEval}}

    \item \textbf{IFEval and MMLU-Pro}~\citep{zhou2023ifeval,wang2024mmlupro}. We report \textbf{pass@1} accuracy for both benchmarks. Evaluation is run with \texttt{lm-evaluation-harness}.\footnote{\url{https://github.com/EleutherAI/lm-evaluation-harness}}
\end{itemize}

\begin{figure}
\centering

    \begin{exbox}{Prompt}
        \small\ttfamily

        <|im\_start|>system

        Let's think step by step and output the final answer within \textbackslash boxed\{\}.

        \vspace{1.2em}

        <|im\_start|>user

        \textcolor{blue}{\{Question\}}<|im\_end|>

        \vspace{1.2em}

        <|im\_start|>assistant

        Here are two possible solutions:

        \vspace{0.8em}

        [Solution 1]\\
        \textcolor{blue}{\{Solution\_1\}}

        \vspace{0.8em}

        [Solution 2]\\
        \textcolor{blue}{\{Solution\_2\}}

        \vspace{0.8em}

        Based on the reasoning in the solutions above, the correct final answer is \textbackslash boxed\{
    \end{exbox}

\caption{Prompt template for CAR.}
\label{fig:par_prompt}
\end{figure}

\section{Implementation Details.}
\label{appendix:implementation_details}
For each GRPO~\citep{shao2024deepseekmath} update, we sample a batch of 128 questions and generate $G{=}8$ rollouts per question. We set the maximum prompt length to 512 and maximum response length to 3{,}072. Training is run for 6 epochs.
We use the AdamW optimizer with a learning rate of $3\times10^{-6}$, $\beta_1=0.9$, $\beta_2=0.999$, and $\epsilon=10^{-8}$, together with a cosine learning-rate schedule and a warmup ratio of 0.1. We apply KL regularization with coefficient $\beta=0.005$ and adopt a clipping ratio of $\epsilon=0.2$. For generation, we sample with temperature 1.0 during training; at evaluation we use temperature 0.8 and top-$p{=}0.95$.
For the test-time training experiments, we follow the same hyperparameters and generation settings as above, except that training is run for 30 epochs.
To ensure fair and reproducible comparisons, we keep prompting consistent across methods by using the officially released chat-style templates for each model.
All methods are implemented in \texttt{verl}~\citep{sheng2024verl} and trained on 8$\times$ NVIDIA A100 GPUs.

\section{Baseline Details}
\label{appendix:baseline_details}

We provide detailed descriptions of all compared baselines.

\begin{itemize}

\item \textbf{Ground Truth (GT-Reward).}
This is a supervised oracle baseline that uses human-annotated answers as rewards. For each rollout, we extract its final answer span via $\mathrm{ans}(\cdot)$. The reward is binary: a rollout receives $1$ if $\mathrm{ans}(y_i)$ exactly matches the ground-truth answer and $0$ otherwise.

\item \textbf{Majority Voting (MV).}
This baseline uses self-consistency as a proxy for correctness~\citep{shafayat2025selftrain}. For each problem, we sample $G$ rollouts and extract answers $\{\mathrm{ans}(y_1),\ldots,\mathrm{ans}(y_G)\}$. Let $z_{\text{maj}}$ denote the most frequent answer in the group. Each rollout $y_i$ receives reward $1$ if $\mathrm{ans}(y_i)=z_{\text{maj}}$ and $0$ otherwise. This reward reinforces agreement within the group, regardless of whether the agreed answer is correct.

\item \textbf{Self-Certainty.}
This baseline uses the model's own confidence as a reward signal at the sampled trajectory~\citep{zhao2025l2r_no_external}.
Concretely, it assigns higher reward to rollouts that the policy itself assigns higher likelihood over the entire generated sequence.
Self-Certainty therefore encourages the policy to reinforce trajectories it considers more probable, without requiring any external labels.

\item \textbf{Entropy Minimization.}
This baseline encourages low-entropy predictions on the final answer span~\citep{prabhudesai2025confidence}. The reward is defined as the negative (or inverse) of the policy's token-level entropy over the extracted answer span, so that rollouts with more deterministic answer-token distributions receive higher reward.

\item \textbf{CoReward.}
CoReward encourages agreement under semantically equivalent input perturbations~\citep{zhang2025coreward}. For each training question, the method constructs paraphrased variants that preserve semantics. It then samples multiple rollouts for both $x$ and $\tilde{x}$, aggregates their extracted answers via majority voting to obtain pseudo-consensus labels, and uses these labels in a cross-over manner as rewards when optimizing the policy on both the original and paraphrased inputs.

\end{itemize}

\section{Theoretical Analysis}
\label{appendix:theory}

We present a theoretical analysis of label-free GRPO under majority-voting rewards.
We first show that majority voting induces an inherent global mode collapse through answer-level positive feedback.
We then demonstrate that this failure mode can be provably eliminated by applying the proposed Overcome-Masked Update (OMU), which blocks gradient propagation on the answer span and redirects credit assignment to the reasoning process.
Importantly, the two analyses are conducted under the same majority-voting reward; the only difference lies in whether OMU is applied.

\subsection{Global Mode Collapse in Label-Free GRPO with Majority Voting}
\label{sec:mv_collapse}
We begin by analyzing the optimization dynamics of label-free GRPO when majority voting (MV) is used as the sole correctness signal.
Under this setting, we show that answer-level positive feedback, combined with parameter sharing, inevitably drives the system toward a globally shared collapsed answer, independent of the input.
This analysis characterizes the failure mode of vanilla MV-based training
and serves as a reference point for the OMU introduced in the following subsection.

\subsubsection{Problem Formulation}

Let $\mathcal{X}$ denote the input space and $\mathcal{V}$ the vocabulary.
For each input $x \in \mathcal{X}$, the policy $\pi_\theta$ generates a trajectory
\[
\tau = (y, z),
\]
where $y \in \mathcal{V}^{T}$ is a chain-of-thought reasoning sequence and
$z \in \mathcal{V}^{*}$ denotes the final extracted answer span.
We focus on the induced distribution over answer outcomes $\pi_\theta(z \mid x)$.

\paragraph{Answer Logits.}
Although the answer $z$ may consist of multiple tokens,
majority voting operates on the discrete extracted answer string.
Accordingly, we model the answer distribution as categorical over possible outcomes.
At the answer position, logits are parameterized as
\begin{equation}
\ell_k(x) = \mathbf{w}_k^\top \mathbf{h}(x) + b_k,
\qquad k \in \mathcal{V},
\end{equation}
where $\mathbf{h}(x)$ is the hidden state at the answer position, and
$(\mathbf{w}_k, b_k)$ are globally shared parameters.
The answer distribution is given by the softmax
\begin{equation}
\pi_\theta(k \mid x)
=
\frac{\exp(\ell_k(x))}{\sum_{j \in \mathcal{V}} \exp(\ell_j(x))}.
\end{equation}

\paragraph{Group Sampling and Majority Reward.}
For each input $x$, GRPO samples a group of $G$ trajectories
\[
\{\tau_i\}_{i=1}^G \sim \pi_\theta(\cdot \mid x),
\qquad \tau_i = (y_i, z_i),
\]
and extracts the corresponding answer spans $\{z_i\}_{i=1}^G$.
Let
\begin{equation}
m = \mathrm{Mode}(\{z_i\}_{i=1}^G),
\end{equation}
with odd $G$ or a fixed tie-breaking rule.
The label-free majority-voting reward is
\begin{equation}
r_i = \mathbb{I}\{z_i = m\}.
\end{equation}
Define the group mean and standard deviation
\begin{equation}
\mu = \frac{1}{G} \sum_{i=1}^G r_i,
\qquad
\sigma = \sqrt{\frac{1}{G} \sum_{i=1}^G (r_i - \mu)^2}.
\end{equation}
The GRPO advantage is
\begin{equation}
A_i = \frac{r_i - \mu}{\sigma + \varepsilon},
\end{equation}
where $\varepsilon > 0$ stabilizes the computation.
By construction, $\sum_{i=1}^G A_i = 0$.

\paragraph{Objective.}
The GRPO loss is
\begin{equation}
\mathcal{L}(\theta)
=
-\mathbb{E}\!\left[
\frac{1}{G}
\sum_{i=1}^G
A_i \log \pi_\theta(z_i \mid x)
\right]
+ \beta\,\mathcal{R}_{\mathrm{KL}}(\theta).
\end{equation}

\subsubsection{Positive Update for the Group Mode}

\begin{lemma}
\label{lem:mode_bias}
Fix an input $x$ and a sampled group $\{z_i\}_{i=1}^G$ with advantages $\{A_i\}$.
Let $k^* = m$ be the group mode.
Ignoring the KL term (or when the policy-gradient term dominates),
the policy-gradient update satisfies $\Delta b_{k^*} > 0$.
\end{lemma}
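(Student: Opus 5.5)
We compute the gradient of the per-group loss with respect to $b_{k^*}$ and show that a gradient-descent step raises this bias. Ignoring the KL term, the loss contribution from the fixed group is $-\frac{1}{G}\sum_i A_i \log \pi_\theta(z_i\mid x)$. The update is $\Delta b_{k^*} = -\eta\,\partial \mathcal{L}/\partial b_{k^*}$ with step size $\eta>0$.

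The first step is to differentiate the softmax. From $\partial \ell_j/\partial b_{k^*} = \mathbb{I}\{j=k^*\}$ we get $\partial \log\pi_\theta(z_i\mid x)/\partial b_{k^*} = \mathbb{I}\{z_i=k^*\} - \pi_\theta(k^*\mid x)$. Hence
\[
\Delta b_{k^*} = \frac{\eta}{G}\sum_{i=1}^G A_i\big(\mathbb{I}\{z_i=k^*\} - \pi_\theta(k^*\mid x)\big).
\]
The zero-sum property $\sum_i A_i = 0$ kills the term involving $\pi_\theta(k^*\mid x)$. This matters because we have no control over that probability. What remains is
\[
\Delta b_{k^*} = \frac{\eta}{G}\sum_{i:\,z_i=k^*} A_i .
\]

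The second step uses the majority reward. By definition $r_i = \mathbb{I}\{z_i = m\}$ and $k^*=m$, so every $i$ with $z_i=k^*$ has $r_i = 1$. It follows that $A_i = (1-\mu)/(\sigma+\varepsilon)$ for these $i$. Write $n^*\ge 1$ for the number of samples that chose the mode. Then $\mu = n^*/G$ and
\[
\Delta b_{k^*} = \frac{\eta\, n^*(1-\mu)}{G(\sigma+\varepsilon)} .
\]
This is strictly positive exactly when $\mu<1$, that is, when at least one sample in the group disagrees with the mode.

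The main obstacle is the degenerate case in which all $G$ samples agree. There $\mu=1$, $\sigma=0$, all advantages vanish and $\Delta b_{k^*}=0$. I would handle it by stating explicitly that the lemma concerns non-degenerate groups ($n^*<G$), which are the only groups that produce any GRPO signal at all. Equivalently, the strict inequality holds whenever the group carries a nonzero learning signal, and $\Delta b_{k^*}\ge 0$ holds always. The clause about the KL term dominating is absorbed by the stated assumption.
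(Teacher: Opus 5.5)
Your proposal is correct and follows essentially the same route as the paper: the softmax identity for $\partial \log\pi_\theta/\partial b_{k^*}$, cancellation of the $\pi_\theta(k^*\mid x)$ term via $\sum_i A_i = 0$, and positivity of the advantages of all mode-matching samples. Your explicit closed form $\Delta b_{k^*} = \eta\, n^*(1-\mu)/\bigl(G(\sigma+\varepsilon)\bigr)$ and your treatment of the all-agree group make precise what the paper states only as the parenthetical ``when $\sigma>0$'' caveat.
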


\begin{proof}
The policy-gradient term for this group is
\begin{equation}
\mathcal{L}_{\mathrm{PG}}
=
-\frac{1}{G}
\sum_{i=1}^G
A_i \log \pi_\theta(z_i \mid x).
\end{equation}
Using the softmax identity
\begin{equation}
\frac{\partial}{\partial b_k}
\log \pi_\theta(z_i \mid x)
=
\mathbb{I}\{z_i = k\}
-
\pi_\theta(k \mid x),
\end{equation}
we obtain
\begin{align}
\frac{\partial \mathcal{L}_{\mathrm{PG}}}{\partial b_k}
&=
-\frac{1}{G}
\sum_{i=1}^G
A_i
\big(
\mathbb{I}\{z_i = k\}
-
\pi_\theta(k \mid x)
\big) \\
&=
-\frac{1}{G}
\left(
\sum_{i:z_i = k} A_i
-
\pi_\theta(k \mid x)
\sum_{i=1}^G A_i
\right).
\end{align}
Since $\sum_{i=1}^G A_i = 0$, this simplifies to
\begin{equation}
\frac{\partial \mathcal{L}_{\mathrm{PG}}}{\partial b_k}
=
-\frac{1}{G}
\sum_{i:z_i = k} A_i.
\end{equation}
For $k = k^*$, all terms correspond to winners, hence $A_i > 0$ (when $\sigma > 0$),
implying
$\partial \mathcal{L}_{\mathrm{PG}} / \partial b_{k^*} < 0$
and therefore
$\Delta b_{k^*} > 0$ under gradient descent.
\end{proof}

\subsubsection{Global Positive Feedback via Parameter Sharing}

\begin{lemma}
\label{lem:global_feedback}
If an outcome $k^*$ becomes the group mode more frequently than others in a batch,
resulting in a net increase $\Delta b_{k^*} > 0$,
then to first order (holding other logits fixed),
$\pi_\theta(k^* \mid x)$ increases for all $x$,
further increasing the probability that $k^*$ becomes the group mode
in subsequent updates.
\end{lemma}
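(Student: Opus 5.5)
The plan is to work directly with the softmax parameterization of the answer distribution and carry out a first-order perturbation in the shared bias $b_{k^*}$. Lemma~\ref{lem:mode_bias} gives the sign of the update: for every group whose mode is $k^*$, the contribution to $\Delta b_{k^*}$ is $\frac{\eta}{G}\sum_{i:z_i=k^*}A_i>0$. For groups with a different mode $m\neq k^*$, a sample with $z_i=k^*$ has negative advantage, so it contributes a negative amount. Summing over the batch, the hypothesis that $k^*$ is the group mode ``more frequently than others'' is what lets us assume a net $\Delta b_{k^*}>0$. I will simply adopt this as the premise stated in the lemma and not try to derive it.

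The central computation is the derivative of the answer probability at an arbitrary input $x$. Using
\begin{equation}
\frac{\partial \pi_\theta(k^*\mid x)}{\partial b_{k^*}} = \pi_\theta(k^*\mid x)\bigl(1-\pi_\theta(k^*\mid x)\bigr),
\end{equation}
a first-order expansion with all other logits held fixed gives
\begin{equation}
\Delta \pi_\theta(k^*\mid x) \approx \pi_\theta(k^*\mid x)\bigl(1-\pi_\theta(k^*\mid x)\bigr)\,\Delta b_{k^*}.
\end{equation}
This is strictly positive whenever $0<\pi_\theta(k^*\mid x)<1$, which always holds under a softmax with finite logits. The key observation is that $b_{k^*}$ enters every $\ell_{k^*}(x)$ additively and does not depend on $\mathbf{h}(x)$. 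Consequently the increase holds uniformly over all $x\in\mathcal{X}$, including inputs that never appeared in the batch, and this is exactly where parameter sharing makes the effect global.

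For the feedback claim, I will show that the probability that $k^*$ becomes the group mode is increasing in $p=\pi_\theta(k^*\mid x)$. Model the group answers as i.i.d.\ categorical draws. Increasing $b_{k^*}$ alone raises $p$ and scales every competitor probability $\pi_\theta(j\mid x)$, $j\neq k^*$, down by the same factor $(1-p')/(1-p)$. A coupling argument then applies: each sample that was $k^*$ stays $k^*$, and each non-$k^*$ sample is independently switched to $k^*$ with probability $(p'-p)/(1-p)$, with the remaining samples keeping their relative competitor proportions. Under this coupling the count of $k^*$ weakly increases and every competitor's count weakly decreases. Hence the event $\{k^*=\mathrm{Mode}\}$, under a fixed tie-breaking rule, is monotone and its probability does not decrease. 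Strictness follows because there is positive probability of a configuration where a single switch changes the mode.

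The main obstacle is this last monotonicity step, because the multinomial mode event is not obviously monotone. The coupling is the cleanest route I see, and the proportional rescaling of competitor probabilities is what makes it work. I will also note explicitly that ``holding other logits fixed'' ignores concurrent changes in $\mathbf{w}_k$ and in $\mathbf{h}(x)$. The statement is therefore a first-order, local claim, and I will present it as such rather than as a global dynamical result.
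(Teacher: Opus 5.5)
Your proposal is correct and follows the same line as the paper's proof. The shared bias $b_{k^*}$ shifts $\ell_{k^*}(x)$ by the same amount for every $x$, the softmax probability is strictly increasing in its own logit, and the probability that $k^*$ is the group mode is monotone in $\pi_\theta(k^*\mid x)$. The only difference is that the paper simply asserts this last monotonicity, whereas your coupling proves it: competitor probabilities are rescaled proportionally by $(1-p')/(1-p)$, so any switch makes $k^*$ a strict mode, and the argument is valid for any deterministic tie-breaking rule.
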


\begin{proof}
The bias $b_{k^*}$ is shared across all inputs.
An increase $\Delta b_{k^*}$ adds a constant to the logit $\ell_{k^*}(x)$ for every $x$.
Holding other logits fixed, the softmax probability $\pi_\theta(k^* \mid x)$
is strictly increasing in $\ell_{k^*}(x)$.
Since the probability of being the majority among $G$ samples
is a monotone function of $\pi_\theta(k^* \mid x)$,
the frequency with which $k^*$ becomes the group mode increases,
closing a positive feedback loop.
\end{proof}

\subsubsection{Absorbing and Stable Collapsed State}

\begin{lemma}
\label{lem:absorbing}
The fully collapsed answer policy
$\pi_\theta(z \mid x) = \delta(z - k^*)$
is an absorbing state and is locally stable under GRPO dynamics.
\end{lemma}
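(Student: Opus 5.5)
The argument splits into two parts. First we show the collapsed state is a fixed point of the GRPO dynamics, i.e.\ absorbing. Then we show that small perturbations of it are pulled back, i.e.\ local stability. Both parts use only the advantage structure of the majority-voting reward and the bias-gradient identity derived in the proof of Lemma~\ref{lem:mode_bias}.

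\textbf{Absorbing.} Suppose $\pi_\theta(z\mid x)=\delta(z-k^*)$ for every $x$. Then every sampled group satisfies $z_1=\dots=z_G=k^*$, so $m=k^*$ and $r_i=1$ for all $i$. This gives $\mu=1$ and $\sigma=0$, hence $A_i=(1-1)/(0+\varepsilon)=0$ for every $i$. The policy-gradient term $-\frac1G\sum_i A_i\log\pi_\theta(z_i\mid x)$ therefore vanishes identically, and its gradient is zero with respect to all parameters, not only $b_k$. The KL term is handled by the regime already adopted in Lemma~\ref{lem:mode_bias}: we either ignore it or, if it is kept, note that its pull is toward $\pi_{\mathrm{ref}}$ and does not come from the reward. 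In either case the reward-driven update is exactly zero, so the parameters, and hence the policy, stay fixed. Since no group ever contains a non-$k^*$ answer, no subsequent update can reintroduce other outcomes. The state is absorbing.

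\textbf{Local stability.} Perturb the state to $\pi_\theta(k^*\mid x)=1-\eta(x)$ with $\eta$ small. Groups are now all-$k^*$ with probability $(1-\eta)^G\approx 1-G\eta$, and these contribute zero gradient as above. With probability $O(G\eta)$ a group contains one or a few deviant answers. For $G\ge3$ and small $\eta$, $k^*$ is still the group mode in such groups, so the $k^*$ rollouts receive $A_i>0$ and the deviants receive $A_i<0$. Lemma~\ref{lem:mode_bias} then gives $\Delta b_{k^*}>0$. By the same identity, $\partial\mathcal L_{\mathrm{PG}}/\partial b_k=-\frac1G\sum_{i:z_i=k}A_i>0$ for each deviant $k$, so $\Delta b_k<0$. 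The expected update therefore increases $\ell_{k^*}-\ell_k$ for all $k\neq k^*$. Because the biases are shared, Lemma~\ref{lem:global_feedback} extends this to every input: $\pi_\theta(k^*\mid x)$ rises for all $x$, so $\eta$ contracts. Taking $\eta$ itself as a Lyapunov function, $\mathbb E[\Delta\eta]<0$ whenever $\eta>0$, and the update vanishes as $\eta\to0$. This establishes local stability.

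\textbf{Main obstacle.} The delicate step is making the second part rigorous. The update size is $O(\eta)$ in expectation but is realized stochastically through rare groups. The nonzero $A_i$ have magnitude $O(1/\sqrt{\eta})$ scaled by $\sigma$, which needs care, although the sign argument does not depend on the magnitude. The weight vectors $\mathbf w_k$ also move, and $\mathbf h(x)$ may shift. I would handle this by working to first order with the other logits held fixed, as in Lemma~\ref{lem:global_feedback}. I would argue that the $\mathbf w_k$ updates have the same sign structure, namely $\Delta\mathbf w_k\propto\sum_{i:z_i=k}A_i\mathbf h(x)$, and would only add to the margin in the averaged direction. Any rigorous statement would therefore be local and first-order, in line with the earlier lemmas.
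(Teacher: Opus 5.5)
Your proposal is correct and takes essentially the same route as the paper. Absorption follows from $r_i\equiv 1$, $\sigma=0$, $A_i=0$, and local stability follows from the bias identity $\partial\mathcal{L}_{\mathrm{PG}}/\partial b_k=-\frac{1}{G}\sum_{i:z_i=k}A_i>0$ for rare deviant answers $k\neq k^*$; your Lyapunov framing, the $\Delta b_{k^*}>0$ side and the appeal to Lemma~\ref{lem:global_feedback} are extra detail the paper leaves implicit. One small correction to your obstacle remark: advantages are standardized within each group, so a group with a single deviant has $\sigma=\sqrt{G-1}/G$ and nonzero advantages $-\sqrt{G-1}$ and $1/\sqrt{G-1}$, which are independent of $\eta$ rather than $O(1/\sqrt{\eta})$, and this is what makes your $O(\eta)$ expected-update estimate consistent.
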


\begin{proof}
If $\pi_\theta(z \mid x) = \delta(z - k^*)$,
then every sampled group consists entirely of $k^*$,
so $r_i \equiv 1$, $\mu = 1$, and $\sigma = 0$.
In standard GRPO implementations, this yields $A_i = 0$ for all $i$,
hence $\nabla_\theta \mathcal{L}_{\mathrm{PG}} = 0$
and the parameters stop updating, making the state absorbing.

For local stability, consider a near-collapsed regime where most groups
still have mode $k^*$, but rare samples produce $k \neq k^*$.
Such samples are necessarily losers with $A_i < 0$.
From Lemma~\ref{lem:mode_bias},
\begin{equation}
\frac{\partial \mathcal{L}_{\mathrm{PG}}}{\partial b_k}
=
-\frac{1}{G}
\sum_{i:z_i = k} A_i
> 0,
\end{equation}
implying $\Delta b_k < 0$ for $k \neq k^*$.
Thus deviations are suppressed, establishing local stability.
\end{proof}

\subsubsection{Answer-Only Collapse under Tokenwise KL Regularization}

\begin{lemma}
\label{lem:decoupling}
Under majority-voting rewards and the tokenwise KL penalty used in GRPO,
defined by
\begin{equation}
\phi(r) = r - \log r - 1,
\qquad
r_{i,t} =
\frac{
\pi_{\mathrm{ref}}(\tau_{i,t} \mid x, \tau_{i,<t})
}{
\pi_{\theta}(\tau_{i,t} \mid x, \tau_{i,<t})
},
\end{equation}
the optimization favors concentrating distributional shift on the final answer span
rather than across the entire reasoning sequence.
\end{lemma}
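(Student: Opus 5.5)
Since the statement is qualitative (``the optimization favors concentrating distributional shift on the answer span''), I would first make it precise as a constrained trade-off: among all ways of raising the majority-voting reward by a fixed amount, the one that puts the whole KL cost on the answer span is cheapest. The proof then has two ingredients. The first is the local behaviour of the tokenwise penalty $\phi$. The second is how much reward a given perturbation buys, depending on whether it acts on answer tokens or on reasoning tokens.

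\textbf{Step 1 (local form of the penalty).} Expand $\phi(r)=r-\log r-1$ around $r=1$. This gives $\phi(r)=\tfrac12(r-1)^2+O((r-1)^3)$, with $\phi\ge 0$ and equality only at $r=1$. Summing over positions, the KL cost of a perturbation is additive across tokens: $\mathcal{R}_{\mathrm{KL}}\approx \sum_t \tfrac12\,\mathbb{E}[(r_{t}-1)^2]$. The cost of any shift is therefore proportional to the number of positions touched and to the squared size of the log-ratio at each one.

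\textbf{Step 2 (reward sensitivity by span).} By Lemma~\ref{lem:mode_bias}, the majority-voting advantage acts on the answer distribution only through $\sum_{i:z_i=k}A_i$. To raise $\Pr(z=k^*\mid x)$ by $\delta$ via the answer logits, one needs a single shift $\Delta\ell_{k^*}=O(\delta)$ at one position. Because of parameter sharing (Lemma~\ref{lem:global_feedback}), this shift applies uniformly to every $x$, so its KL cost is $O(\delta^2)$ per answer token.

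Achieving the same $\delta$ through the reasoning $y$ works differently. The reasoning must be steered into a region where $z=k^*$ becomes more likely, and that only happens indirectly, through $\mathbb{E}_{y}[\pi(k^*\mid x,y)]$. The required shift therefore spreads over $T\gg|z|$ positions, and each of those shifts affects the answer only through a mixing coefficient $c\le 1$. Using Cauchy--Schwarz on the additive quadratic cost, I would show that the minimum KL cost to move the answer marginal by $\delta$ through $y$ is at least $\delta^2/c^2$. This is at least the answer-span cost and strictly larger whenever $c<1$. This reduces the problem to a convex allocation: minimise $\sum_t \tfrac12 u_t^2$ subject to $\sum_t g_t u_t=\delta$, whose Lagrangian solution gives $u_t\propto g_t$. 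The solution concentrates on the positions with the largest reward gradient $g_t$, and those are the answer positions.

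\textbf{Step 3 (conclusion).} Combine this with Lemma~\ref{lem:absorbing}: once the answer span is sharpened, the advantages vanish. The reasoning tokens then receive no further reward pressure, while the KL term keeps pulling them back toward $\pi_{\mathrm{ref}}$. The fixed point therefore has its shift concentrated on $z$, which matches Figure~\ref{fig:compare_kl}.

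The main obstacle is Step 2. It requires a rigorous statement that reasoning-token perturbations have a strictly weaker first-order effect on the answer marginal than answer-logit perturbations. I would first assume a simple mixture model, $\pi(z\mid x)=\sum_y\pi(y\mid x)\pi(z\mid x,y)$, under which each reasoning token's effect is bounded by a total-variation factor $c\le 1$. If that fails in general, I would state the lemma under that explicit assumption rather than claim it unconditionally.
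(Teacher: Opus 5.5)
The paper's own proof uses only what your Step~1 already has ($\phi\ge 0$ with equality only at $r=1$, additivity over positions). It then compares, qualitatively, collapsing the $|z|$ answer positions with collapsing the \emph{whole} reasoning sequence. The gap is in Step~2, where you turn this into a KL-optimal allocation claim. That claim is false unless you assume something close to the conclusion, and the local computation can be done exactly.

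In expectation under $\pi_\theta$, $\sum_t\phi(r_{i,t})$ is the sequence-level KL, so the chain rule applies. Write a relative perturbation of the joint law as $h=h_y(y)+h_{z\mid y}(y,z)$ with $\mathbb{E}[h_{z\mid y}\mid y]=0$. It costs $\tfrac12\mathbb{E}[h_y^2]$ on reasoning positions plus $\tfrac12\mathbb{E}[h_{z\mid y}^2]$ on answer positions. To first order, the gain in $\Pr(z=k^*\mid x)$ is $\mathrm{Cov}(h_y,s_y)+\mathbb{E}_y[\mathrm{Cov}(h_{z\mid y},\mathbb{I}\{z=k^*\}\mid y)]$, where $s_y=\pi(k^*\mid x,y)$. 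Cauchy--Schwarz then gives minimal costs of $\tfrac12\delta^2/\mathrm{Var}_y(s_y)$ through the reasoning alone and $\tfrac12\delta^2/\mathbb{E}_y[s_y(1-s_y)]$ through the answer span alone. The joint optimum $h\propto\mathbb{I}\{z=k^*\}-\mathbb{E}_y s_y$ splits the KL between reasoning and answer in the ratio of explained to unexplained variance of $\mathbb{I}\{z=k^*\}$.

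So your Lagrangian $u_t\propto g_t$ puts most of the shift on the answer only when the answer is mostly \emph{not} determined by the reasoning. The typical regime for a reasoning model is the opposite: $\pi(z\mid x,y)$ is nearly a point mass at some $f(y)$. There the answer-route cost diverges; at $s_y=0$ you hit $\phi(0)=\infty$. By contrast, reweighting trajectories by $q(f(y))/P_z(f(y))$ attains the data-processing lower bound $\mathrm{KL}(q\,\|\,P_z)$ for the target answer marginal $q$, independently of $T$. If the answer is fixed at one branching token, all of that KL sits on that single token.

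Hence neither ``the reasoning route must spread over $T\gg|z|$ positions'' nor ``the answer positions have the largest $g_t$'' holds in general. Several supporting claims fail as well:
\begin{itemize}
\item Your mixture model with $c\le 1$ is not an extra assumption. It holds for every autoregressive policy, being just the law of total probability plus data processing, and the same bound constrains both routes. It therefore cannot separate them.
\item The answer-route cost is $\tfrac12\delta^2/\mathbb{E}_y[s_y(1-s_y)]$, not a uniform $O(\delta^2)$.
\item Parameter sharing across $x$ does not lower the per-input KL; it is a parameter-space effect, not a KL-space one.
\end{itemize}

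Step~3 does not repair this. At a collapsed state $A_i\equiv 0$ whichever span carried the collapse. In the unmasked MV objective the KL term also pulls answer tokens back toward $\pi_{\mathrm{ref}}$, just as it pulls reasoning tokens. The fixed-point argument therefore presupposes the asymmetry it is meant to prove.

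A correct version needs one of two things. The first is an explicit assumption such as $\mathbb{E}_y[s_y(1-s_y)]\ge\mathrm{Var}_y(s_y)$. The second is to fall back on the paper's comparison with \emph{full} reasoning collapse, meaning deviation on $\Theta(T)$ positions as in Lemma~\ref{lem:kl_barrier}. That comparison is only heuristic, because it never addresses the cheap, $T$-independent reweighting route above.
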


\begin{proof}
The surrogate satisfies $\phi(r) \ge 0$ for all $r>0$,
with equality if and only if $r=1$.
The corresponding tokenwise KL regularizer takes the form
\begin{equation}
\mathcal{R}_{\mathrm{KL}}(\theta)
=
\mathbb{E}\!\left[
\sum_{t=1}^{|\tau_i|}
\phi(r_{i,t})
\right],
\end{equation}
where $\tau_i=(y_i,z_i)$ and the sum ranges over all tokens in the trajectory,
including both the reasoning tokens in $y_i$ and the answer-span tokens in $z_i$.
Any deviation from the reference policy at step $t$ incurs a nonnegative cost.
Deviating across multiple reasoning steps accumulates penalties across those steps.
In contrast, deviating primarily on the answer span accumulates the penalty only
over the answer-span positions.
Since the majority-voting reward depends only on answer agreement,
collapsing the answer distribution yields large reward gains at relatively small KL cost,
whereas collapsing the entire reasoning sequence incurs a much larger KL penalty.
\end{proof}

\subsubsection{Proof of Global Mode Collapse}
\begin{theorem}
\label{thm:global_collapse}
Under label-free Group Relative Policy Optimization (GRPO) with majority-voting rewards,
the answer distribution admits a globally shared collapsed mode as an attracting equilibrium.
Specifically, under standard training conditions (odd group size or fixed tie-breaking,
nonzero exploration, and a KL coefficient $\beta$ that does not dominate the policy-gradient signal),
the optimization dynamics exhibit:
(i) global positive feedback favoring the current majority outcome due to parameter sharing, and
(ii) an absorbing and locally stable collapsed answer policy.
\end{theorem}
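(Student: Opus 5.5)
The plan is to assemble the theorem directly from Lemmas~\ref{lem:mode_bias}--\ref{lem:decoupling}, treating the two claimed properties (i) and (ii) separately. Before that, I will fix the standing assumptions so each lemma applies. An odd $G$ or a fixed tie-breaking rule makes the mode $m$ well defined. Nonzero exploration guarantees that mixed groups (with $\sigma>0$) occur with positive probability away from the collapsed state. A $\beta$ that does not dominate lets us analyse the policy-gradient term first and treat the KL term as a perturbation.

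For (i), I first invoke Lemma~\ref{lem:mode_bias} per group. For any group with $\sigma>0$, the gradient with respect to $b_k$ equals $-\frac{1}{G}\sum_{i:z_i=k}A_i$. This is strictly negative for the group mode and nonnegative for every other outcome. Summing over a batch, the net update $\Delta b_k$ is a sum of the contributions from groups whose mode is $k$, minus the contributions from groups in which $k$ appears as a loser. Hence the outcome that is the group mode most often receives a positive net $\Delta b_{k^*}$. Lemma~\ref{lem:global_feedback} then propagates this to every input via the shared bias: $\pi_\theta(k^*\mid x)$ rises for all $x$, so the probability that $k^*$ is the mode of a fresh $G$-sample rises too. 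Monotonicity holds because the probability that a categorical outcome is the plurality is increasing in its own mass when the other masses are rescaled proportionally. Iterating gives the positive feedback loop, and the loop is input-independent, which is the ``globally shared'' part.

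For (ii), Lemma~\ref{lem:absorbing} supplies both halves. At $\pi_\theta(\cdot\mid x)=\delta(\cdot-k^*)$ every advantage vanishes, so the collapsed policy is a fixed point of the PG dynamics. In a neighbourhood of it, any sampled $k\neq k^*$ is a loser, so $\Delta b_k<0$ and the deviation mass contracts. To ensure the KL term does not undo this, I use Lemma~\ref{lem:decoupling}. The collapse is concentrated on the answer span, so the KL cost at the collapsed point is bounded by the answer-span length times a per-token divergence. Its restoring gradient is therefore $O(\beta)$ and is dominated by the PG signal under the stated condition on $\beta$. The equilibrium therefore survives as attracting, possibly shifted by $O(\beta)$.

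The main obstacle is making ``attracting'' rigorous near the fixed point. There the advantage normalization $\sigma+\varepsilon$ becomes degenerate, and mixed groups become rare (their probability is of order $G\,p$ for the deviation mass $p$). Both the PG contraction and the KL restoring force then vanish at comparable rates. I would handle this with a first-order expansion of the expected update of $p=1-\pi_\theta(k^*\mid x)$. A mixed group containing one deviant has $A\approx -\sqrt{G-1}$ for the loser, which gives an expected drift of order $-c\,p$ with $c$ independent of $\beta$. I would compare this with the KL drift of order $\beta p$, concluding contraction whenever $\beta<c$, which is exactly the non-dominance hypothesis.
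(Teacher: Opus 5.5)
Your outline matches the paper's proof: (i) comes from Lemmas~\ref{lem:mode_bias} and~\ref{lem:global_feedback}, and (ii) from Lemma~\ref{lem:absorbing}. The paper simply chains Lemmas~1--4 and drops the KL term inside Lemmas~1 and~3. Your proportional-rescaling argument for the monotonicity in Lemma~\ref{lem:global_feedback} is correct and sharper than the paper's bare assertion.

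The problems are in the steps you add. A minor one: in (i), ``hence the outcome that is the group mode most often receives a positive net $\Delta b_{k^*}$'' does not follow, because loser terms can outweigh winner terms. A group in which $k$ is the mode contributes $\sqrt{\mu(1-\mu)}\le\tfrac12$ to $-\partial\mathcal{L}_{\mathrm{PG}}/\partial b_k$. A lone loser in a group with $\mu=(G-1)/G$ contributes $-\sqrt{G-1}/G$. Take $G=8$ and an answer that is the mode (with $\mu=\tfrac12$) of two groups and a lone loser in four groups whose modes are four distinct other answers. It is still the most frequent mode, yet its net is $1-\sqrt{7}/2<0$. The paper's Lemma~\ref{lem:global_feedback} assumes $\Delta b_{k^*}>0$, and you should too.

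The genuine gap is your KL argument. Near the delta policy the PG signal vanishes, since all-agree groups have $A_i=0$. So ``$\beta$ does not dominate the PG signal'' cannot hold uniformly on any neighbourhood of collapse.

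Concretely, take the loss $\phi(r)=r-\log r-1$ on on-policy samples. Its expected gradient with respect to the answer logits is $\beta(\pi_\theta-\pi_{\mathrm{ref}})$. This is $\Theta(\beta)$ at the collapsed policy unless $\pi_{\mathrm{ref}}$ is itself collapsed. Your own computation, by contrast, gives a PG push on the gap $\ell_{k^*}-\ell_k$ of only about $2\sqrt{G-1}\,p$ per unit learning rate.

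Passing to $p$, you dropped the softmax Jacobian $\partial p/\partial(\ell_{k^*}-\ell_k)=-p(1-p)$. The PG drift of $p$ is of order $-c\,p^2$ (the rarity factor $Gp$ \emph{times} a second factor $p$), not $-c\,p$. The KL drift is indeed of order $+\beta p$. So the drift changes sign at $p^*\approx\beta/c$, and below that the KL term wins for every $\beta>0$. ``Contraction whenever $\beta<c$'' is therefore false, and it also contradicts your own ``shifted by $O(\beta)$'' in the previous paragraph. With the exact reverse-KL gradient the conclusion is the same, with $p^*\sim e^{-c/\beta}$.

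Lemma~\ref{lem:decoupling} cannot repair this, since it compares KL \emph{costs} of collapse strategies, not gradient magnitudes near the fixed point. What you can prove is one of two things. Either prove (ii) for the PG-only dynamics, which is all Lemma~\ref{lem:absorbing} establishes when it says the parameters stop updating. Or, for $\beta>0$, prove an attracting \emph{near}-collapsed equilibrium at deviation mass $p^*=\Theta(\beta/\sqrt{G-1})$, identified by the sign change of the drift. State (ii) in one of these forms rather than as contraction all the way to $\delta(z-k^*)$.
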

\begin{proof}
Lemma~\ref{lem:mode_bias} shows that any outcome becoming the group mode
is reinforced independent of correctness.
Lemma~\ref{lem:global_feedback} implies that parameter sharing turns early random
advantages into a global positive-feedback loop.
Lemma~\ref{lem:decoupling} explains why optimization pressure concentrates collapse
on the answer span.
Finally, Lemma~\ref{lem:absorbing} establishes that the collapsed answer policy
is absorbing and locally stable.
Together, these results show that global mode collapse of the answer distribution
emerges as an attracting equilibrium under label-free GRPO with majority voting.
\end{proof}

\subsection{Preventing Global Mode Collapse via OMU}
\label{sec:answer_masking}
We now analyze the optimization dynamics under the same MV reward,
but with gradients on the answer span masked (ours OMU).
We show that this modification prevents global mode collapse
while preserving meaningful learning signals for the reasoning process.

\subsubsection{Problem Setup}

Given an input $x \in \mathcal{X}$, the policy $\pi_\theta$ generates a trajectory
\[
\tau = (y, z),
\]
where
$y = (\tau_{1}, \dots, \tau_{T})$ is the reasoning sequence and
$z = (\tau_{T+1}, \dots, \tau_{T+L})$ is the final answer span, with $L \ge 1$.
The evaluation unit for majority voting is the entire answer span $z$.

\paragraph{Majority-Voting Reward.}
For each input $x$, we sample a group of $G$ trajectories
$\{\tau_i\}_{i=1}^G$ with corresponding answer spans $\{z_i\}_{i=1}^G$.
Let
\[
m = \mathrm{Mode}(\{z_i\}_{i=1}^G)
\]
denote the most frequent answer in the group.
The reward is defined as
\[
r_i = \mathbb{I}\{z_i = m\}.
\]
We construct a centered GRPO advantage
\[
A_i = \frac{r_i - \mu}{\sigma + \varepsilon},
\quad
\mu = \frac{1}{G}\sum_{i=1}^G r_i,
\]
which satisfies $\sum_{i=1}^G A_i = 0$.

\subsubsection{Answer-Span Gradient Masking}

We introduce a binary mask over token positions for trajectory $\tau_i$:
\[
m_{i,t} =
\begin{cases}
1, & 1 \le t \le T_i \quad \text{(reasoning tokens)}, \\
0, & T_i < t \le |\tau_i| \quad \text{(answer span)}.
\end{cases}
\]
The masked GRPO objective is defined as
\begin{equation}
\label{eq:masked_objective}
\begin{aligned}
\mathcal{L}_{\mathrm{mask}}(\theta)
=&
-\mathbb{E}_{x\sim\mathcal D,\{\tau_i\}\sim\pi_{\theta_{\text{old}}}}
\left[
\frac{1}{G}
\sum_{i=1}^{G}
\sum_{t=1}^{|\tau_i|}
\right. \\
&\left.
m_{i,t} A_i
\log \pi_\theta(\tau_{i,t} \mid x,\tau_{i,<t})
\right] \\
&+
\beta
\mathbb{E}_{x\sim\mathcal D,\{\tau_i\}\sim\pi_{\theta_{\text{old}}}}
\left[
\frac{1}{G}
\sum_{i=1}^{G}
\sum_{t=1}^{|\tau_i|}
\right. \\
&\left.
m_{i,t}\phi(r_{i,t})
\right].
\end{aligned}
\end{equation}
where
\[
\phi(r) = r - \log r - 1,
\qquad
r_{i,t} =
\frac{
\pi_{\mathrm{ref}}(\tau_{i,t} \mid x, \tau_{i,<t})
}{
\pi_\theta(\tau_{i,t} \mid x, \tau_{i,<t})
}.
\]

Importantly, both the policy-gradient term and the KL regularization term
are masked on the answer span via the token-level mask $m_{i,t}$.

\subsubsection{Elimination of Direct Answer-Level Reinforcement}

\begin{lemma}
\label{lem:answer_zero_grad}
Under the masked objective~\eqref{eq:masked_objective},
for any sample $i$ and any position $t$ such that $m_{i,t}=0$,
\[
\frac{\partial \mathcal{L}_{\mathrm{mask}}}
{\partial \log \pi_\theta(\tau_{i,t}\mid x,\tau_{i,<t})}=0.
\]
\end{lemma}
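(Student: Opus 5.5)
The plan is a direct term-by-term differentiation of the masked objective~\eqref{eq:masked_objective}. We treat each token log-probability $\ell_{i,t} := \log \pi_\theta(\tau_{i,t}\mid x,\tau_{i,<t})$ as an independent variable of the loss. The sampled trajectories, the advantages $A_i$, and the mask $m_{i,t}$ are all held fixed with respect to $\theta$. This is standard in GRPO: samples come from $\pi_{\theta_{\text{old}}}$ and the rewards are computed from extracted answers, with no gradient flowing through them.

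The first step is to write $\mathcal{L}_{\mathrm{mask}}$ as a finite sum over $(i,t)$ inside the expectation. Each policy-gradient summand is $-\frac{1}{G} m_{i,t} A_i \ell_{i,t}$. Its partial derivative with respect to $\ell_{j,s}$ is $-\frac{1}{G} m_{j,s} A_j$ when $(i,t)=(j,s)$, and zero otherwise. For a position with $m_{j,s}=0$, this contribution therefore vanishes.

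The second step handles the KL term. Since $r_{i,t} = \pi_{\mathrm{ref}}(\tau_{i,t}\mid\cdot)\,e^{-\ell_{i,t}}$, we have $\phi(r_{i,t})$ depending on $\ell_{i,t}$ only through $r_{i,t}$, and $\partial \phi(r_{i,t})/\partial \ell_{i,t} = -\phi'(r_{i,t})\, r_{i,t} = -(r_{i,t}-1)$. This quantity is generally nonzero, so the vanishing must come from the prefactor $m_{i,t}=0$ multiplying the whole summand. Summands indexed by other positions do not depend on $\ell_{j,s}$ under the independence convention, so they contribute nothing.

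Summing, the derivative at a masked position is $m_{j,s}$ times a finite quantity, which is $0$. Exchanging the derivative with the expectation is routine here: the expectation is over samples from the fixed $\pi_{\theta_{\text{old}}}$, and each integrand is linear in $m_{j,s}$.

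The main obstacle is not computational but definitional. The lemma only makes sense if $\ell_{j,s}$ is treated as a coordinate distinct from the other positions' log-probabilities. Through shared parameters $\theta$, the answer token's probability still changes when reasoning tokens are updated. I would state this convention explicitly and emphasize its scope: the lemma removes \emph{direct} reinforcement of the answer span, as the subsection title indicates. It does not claim that $\nabla_\theta \log\pi_\theta(z_i\mid\cdot)$ is orthogonal to the update direction; any indirect effect is left to the subsequent analysis.
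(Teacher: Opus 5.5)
Your proof is correct and follows the same idea as the paper's: every answer-span summand in both the policy-gradient and KL terms carries the prefactor $m_{i,t}=0$, so the per-sample loss does not depend on those token log-probabilities. Your explicit derivative computation and your stated convention of treating each $\log\pi_\theta(\tau_{i,t}\mid\cdot)$ as an independent coordinate, with indirect dependence through the shared $\theta$ left aside, make precise what the paper's one-line argument (``contains no factor involving'') and its later ``Dependency Clarification'' paragraph leave implicit.
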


\begin{proof}
By construction, neither the policy-gradient term nor the KL regularization term
in~\eqref{eq:masked_objective} contains any factor involving
$\log \pi_\theta(\tau_{i,t} \mid x, \tau_{i,<t})$ for $t > T$.
Therefore, the loss function is independent of all answer-span token probabilities,
and their gradients vanish identically.
\end{proof}

\paragraph{Implication.}
Lemma~\ref{lem:answer_zero_grad} shows that the majority-voting advantage $A_i$
cannot directly reinforce or suppress any answer-span token.
This removes the low-cost winner-takes-all shortcut that previously drove
global mode collapse at the answer level.

\subsubsection{Credit Assignment Redirection to Reasoning}

\begin{lemma}
\label{lem:credit_redirection}
Although the majority-voting reward is defined over answer spans,
all advantage-weighted learning signals are backpropagated exclusively
through the reasoning tokens.
Consequently, any improvement in answer consistency must be achieved
by modifying the reasoning process.
\end{lemma}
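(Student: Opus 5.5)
The plan is to compute the full parameter gradient of the masked objective~\eqref{eq:masked_objective} and show that it depends only on the reasoning-token log-probabilities. I will then read off the consequence for the answer distribution via the chain rule through the autoregressive factorization.

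\textbf{Step 1: restrict the gradient to reasoning tokens.} Differentiating~\eqref{eq:masked_objective} term by term, the $m_{i,t}$ factors kill every summand with $t>T_i$. This gives
\begin{equation*}
\nabla_\theta \mathcal{L}_{\mathrm{mask}}
= -\mathbb{E}\Big[\frac{1}{G}\sum_{i=1}^G\sum_{t=1}^{T_i}\big(A_i - \beta\,\phi'(r_{i,t})\,r_{i,t}\big)\,\nabla_\theta \log \pi_\theta(\tau_{i,t}\mid x,\tau_{i,<t})\Big].
\end{equation*}
Here I use $\partial r_{i,t}/\partial \log\pi_\theta = -r_{i,t}$, and the sign conventions will be checked. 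By Lemma~\ref{lem:answer_zero_grad}, no score term $\nabla_\theta\log\pi_\theta(\tau_{i,t}\mid\cdot)$ with $t>T_i$ appears. So every advantage-weighted signal enters only through reasoning-token scores, which is the first claim.

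\textbf{Step 2: transfer to the answer distribution.} I write the induced answer probability as a marginal,
\begin{equation*}
\pi_\theta(z\mid x)=\sum_{y}\pi_\theta(y\mid x)\,\pi_\theta(z\mid x,y).
\end{equation*}
The update $\Delta\theta=-\eta\nabla_\theta\mathcal{L}_{\mathrm{mask}}$ is a sum of reasoning-token score directions. To first order in $\eta$, the change in $\pi_\theta(z\mid x)$ therefore splits into two parts. The first is the change in the reasoning marginal $\pi_\theta(y\mid x)$, which is driven by the advantage-weighted reasoning scores. The second is an indirect change in $\pi_\theta(z\mid x,y)$, which arises only through shared parameters and never from an objective term that targets answer tokens. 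The consequence to establish is that any increase in the agreement probability $\Pr(z_i=m)$ produced by the objective's signal comes from reweighting reasoning trajectories $y$ toward those whose conditional answer distribution favors $m$. I will state this carefully as "the objective provides no term whose gradient is $\nabla_\theta\log\pi_\theta(z\mid x,y)$." This matches how Lemma~\ref{lem:mode_bias} isolated the direct answer-bias update in the unmasked case: in the masked setting the analogous quantity $\partial\mathcal{L}_{\mathrm{mask}}/\partial b_k$ receives no $\mathbb{I}\{z_i=k\}-\pi_\theta(k\mid x)$ contribution.

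\textbf{Main obstacle: parameter sharing.} The hardest part is that the answer head and the reasoning tokens share parameters, so an update computed from reasoning scores can still move answer logits. The word "exclusively" therefore has to be interpreted as exclusivity of the backpropagated signal, not as an invariance claim. I will first try to make this precise within the linear-head model used for Lemma~\ref{lem:mode_bias}. There the answer bias $b_k$ appears only in answer-position logits, so $\partial\mathcal{L}_{\mathrm{mask}}/\partial b_k=0$ exactly. Any shift in $\ell_k(x)$ then goes only through $\mathbf{h}(x)$, that is, through the representation that is also conditioned on $y$. If that formal route proves too restrictive, I will fall back to stating the result as a first-order statement: the direct answer-reinforcement component is zero, and whatever change in answer consistency remains is mediated by changes to the reasoning process.
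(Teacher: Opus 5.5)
Your Step 1 is exactly the paper's proof: differentiate the masked objective and note that $m_{i,t}$ removes every answer-span term, so each $A_i$ multiplies only reasoning-token scores. There is one small slip: the KL term contributes $+\beta\,\phi'(r_{i,t})\,r_{i,t}=\beta(r_{i,t}-1)$ to the coefficient, not $-\beta\,\phi'(r_{i,t})\,r_{i,t}$.

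Your Step 2 and parameter-sharing caveat correspond to the paper's ``dependency clarification'' ($\theta\to y\to h_T\to z$), and they are more candid than it. Your own decomposition shows that the shared-parameter term can move $\pi_\theta(z\mid x,y)$ for fixed $y$, so the stronger reading (``any increase in agreement comes from reweighting reasoning trajectories'') does not follow. Only the weakened form you settle on is established, namely that no objective term has gradient $\nabla_\theta\log\pi_\theta(z\mid x,y)$, and that is also all the paper's argument delivers.
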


\begin{proof}
From~\eqref{eq:masked_objective}, the policy-gradient term becomes
\begin{equation}
\begin{aligned}
\nabla_\theta \mathcal{L}_{\mathrm{mask}}
=&
-\mathbb{E}_{x\sim\mathcal D,\{\tau_i\}\sim\pi_{\theta_{\mathrm{old}}}}
\Biggl[ \frac{1}{G}\sum_{i=1}^G \sum_{t=1}^{|\tau_i|}\\
&
m_{i,t} A_i\, \nabla_\theta \log \pi_\theta(
\tau_{i,t}\mid x,\tau_{i,<t}) \Biggr] \\
&+\nabla_\theta \mathcal{R}_{\mathrm{KL}}(\theta).
\end{aligned}
\end{equation}
The advantage $A_i$, determined solely by whether $z_i$ equals the group mode,
weights the log-probabilities of reasoning tokens only.
Thus, winning trajectories ($A_i>0$) reinforce their entire reasoning sequences,
while losing trajectories ($A_i<0$) are suppressed at the reasoning level.
\end{proof}

\paragraph{Dependency Clarification.}
While $\nabla_\theta \log \pi_\theta(z_i)$ is masked,
the answer distribution $\pi_\theta(z \mid y)$ still depends on $\theta$
through the final hidden state $h_T$,
where $h_T = \mathrm{Transformer}(y; \theta)$.
Since $y$ is optimized via the masked policy gradient,
the effective optimization pathway becomes
\[
\theta \;\rightarrow\; y \;\rightarrow\; h_T \;\rightarrow\; z,
\]
while the direct shortcut $\theta \rightarrow z$ is eliminated.

\subsubsection{KL Barrier Against Reasoning Collapse}

\begin{lemma}
\label{lem:kl_barrier}
Under the masked objective~\eqref{eq:masked_objective},
any strategy that attempts to enforce answer consistency
by collapsing the reasoning process incurs a KL cost
that grows at least linearly with the reasoning length $T$.
\end{lemma}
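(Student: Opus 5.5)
The plan is to formalize ``collapsing the reasoning process'' as a requirement that the reasoning tokens deviate from the reference policy at a nonvanishing fraction of positions. I would then lower-bound the masked KL term in \eqref{eq:masked_objective} position by position.

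The argument would proceed in four steps.

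\textbf{Step 1: per-position KL.} The masked regularizer keeps exactly the reasoning positions $t\le T_i$. Its expectation under $\pi_\theta$ at a fixed prefix is
\[
\mathbb{E}_{\tau_t\sim\pi_\theta}\big[\phi(r_t)\big]=\mathrm{KL}\big(\pi_\theta(\cdot\mid x,\tau_{<t})\,\|\,\pi_{\mathrm{ref}}(\cdot\mid x,\tau_{<t})\big).
\]
This holds because $\mathbb{E}_{\pi_\theta}[r_t]=1$ and $-\mathbb{E}_{\pi_\theta}[\log r_t]$ is the forward KL. Summing over $t$ and applying the chain rule gives
\[
\mathcal{R}^{\mathrm{mask}}_{\mathrm{KL}}=\sum_{t=1}^{T}\mathbb{E}\big[\mathrm{KL}_t\big]=\mathrm{KL}\big(\pi_\theta(y\mid x)\,\|\,\pi_{\mathrm{ref}}(y\mid x)\big).
\]
Under sampling from $\pi_{\theta_{\mathrm{old}}}$ this identity holds only approximately, to first order near $\theta_{\mathrm{old}}$. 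I would state that caveat explicitly rather than carry importance weights.

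\textbf{Step 2: formalize a collapsing strategy.} I would say a strategy collapses the reasoning if it forces the answer to concentrate on a fixed $k^*$ for all $x$ by reshaping the reasoning. This follows the pathway $\theta\to y\to h_T\to z$ from the Dependency Clarification. Concretely, I would assume that at least $\alpha T$ reasoning positions, for some constant $\alpha\in(0,1]$, have conditional distributions with total variation at least $\gamma>0$ from $\pi_{\mathrm{ref}}$. The motivating example is a reasoning chain made (near-)deterministic while the reference distribution is spread out, or made input-independent.

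\textbf{Step 3: lower bound.} At each such position, Pinsker gives $\mathrm{KL}_t\ge 2\gamma^2$. Positions with no shift contribute $\phi\ge 0$ (using the nonnegativity from Lemma~\ref{lem:decoupling}). Summing yields
\[
\mathcal{R}^{\mathrm{mask}}_{\mathrm{KL}}\ge 2\alpha\gamma^2\,T,
\]
which is linear in $T$.

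For the determinization case I would add a sharper bound. If $\pi_\theta$ puts all mass on one token $v_t$, then $\mathrm{KL}_t=-\log\pi_{\mathrm{ref}}(v_t\mid\cdot)\ge -\log p_{\max}$, where $p_{\max}<1$ bounds the reference's top-token probability on those positions. This again gives a cost of at least $\alpha T\log(1/p_{\max})$.

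\textbf{Step 4: contrast with the unmasked case.} The answer-span contributions are masked out, so they cannot absorb the deviation. Compared with Lemma~\ref{lem:decoupling}, the cheap route through an $O(L)$-cost answer shift is removed. The MV reward gain is bounded by $1$ per sample, so for large $T$ the penalty $\beta\cdot 2\alpha\gamma^2T$ dominates.

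\textbf{Main obstacle.} The hardest step is Step 2: justifying that enforcing answer consistency requires deviation at $\Omega(T)$ positions. A priori, a single pivotal reasoning token could steer the answer. I would handle this by building the $\Omega(T)$ requirement into the definition of ``collapsing the reasoning process,'' which is the reading the statement intends. I would also remark that steering through $O(1)$ tokens is not reasoning collapse in the sense of Lemma~\ref{lem:kl_barrier}, and is already penalized per position by Step 1.
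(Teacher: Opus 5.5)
Your proposal is correct and follows essentially the paper's argument. Both proofs use nonnegativity of the per-position penalty, a constant lower bound on a set of $\Theta(T)$ deviating reasoning positions, and summation; like the paper, you build the linear count into the definition of reasoning collapse. The only difference is a refinement: you identify $\mathbb{E}_{\pi_\theta}[\phi(r_t)]$ with the per-position KL and use Pinsker to get the per-position constant from a total-variation assumption, whereas the paper simply assumes $\phi(r_{i,t})\ge c$ directly on the deviation set.
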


\begin{proof}
The KL regularization term is a sum over reasoning positions:
\[
\mathcal{R}_{\mathrm{KL}}(\theta)
=
\mathbb{E}\!\left[
\sum_{t=1}^{T}
\phi(r_{i,t})
\right],
\]
where $\phi(r) \ge 0$ for all $r>0$, with equality if and only if $r=1$.
If a collapsed reasoning strategy deviates from the reference policy
on a set $\mathcal{S} \subseteq \{1,\dots,T\}$ such that
$\phi(r_{i,t}) \ge c > 0$ for all $t \in \mathcal{S}$, then
\[
\sum_{t=1}^{T} \phi(r_{i,t})
\;\ge\;
\sum_{t \in \mathcal{S}} \phi(r_{i,t})
\;\ge\;
|\mathcal{S}| \cdot c.
\]
When reasoning collapse implies systematic deviation on a constant fraction
of positions, we have $|\mathcal{S}| = \Theta(T)$,
yielding a KL cost of $\Omega(T)$.
\end{proof}

\subsubsection{Prevention of Global Mode Collapse}

\begin{theorem}
\label{thm:mask_prevents_collapse}
Under label-free GRPO with majority-voting rewards,
masking both policy-gradient and KL terms on the answer span
prevents global mode collapse and forces learning to proceed
through improvements in the reasoning process.
\end{theorem}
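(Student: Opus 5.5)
The proof will mirror the structure of Theorem~\ref{thm:global_collapse}: identify which ingredients of the collapse argument in Section~\ref{sec:mv_collapse} are removed by the mask, and show that the ingredients remaining under OMU cannot reassemble an attracting collapsed equilibrium at a low cost.

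\textbf{Step 1: cut the direct shortcut.} That argument relied on three links. The first is direct positive reinforcement of the group mode's answer logit (Lemma~\ref{lem:mode_bias}). The second is global propagation of that increment through the shared bias $b_{k^*}$ (Lemma~\ref{lem:global_feedback}). The third is the cheapness of answer-only deviation under tokenwise KL (Lemma~\ref{lem:decoupling}).

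Under the masked objective~\eqref{eq:masked_objective}, Lemma~\ref{lem:answer_zero_grad} gives zero gradient with respect to every answer-span log-probability. In the answer-logit model $\ell_k(x)=\mathbf{w}_k^\top\mathbf{h}(x)+b_k$, the only terms that touch $b_k$ are the answer-position log-probabilities. I will therefore show explicitly that $\partial\mathcal{L}_{\mathrm{mask}}/\partial b_k = 0$ for all $k$, and likewise for $\mathbf{w}_k$ in the direct pathway.

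It follows that $\Delta b_{k^*}=0$. The input-independent additive shift of Lemma~\ref{lem:global_feedback} therefore never occurs, and the global positive-feedback loop of Theorem~\ref{thm:global_collapse}(i) is broken at its source.

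\textbf{Step 2: the only remaining pathway runs through the reasoning.} Lemma~\ref{lem:credit_redirection} and the dependency clarification show that any change in $\pi_\theta(z\mid x)$ must be mediated through $\theta\to y\to h_T\to z$. These reasoning-token updates are conditioned on the specific input $x$ and the sampled prefix. I will argue that they do not produce an input-independent increment of a single answer's logit. Instead, an answer's mass can grow only insofar as the reasoning trajectories that produce it become more probable.

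\textbf{Step 3: a collapse through reasoning is expensive.} Suppose a collapsed equilibrium were reached through the reasoning pathway. It would require the reasoning distribution to deviate systematically from $\pi_{\mathrm{ref}}$ on a constant fraction of positions. Lemma~\ref{lem:kl_barrier} then gives a KL cost of $\Omega(T)$. The reward gain from agreement is at most bounded: rewards lie in $[0,1]$ and advantages are normalized.

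For long chains, with $\beta$ fixed and not negligible, $\beta\,\Omega(T)$ exceeds this bounded gain. Under the same "$\beta$ does not dominate" regime used in Theorem~\ref{thm:global_collapse}, this reverses the cost comparison of Lemma~\ref{lem:decoupling}: the collapsed state is no longer a low-cost attractor. Combining Steps 1--3 yields both parts of the theorem. Global mode collapse is prevented, and any learning that does occur proceeds through changes to the reasoning.

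\textbf{Main obstacle.} The hard part is Step 2/3. The absorbing property of Lemma~\ref{lem:absorbing} still holds formally under masking: a fully collapsed policy still yields $A_i\equiv 0$. So I cannot claim the collapsed state is not a fixed point. I must instead argue that it is neither attracting nor reachable at bounded KL cost.

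My first attempt will be to restrict the claim to \emph{global} collapse, meaning an input-independent mode. I will show that reaching it requires either a shared-parameter answer shift, which Step 1 excludes, or input-wide reasoning deviation, which incurs $\Omega(T)$ cost per input. This must be stated carefully, using the explicit assumption that collapse implies deviation on a constant fraction of positions, as in Lemma~\ref{lem:kl_barrier}, since a fully rigorous version is not available at this level of modeling.
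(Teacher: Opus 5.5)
Your skeleton matches the paper's proof, which simply chains Lemma~\ref{lem:answer_zero_grad}, Lemma~\ref{lem:credit_redirection} and Lemma~\ref{lem:kl_barrier}. The trouble is that your Step~1 strengthens the first link into a claim that fails. The assertion that ``the only terms that touch $b_k$ are the answer-position log-probabilities'' holds only if the answer position has a private output head. In the paper's setup $k$ ranges over the vocabulary $\mathcal{V}$, and the reasoning tokens $y\in\mathcal{V}^T$ are emitted by the same $\pi_\theta$, i.e.\ through the same $(\mathbf{w}_k,b_k)$. So for a single group (ignoring the KL part, which only pulls toward $\pi_{\mathrm{ref}}$)
\[
\frac{\partial \mathcal{L}_{\mathrm{mask}}}{\partial b_k}
=-\frac{1}{G}\sum_{i=1}^{G}A_i\sum_{t=1}^{T_i}\Bigl(\mathbb{I}\{\tau_{i,t}=k\}-\pi_\theta(k\mid x,\tau_{i,<t})\Bigr),
\]
which is generically nonzero. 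Unlike in Lemma~\ref{lem:mode_bias}, the $\pi_\theta$ terms do not cancel through $\sum_i A_i=0$, because the inner sums differ across trajectories. Moreover, whenever winning trajectories state the consensus answer inside their reasoning before boxing it, as ordinary chains of thought do, the indicator part pushes $b_{k^*}$ up. That is exactly the input-independent shift of Lemma~\ref{lem:global_feedback}, now routed through reasoning positions.

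So $\Delta b_{k^*}=0$ and ``broken at its source'' are false. Step~2's claim that reasoning-token updates, being conditioned on $x$, cannot produce an input-independent increment of an answer logit fails for the same reason: a gradient computed on one input still moves parameters shared by all inputs, which is precisely the mechanism of Lemma~\ref{lem:global_feedback}. The paper never makes this claim. Lemma~\ref{lem:answer_zero_grad} only says the loss has no answer-position terms, and the dependency clarification explicitly concedes that $\pi_\theta(z\mid\cdot)$ still depends on $\theta$.

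The repair is to weaken Step~1 to what Lemma~\ref{lem:answer_zero_grad} actually gives: the direct answer-position route $\theta\to z$ is gone. Then let Step~3 absorb the residual indirect route. A shared-parameter shift driven by reasoning positions also moves those positions, whose KL term is not masked. A shift large enough to collapse the answer on every input is therefore a deviation on $\Theta(T)$ reasoning positions, of the kind Lemma~\ref{lem:kl_barrier} charges for.

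Note also that the bare comparison ``bounded reward gain versus $\beta\,\Omega(T)$'' applies to any change deviating on $\Theta(T)$ positions, including the beneficial reasoning changes the theorem says learning proceeds through. As in the paper, the conclusion rests on Lemma~\ref{lem:kl_barrier}'s hypothesis that collapse specifically requires systematic deviation on a constant fraction of positions, so state that assumption explicitly. Your observation that the fully collapsed policy remains a fixed point under masking ($A_i\equiv 0$) is correct, and the paper's proof passes over it.
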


\begin{proof}
Lemma~\ref{lem:answer_zero_grad} eliminates direct answer-level reinforcement,
removing the low-cost shortcut responsible for global mode collapse.
Lemma~\ref{lem:credit_redirection} shows that all reward-induced learning signals
are redirected to the reasoning process.
Lemma~\ref{lem:kl_barrier} establishes that enforcing answer consistency
via reasoning collapse incurs a KL cost that scales linearly with sequence length,
rendering such strategies unfavorable.
Therefore, the only viable optimization path is to improve
reasoning behaviors that reliably support correct answers.
\end{proof}

\begin{table*}[!t]
    \centering
    \small
    \begin{tabular}{l|cccccc}
    \toprule[1.6pt]
        \multirow{2}*{\textbf{Methods}} & \textbf{AIME24} & \textbf{AIME25} & \textbf{MATH500} & \textbf{GSM8K} & \textbf{AMC} & \multirow{2}*{\textbf{Average}} \\
         & Pass@16 & Pass@16 & Pass@4 & Pass@4 & Pass@8 &  \\

        \midrule
            \multicolumn{7}{c}{\textit{\textbf{Qwen3-1.7B-Base}}} \\
        \midrule

        Before RL & 16.67 & 13.33 & 72.0 & 89.31 & 56.63 & 49.59 \\
        GT-Reward & 26.67 & 16.67 & 82.0 & 92.42 & 59.04 & 55.36 \\
        \midrule
        Entropy Minimization & \cellcolor{lightpurple1}{20.00} & \cellcolor{lightpurple4}{23.33} & \cellcolor{lightpurple7}{81.2} & \cellcolor{lightpurple1}{90.75} & \cellcolor{lightpurple7}{59.04} & \cellcolor{lightpurple4}{54.86} \\
        Self-Certainty & \cellcolor{lightpurple4}{23.33} & \cellcolor{lightpurple7}{26.67} & \cellcolor{lightpurple1}{77.2} & \cellcolor{lightpurple2}{90.83} & \cellcolor{lightpurple4}{57.83} & \cellcolor{lightpurple5}{55.17} \\
        Majority~Voting & \cellcolor{lightpurple4}{23.33} & \cellcolor{lightpurple1}{16.67} & \cellcolor{lightpurple1}{77.2} & \cellcolor{lightpurple5}{92.34} & \cellcolor{lightpurple1}{53.01} & \cellcolor{lightpurple2}{52.51} \\
        CoReward & \cellcolor{lightpurple7}{26.67} & \cellcolor{lightpurple1}{16.67} & \cellcolor{lightpurple4}{77.4} & \cellcolor{lightpurple4}{92.04} & \cellcolor{lightpurple2}{54.22} & \cellcolor{lightpurple3}{53.40} \\
        OM-GRPO & \cellcolor{lightpurple1}{20.00} & \cellcolor{lightpurple7}{26.67} & \cellcolor{lightpurple5}{79.4} & \cellcolor{lightpurple7}{92.49} & \cellcolor{lightpurple7}{59.04} & \cellcolor{lightpurple7}{55.52} \\
        \specialrule{1pt}{0.4ex}{0.4ex}
            \multicolumn{7}{c}{\textit{\textbf{Llama-3.2-3B-Instruct}}} \\
        \midrule

        Before RL & 23.33 & 6.67 & 65.0 & 86.88 & 49.40 & 46.26 \\
        GT-Reward & 23.33 & 6.67 & 63.8 & 90.14 & 43.37 & 45.46 \\
        \midrule
        Entropy Minimization & \cellcolor{lightpurple3}{16.67} & \cellcolor{lightpurple4}{3.33} & \cellcolor{lightpurple2}{58.8} & \cellcolor{lightpurple1}{84.46} & \cellcolor{lightpurple7}{50.60} & \cellcolor{lightpurple3}{42.77} \\
        Self-Certainty & \cellcolor{lightpurple1}{13.33} & \cellcolor{lightpurple7}{10.00} & \cellcolor{lightpurple1}{58.0} & \cellcolor{lightpurple2}{84.69} & \cellcolor{lightpurple4}{44.58} & \cellcolor{lightpurple1}{42.12} \\
        Majority~Voting & \cellcolor{lightpurple3}{16.67} & \cellcolor{lightpurple1}{0.00} & \cellcolor{lightpurple5}{63.6} & \cellcolor{lightpurple7}{90.98} & \cellcolor{lightpurple1}{39.76} & \cellcolor{lightpurple2}{42.20} \\
        CoReward & \cellcolor{lightpurple7}{23.33} & \cellcolor{lightpurple1}{0.00} & \cellcolor{lightpurple4}{60.4} & \cellcolor{lightpurple5}{90.45} & \cellcolor{lightpurple2}{43.37} & \cellcolor{lightpurple4}{43.51} \\
        OM-GRPO & \cellcolor{lightpurple7}{23.33} & \cellcolor{lightpurple4}{3.33} & \cellcolor{lightpurple7}{67.2} & \cellcolor{lightpurple4}{89.61} & \cellcolor{lightpurple5}{48.19} & \cellcolor{lightpurple7}{46.33} \\
        \specialrule{1pt}{0.4ex}{0.4ex}
            \multicolumn{7}{c}{\textit{\textbf{Qwen2.5-7B}}} \\
        \midrule

        Before RL & 30.00 & 20.00 & 76.2 & 93.78 & 63.86 & 56.77 \\
        GT-Reward & 33.33 & 30.00 & 85.2 & 95.22 & 68.67 & 62.48 \\
        \midrule
        Entropy Minimization & \cellcolor{lightpurple3}{30.00} & \cellcolor{lightpurple7}{36.67} & \cellcolor{lightpurple4}{85.2} & \cellcolor{lightpurple1}{93.78} & \cellcolor{lightpurple5}{67.47} & \cellcolor{lightpurple7}{62.62} \\
        Self-Certainty & \cellcolor{lightpurple1}{23.33} & \cellcolor{lightpurple4}{30.00} & \cellcolor{lightpurple2}{84.6} & \cellcolor{lightpurple2}{93.86} & \cellcolor{lightpurple7}{69.88} & \cellcolor{lightpurple3}{60.33} \\
        Majority~Voting & \cellcolor{lightpurple7}{33.33} & \cellcolor{lightpurple4}{30.00} & \cellcolor{lightpurple7}{85.4} & \cellcolor{lightpurple4}{95.00} & \cellcolor{lightpurple1}{63.86} & \cellcolor{lightpurple4}{61.52} \\
        CoReward & \cellcolor{lightpurple3}{30.00} & \cellcolor{lightpurple1}{26.67} & \cellcolor{lightpurple1}{83.8} & \cellcolor{lightpurple7}{95.60} & \cellcolor{lightpurple1}{63.86} & \cellcolor{lightpurple2}{59.99} \\
        OM-GRPO & \cellcolor{lightpurple7}{33.33} & \cellcolor{lightpurple4}{30.00} & \cellcolor{lightpurple4}{85.2} & \cellcolor{lightpurple5}{95.45} & \cellcolor{lightpurple4}{66.27} & \cellcolor{lightpurple5}{62.05} \\

        \bottomrule[1.6pt]
    \end{tabular}
    \caption{Pass@k Results (\%) of \textit{RL performance} comparison on math reasoning benchmarks.}
    \label{tab:main_passk}
\end{table*}

\section{Additional Analysis}

\subsection{Low-Contamination Benchmark Results}
\label{appendix:low_contamination}

To further examine whether OM-GRPO's gains depend on benchmark exposure in pretraining, we evaluate Qwen2.5-7B on three recently released math benchmarks whose public releases postdate Qwen2.5-7B: AIME26, HMMT25, and HMMT26.
As shown in Table~\ref{tab:low_contamination}, OM-GRPO achieves the highest average Pass@16 among all compared methods and the second-best average Avg@16, behind only the supervised GT-Reward baseline.

\begin{table*}[!t]
    \centering
    \small
    \renewcommand{\arraystretch}{1.05}
    \setlength{\tabcolsep}{4.5pt}
    \begin{tabular}{l|cccccccc}
    \toprule[1.6pt]
        \multirow{2}*{\textbf{Methods}} & \multicolumn{2}{c}{\textbf{AIME26}} & \multicolumn{2}{c}{\textbf{HMMT25}} & \multicolumn{2}{c}{\textbf{HMMT26}} & \multicolumn{2}{c}{\textbf{Average}} \\
        \cmidrule(lr){2-3} \cmidrule(lr){4-5} \cmidrule(lr){6-7} \cmidrule(lr){8-9}
        ~ & Avg@16 & Pass@16 & Avg@16 & Pass@16 & Avg@16 & Pass@16 & Avg@16 & Pass@16 \\
        \midrule
        GT-Reward & $6.46_{\, \pm 1.37}$ & 23.33 & $1.67_{\, \pm 0.92}$ & 10.00 & $4.36_{\, \pm 1.02}$ & 12.12 & 4.16 & 15.15 \\
        \midrule
        Self-Certainty & \cellcolor{lightpurple1}{$3.12_{\, \pm 1.21}$} & \cellcolor{lightpurple4}{16.67} & \cellcolor{lightpurple1}{$0.00_{\, \pm 0.00}$} & \cellcolor{lightpurple4}{10.00} & \cellcolor{lightpurple4}{$2.46_{\, \pm 1.21}$} & \cellcolor{lightpurple4}{12.12} & \cellcolor{tealgreen3}{1.86} & \cellcolor{tealgreen4}{12.93} \\
        Entropy & \cellcolor{lightpurple5}{$3.96_{\, \pm 1.33}$} & \cellcolor{lightpurple7}{20.00} & \cellcolor{lightpurple5}{$0.42_{\, \pm 0.61}$} & \cellcolor{lightpurple7}{13.33} & \cellcolor{lightpurple2}{$0.38_{\, \pm 0.55}$} & \cellcolor{lightpurple2}{9.09} & \cellcolor{tealgreen2}{1.58} & \cellcolor{tealgreen5}{14.14} \\
        Majority~Voting & \cellcolor{lightpurple2}{$3.54_{\, \pm 1.52}$} & \cellcolor{lightpurple1}{10.00} & \cellcolor{lightpurple1}{$0.00_{\, \pm 0.00}$} & \cellcolor{lightpurple1}{3.33} & \cellcolor{lightpurple5}{$3.03_{\, \pm 0.00}$} & \cellcolor{lightpurple1}{6.06} & \cellcolor{tealgreen5}{2.19} & \cellcolor{tealgreen1}{6.46} \\
        CoReward & \cellcolor{lightpurple4}{$3.75_{\, \pm 1.43}$} & \cellcolor{lightpurple4}{16.67} & \cellcolor{lightpurple3}{$0.21_{\, \pm 0.44}$} & \cellcolor{lightpurple1}{3.33} & \cellcolor{lightpurple1}{$0.00_{\, \pm 0.00}$} & \cellcolor{lightpurple2}{9.09} & \cellcolor{tealgreen1}{1.32} & \cellcolor{tealgreen2}{9.70} \\
        OM-GRPO & \cellcolor{lightpurple7}{$4.58_{\, \pm 1.28}$} & \cellcolor{lightpurple7}{20.00} & \cellcolor{lightpurple7}{$0.62_{\, \pm 0.72}$} & \cellcolor{lightpurple7}{13.33} & \cellcolor{lightpurple7}{$4.55_{\, \pm 0.83}$} & \cellcolor{lightpurple7}{21.21} & \cellcolor{tealgreen6}{3.25} & \cellcolor{tealgreen6}{18.18} \\
        \bottomrule[1.6pt]
    \end{tabular}
    \caption{Low-contamination benchmark results (\%) on Qwen2.5-7B. AIME26, HMMT25, and HMMT26 are recently released benchmarks whose public releases postdate Qwen2.5-7B.}
    \label{tab:low_contamination}
\end{table*}

\subsection{Pass@k Results}
\label{appendix:main_passk}
Table~\ref{tab:main_passk} presents the Pass@k performance across three backbones. OM-GRPO consistently achieves strong pass@k performance across backbones, remaining competitive with GT-reward training and outperforming other label-free baselines.
Since Pass@k reflects the coverage of the solution space under multiple sampling, the observed improvements suggest that OM-GRPO encourages the policy to explore a broader range of plausible reasoning paths. This increased diversity can be partially attributed to the use of soft rewards, which provide smoother optimization signals than hard rewards, thereby avoiding premature concentration on a single dominant solution. Meanwhile, the masked optimization in OM-GRPO prevents direct over-optimization of specific answer patterns, reducing shortcut learning and mitigating mode collapse. Consequently, the model maintains sustained diversity during training, consistent with the answer diversity trends observed in Figure~\ref{fig:avg_ans_cnt_pre_sample}.

\begin{figure}[t]
    \centering
    \includegraphics[width=0.9\linewidth]{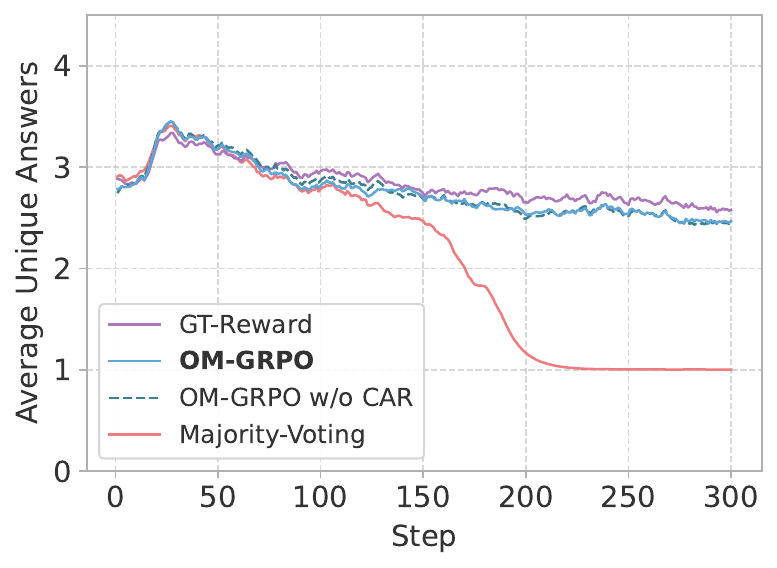}
    \caption{Average unique answers per sample over training steps on Qwen3-1.7B-Base.}
    \label{fig:avg_ans_cnt_pre_sample}
\end{figure}
\begin{figure*}[t]
    \centering
    \includegraphics[width=\textwidth]{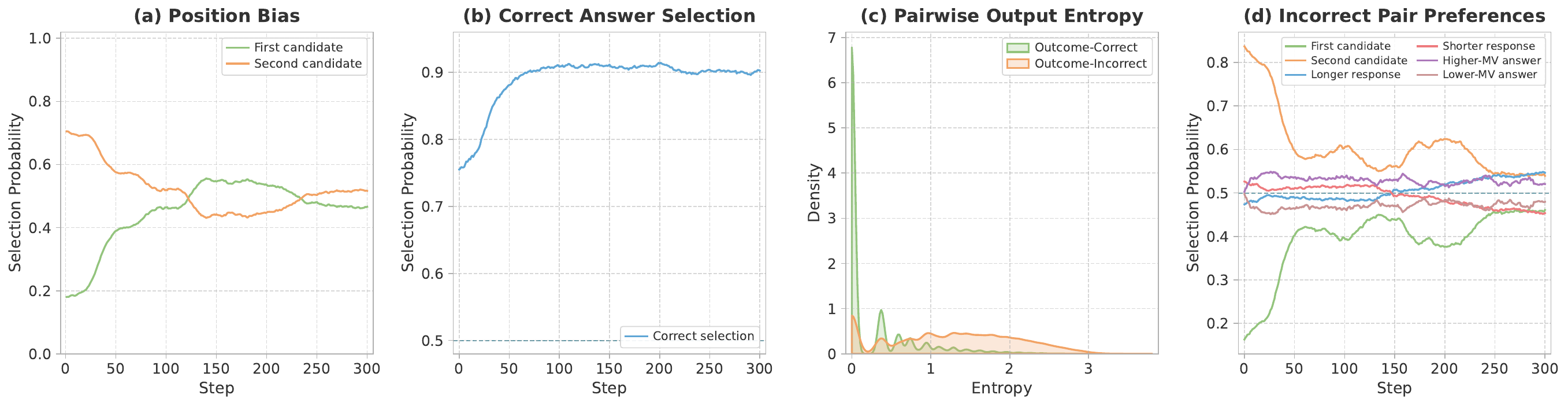}
    \caption{Mechanistic analysis of CAR. CAR improves reward estimation by amplifying reliable pairwise signals from correct reasoning trajectories while introducing no systematic bias when no correct answer exists.}
    \label{fig:car_effect}
\end{figure*}

\subsection{Answer Diversity and Training Stability.}
\label{appendix:ans_diversity}

A complementary view of training stability is provided by answer diversity.
Figure~\ref{fig:avg_ans_cnt_pre_sample} shows that Majority Voting suffers from severe answer collapse: the average number of unique answers steadily drops and eventually approaches one, indicating convergence to a single final answer across inputs.
In contrast, OM-GRPO (with or without CAR) maintains a substantially higher and more stable level of answer diversity throughout training, closely tracking the GT-Reward.
This indicates that OM-GRPO suppresses answer-level shortcuts and preserves meaningful diversity under label-free optimization.

\subsection{Rollout-Level Pseudo-Label Bias Analysis}
\label{appendix:rollout_bias_diagnostics}

To analyze whether the failure of Majority Voting is caused by independent pseudo-label noise or by bias from the current policy, we conduct a rollout-level analysis on Qwen3-1.7B-Base.
For each training step, we extract the final answer from sampled rollouts and track four quantities: top-answer share, accuracy, mean training reward, and the share of predictions equal to \texttt{1}.
Top-answer share measures whether the policy collapses to a single answer mode, while predicted-\texttt{1} share checks whether this mode corresponds to a specific spurious answer.
As shown in Figure~\ref{fig:rollout_bias_diagnostics}, Majority Voting directionally collapses to \texttt{1}.
Over the last 50 steps, its top-answer share and predicted-\texttt{1} share both approach 1.0, while only about 2.4\% of ground-truth answers are \texttt{1}.
At the same time, its training reward increases from 1.354 to 1.996, but its accuracy drops from 44.0\% to 2.4\%.
This indicates that the pseudo-label error is not merely random: the current policy's biased high-frequency answer becomes the consensus pseudo-label and is then further reinforced, forming a reward-hacking feedback loop.
OM-GRPO mitigates this failure mode by masking answer-span gradients, which prevents answer-level pseudo rewards from directly reinforcing answer tokens; as a result, its answer distribution and accuracy remain close to the GT-Reward trend.

\begin{figure*}[t]
    \centering
    \includegraphics[width=0.94\textwidth]{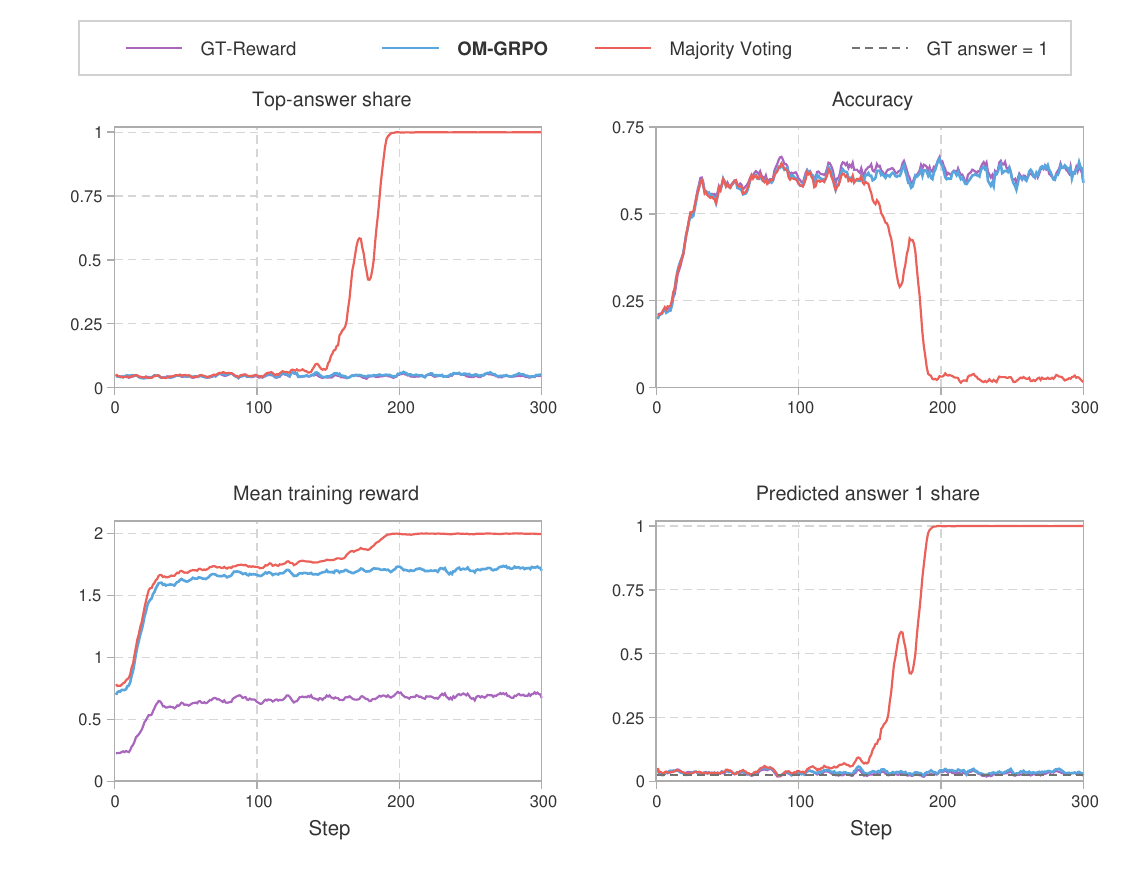}
    \caption{Rollout-level analysis on Qwen3-1.7B-Base, including top-answer share, accuracy, mean training reward, and predicted-\texttt{1} share.}
    \label{fig:rollout_bias_diagnostics}
\end{figure*}

\subsection{PRM-based Process Reward Baselines}
\label{appendix:prm_baseline_dynamics}

We further examine whether replacing answer-level self-rewarding with process reward supervision is sufficient to avoid collapse.
On Qwen3-1.7B-Base, we add PRM-based reward baselines using ReasonFlux-PRM-1.5B and ReasonFlux-PRM-7B~\citep{zou2025reasonfluxprmtrajectoryawareprmslong}, and PURE-PRM-7B~\citep{cheng2025stopsummationminformcredit}.
These baselines use an additional reward model to score reasoning processes, while the final-answer labels remain unavailable.

\begin{figure}[t]
    \centering
    \includegraphics[width=0.92\linewidth]{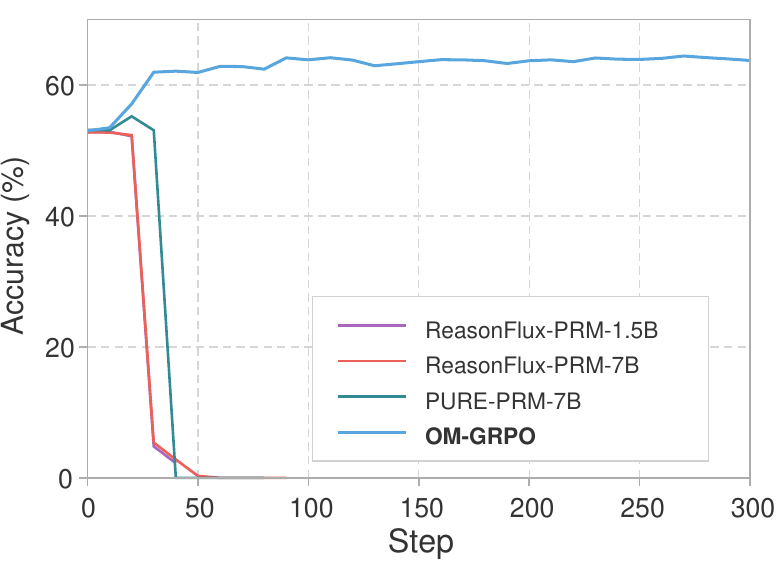}
    \caption{Math5000 validation accuracy of PRM-based process reward baselines on Qwen3-1.7B-Base. PRM-based baselines collapse early despite using an additional reward model, while OM-GRPO remains stable without loading an extra reward model.}
    \label{fig:prm_baseline_dynamics}
\end{figure}

As shown in Figure~\ref{fig:prm_baseline_dynamics}, all PRM-based baselines collapse within 30--40 steps.
ReasonFlux-PRM-1.5B and ReasonFlux-PRM-7B both start around 52.8\%, then fall to 2.2\% and 0.0\%, respectively; PURE-PRM-7B briefly rises from 53.1\% to 55.2\% at step 20 but also reaches 0.0\% around step 40.
In contrast, OM-GRPO improves from 53.0\% to 64.4\% and remains stable at 63.7\% in the final checkpoint.
These results suggest that process reward supervision alone does not remove the collapse path in label-free RLVR, while requiring an extra reward model; OM-GRPO avoids this cost and stabilizes training through outcome masking.

\subsection{Direct Process-Quality Evaluation}
\label{appendix:process_quality_eval}

To directly evaluate reasoning quality, we follow a ReCEval-style rubric~\citep{prasad-etal-2023-receval} on Qwen3-1.7B-Base/AIME25.
The evaluator sees only the problem and reasoning trajectory, and rates correctness, informativeness, and whether the reasoning supports the answer.
As shown in Table~\ref{tab:process_quality_eval}, OM-GRPO improves all three metrics over MV, supporting that outcome masking shifts optimization toward better reasoning rather than merely stabilizing answer tokens.

\begin{table}[t]
    \centering
    \small
    \setlength{\tabcolsep}{3.5pt}
    \renewcommand{\arraystretch}{1.05}
    \resizebox{\linewidth}{!}{%
    \begin{tabular}{lccc}
        \toprule[1.6pt]
        \textbf{Method} & \textbf{Correctness} & \textbf{Informativeness} & \textbf{Answer Support} \\
        \midrule
        MV & 77.78 & 68.89 & 50.00 \\
        OM-GRPO & \textbf{89.29} & \textbf{80.95} & \textbf{80.00} \\
        $\Delta$ & +11.51 & +12.06 & +30.00 \\
        \bottomrule[1.6pt]
    \end{tabular}%
    }
    \caption{Reasoning-quality evaluation on Qwen3-1.7B-Base/AIME25 (\%).}
    \label{tab:process_quality_eval}
\end{table}

\subsection{Mechanistic Analysis of CAR.}
\label{appendix:car_analysis}
To understand how CAR improves soft reward estimation, we analyze its behavior from four perspectives.

\paragraph{Position bias.}
We first examine whether CAR relies on superficial presentation order in pairwise prompts.
Figure~\ref{fig:car_effect}(a) shows that the model initially exhibits a strong preference for the second candidate.
As OM-GRPO training progresses, this bias gradually diminishes and the selection probabilities of the two candidates move toward a balanced regime.
Because the pairwise data are constructed symmetrically, this trend suggests that the learned comparison signal becomes increasingly insensitive to candidate order.

\paragraph{Correct answer selection.}
Figure~\ref{fig:car_effect}(b) plots the probability that CAR selects the correct answer when a correct answer is present in the candidate set.
The probability quickly rises during early training and stabilizes around 90\%.
This indicates that CAR reliably identifies correct answers when they appear.
This behavior aligns with the role of CAR in our framework: it refines soft reward estimation by injecting inexpensive pairwise comparison signals into the augmented answer pool.

\paragraph{Trajectory entropy.}
We further analyze the entropy of pairwise outputs associated with each anchor trajectory, i.e., the uncertainty of answers produced when a trajectory is compared against others in the same group.
Figure~\ref{fig:car_effect} (d) shows a clear separation between outcome-correct and outcome-incorrect trajectories.
Correct trajectories produce a highly concentrated distribution with near-zero entropy, indicating that pairwise comparisons consistently reproduce the same answer.
In contrast, incorrect trajectories exhibit substantially higher and more dispersed entropy.
This entropy gap suggests that correct reasoning induces stable comparative signals, whereas incorrect reasoning produces inconsistent outcomes, allowing CAR to downweight spurious trajectories during reward estimation.

\paragraph{Incorrect-pair preference.}
Finally, we examine pairs in which both candidate answers are incorrect and different.
Figure~\ref{fig:car_effect}(c) shows that when no correct answer is present, the selection probabilities of the two candidates are close to 0.5, suggesting that CAR does not introduce systematic biases.

Overall, these results suggest that CAR improves reward estimation primarily through informative pairwise comparisons, rather than by relying on heuristic shortcuts such as favoring familiar but incorrect reasoning patterns.

\subsection{Robustness to Weak Initial Reasoning Ability}
\label{appendix:weak_base_model}
\begin{figure*}[t]
    \centering
    \includegraphics[width=\textwidth]{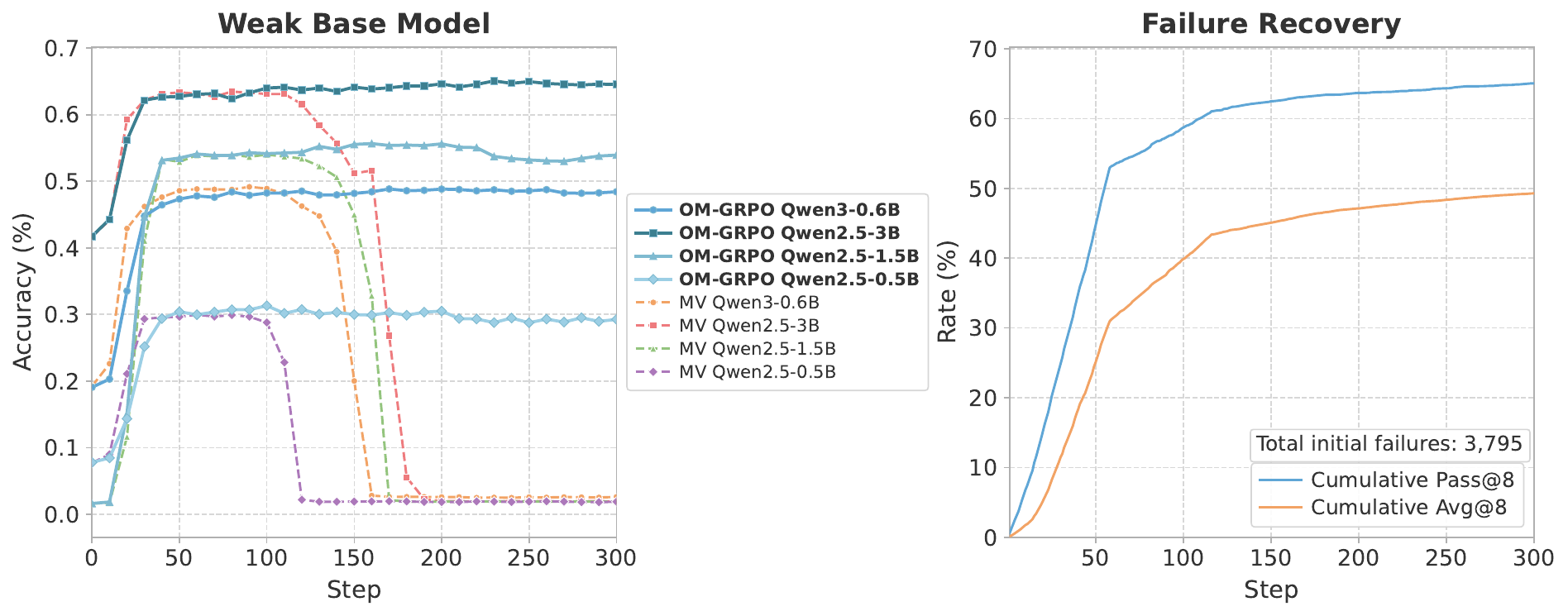}
    \caption{Effect of base-model strength on label-free RLVR.}
    \label{fig:weak_model_and_failure_recovery}
\end{figure*}

To understand how exploration quality affects label-free RLVR, we analyze this issue from two complementary angles. Weak base models provide a natural test bed for reduced exploration quality, since they typically generate far fewer correct trajectories at initialization, and in some cases the initial Pass@8 can be close to zero.

As shown in Figure~\ref{fig:weak_model_and_failure_recovery}, although overall performance decreases as the base model becomes weaker, OM-GRPO remains stable across model scales. In contrast, the MV baseline becomes increasingly prone to late-stage collapse. This suggests that limited initial exploration quality does not by itself force training into a consistent but incorrect mode.
We further examine an extreme regime by tracking 3,795 samples whose initial Pass@8 is exactly zero, meaning that none of the initial sampled trajectories is correct. If correct trajectories were required from the start, such samples would remain unrecoverable throughout training. However, the right panel shows that cumulative Pass@8 steadily rises to about 65\%, while cumulative Avg@8 reaches about 49\%, indicating that many initially failed samples are eventually solved and that overall solution quality also improves.

This behavior is consistent with the design of OM-GRPO. Because gradients on the answer span are masked, the model does not directly reinforce the majority answer string. Even when early rollouts are consistently incorrect, updates mainly affect reasoning tokens rather than answer tokens, which helps avoid convergence to a self-consistent but incorrect answer mode.

\subsection{Stability under Strong Base Initialization}
\label{appendix:strong_base_initialization}

\begin{figure}[t]
    \centering
    \includegraphics[width=0.92\linewidth]{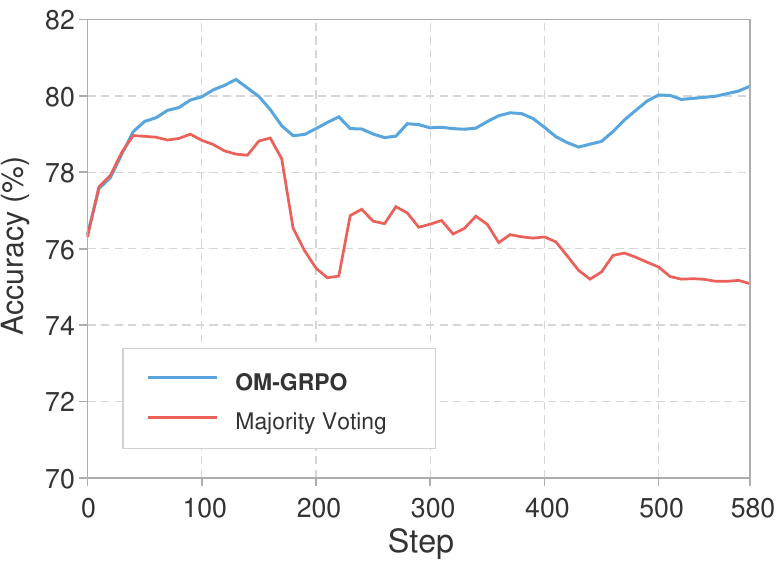}
    \caption{Validation accuracy of Majority Voting and OM-GRPO on Qwen3-14B-Base over 10 training epochs.}
    \label{fig:qwen3_14b_strong_initialization}
\end{figure}

We further evaluate Qwen3-14B-Base over the full 10-epoch training curve to examine whether the degradation of majority-voting training is merely caused by weak initialization.
As shown in Figure~\ref{fig:qwen3_14b_strong_initialization}, the stronger base model mitigates immediate collapse: MV improves from 76.31 to a peak accuracy of 79.77.
However, MV still exhibits a clear peak-to-final degradation, dropping to 75.09 at the final step and averaging 75.21 over the last 10 checkpoints.
In contrast, OM-GRPO increases from 76.37 to a peak of 80.58 and remains stable at 80.25 in the final checkpoint, with a last-10 average of 80.04.
This indicates that MV instability is not solely due to insufficient initial exploration; even with a strong base model, answer-level pseudo-label feedback can still be amplified during training, while outcome masking reduces this amplification by preventing direct optimization of answer tokens.

\sisetup{
    table-format = 1.2,
    detect-weight = true,
    detect-inline-weight = math
}

\begin{table*}[t]
\centering

\small
\setlength{\tabcolsep}{5pt}
\renewcommand{\arraystretch}{1.1}

\begin{tabular}{l
                S S S S S S S}
\toprule
Model
& {Overall}
& {\makecell{Academic \&\\Engineering}}
& {\makecell{Finance \&\\Business}}
& {\makecell{Politics \&\\Law}}
& {\makecell{Literature \&\\Arts}}
& {Education}
& {Advertising} \\
\midrule
Entropy Minimization
& 3.52
& 3.22
& 3.37
& 3.05
& 3.64
& 3.97
& 4.32 \\

OM-GRPO
& {\bfseries 4.04}
& {\bfseries 4.01}
& {\bfseries 3.74}
& {\bfseries 4.10}
& {\bfseries 4.03}
& {\bfseries 4.27}
& {\bfseries 4.32} \\
\bottomrule
\end{tabular}
\caption{WritingBench benchmark results under test-time training. Scores are computed using GPT-5-nano as the judge model. Results for EM are reported from the last checkpoint before collapse, while OM-GRPO results are taken from the final checkpoint.}
\label{tab:writingbench_domain_results}
\end{table*}

\subsection{Generalization to Open-Ended Generation}
\label{appendix:writingbench}

We further evaluate OM-GRPO on the open-ended WritingBench task under a test-time training setup. In this experiment, training uses an entropy-based reward defined on the answer span, following the entropy-minimization (EM) objective of~\citet{prabhudesai2025confidence}. Each output follows the structured format \texttt{<think>\{reasoning\}</think> <answer>\{answer\}</answer>}. We compare OM-GRPO with the EM baseline of~\citet{prabhudesai2025confidence}, which directly minimizes answer-span entropy without masking. In contrast, OM-GRPO masks gradients on the answer span during optimization.

Figure~\ref{fig:writingbench_entropy} shows that the two methods exhibit markedly different optimization behaviors. The EM baseline drops rapidly in the early stage and soon approaches zero. Examining its outputs reveals a trivial collapse pattern: for nearly all inputs, the model generates the literal string \texttt{<answer>...</answer>}, i.e., an answer span containing only three dots. In contrast, OM-GRPO avoids this collapse. Although its score also decreases at the beginning of training, it later recovers and remains substantially higher throughout optimization.
This difference is also reflected in the final WritingBench results in Table~\ref{tab:writingbench_domain_results}. OM-GRPO improves the overall score from 3.52 to 4.04 and achieves better results across all domains.
These results support two observations. First, OM-GRPO extends naturally to the open-ended WritingBench task with only a simple structural constraint on the output format, and remains effective under test-time training. Second, a similar collapse pattern is also observed under this entropy-based reward, while OM-GRPO continues to deliver more stable optimization and better final performance.

\begin{figure}[t]
    \centering
    \includegraphics[width=0.9\linewidth]{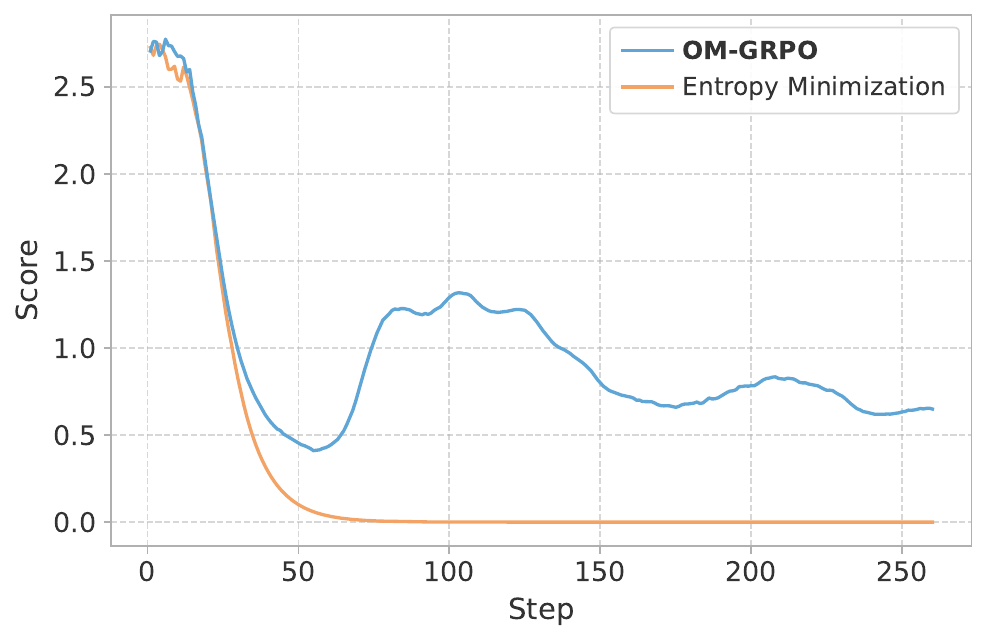}
    \caption{Test-Time Training dynamics on the open-ended WritingBench task.}
    \label{fig:writingbench_entropy}
\end{figure}

\section{Computational Overhead}
\label{appendix:computational_overhead}

We further analyze the practical overhead of OM-GRPO relative to the Majority Voting baseline under the same Qwen3-1.7B-Base setting with $n{=}8$ rollouts.
All time statistics are averaged over 348 training steps, and memory statistics are computed from the 8-GPU training logs.
As shown in Table~\ref{tab:efficiency_overhead}, OM-GRPO increases the measured training subtotal from 37.71 to 59.12 seconds per step.
This additional cost is concentrated in reward computation: the reward stage increases from 0.40 to 19.22 seconds per step, while generation, old-policy log-probability computation, reference-model computation, advantage computation, and actor update differ only mildly.

Table~\ref{tab:efficiency_overhead} also shows that peak GPU memory usage remains nearly unchanged.
The maximum per-GPU peak memory is 60.34 GiB for OM-GRPO and 60.28 GiB for Majority Voting, and the mean peak memory across GPUs is 59.49 GiB vs. 59.31 GiB.
This is because OM-GRPO does not load an additional reward model; its extra computation is performed during reward computation over sampled trajectories.
The higher active mean mainly reflects the longer reward-computation stage, not a higher peak-memory requirement.
Overall, OM-GRPO trades additional reward-side time for more stable label-free RLVR while keeping peak memory essentially unchanged.

\begin{table}[!t]
    \centering
    \small
    \renewcommand{\arraystretch}{1.02}
    \setlength{\tabcolsep}{2pt}
    \begin{tabular*}{\columnwidth}{@{}l@{\extracolsep{\fill}}cc@{}}
    \toprule[1.6pt]
        \multicolumn{3}{c}{\textit{\textbf{Training Time}}} \\
        \midrule
        \textbf{Component} & \textbf{OM-GRPO} & \textbf{Majority~Voting} \\
        \midrule
        Generation & 21.91 & 20.14 \\
        Reward & 19.22 & 0.40 \\
        Old-policy log prob & 3.34 & 3.24 \\
        Reference-model log prob & 3.15 & 3.00 \\
        Advantage & 0.03 & 0.03 \\
        Actor update & 11.48 & 10.91 \\
        \midrule
        \textbf{Subtotal} & \textbf{59.12} & \textbf{37.71} \\
        \specialrule{1pt}{0.7ex}{0.7ex}
        \multicolumn{3}{c}{\textit{\textbf{GPU Memory}}} \\
        \midrule
        \textbf{Metric} & \textbf{OM-GRPO} & \textbf{Majority~Voting} \\
        \midrule
        Max GPU peak & 60.34 GiB & 60.28 GiB \\
        Mean GPU peak & 59.49 GiB & 59.31 GiB \\
        Active mean per GPU & 46.93 GiB & 40.67 GiB \\
        \bottomrule[1.6pt]
    \end{tabular*}
    \caption{Computational overhead on Qwen3-1.7B-Base with $n{=}8$ rollouts. Time is reported in seconds per step; memory is reported in GiB from 8-GPU training logs.}
    \label{tab:efficiency_overhead}
\end{table}

\section{Case Study}
\label{appendix:case_study_mismatch}

To illustrate the reasoning--answer mismatch failure mode in label-free RLVR and the effectiveness of our proposed OM-GRPO, we analyze a representative example shown in Figure~\ref{fig:case_mismatch}, which contrasts a MV baseline output with the output produced by OM-GRPO on the same algebraic problem.

\paragraph{Reasoning--Answer Mismatch.}
In the Figure~\ref{fig:case_mismatch} example, the baseline model generates a seemingly valid derivation: it correctly solves the denominator equation and identifies the two vertical asymptotes at $x=-\frac{2}{3}$ and $x=-1$, implying $a+b=-\frac{5}{3}$. However, the final answer contradicts this derivation and reports an incorrect result. This behavior highlights a structural limitation of outcome-centric optimization in label-free RLVR: the reward signal is agnostic to whether the final prediction is logically entailed by the preceding reasoning trace. As a result, the model may satisfy the reward objective via answer-level shortcuts while breaking the causal dependency between reasoning and outcome.

\paragraph{Reasoning--Answer Consistency under OM-GRPO.}
In the Figure~\ref{fig:case_ours} example, the output produced by OM-GRPO exhibits a consistent reasoning--answer relationship. The model derives the asymptotes and subsequently reports the matching value of $a+b$, preserving a coherent causal chain from intermediate computation to the final answer. This improvement follows directly from OM-GRPO's masked credit assignment: by excluding answer tokens from gradient updates, the optimization pressure is redirected toward the reasoning process, making the final prediction a consequence of the reasoning trace rather than an independently optimized outcome.

\begin{figure*}
\centering

\begin{exbox}{Case: Reasoning--Answer Mismatch}
    \small
    \textbf{Question:}
    The graph of $y=\frac{5x^2-9}{3x^2+5x+2}$ has vertical asymptotes at $x = a$ and $x = b$. Find $a + b$.

    \vspace{0.5em}
    \hrule
    \vspace{0.5em}

    \textbf{Model Output:} \\
    To find the vertical asymptotes of the function \( y = \frac{5x^2 - 9}{3x^2 + 5x + 2} \), we first need to determine where the denominator is zero. The vertical asymptotes occur where the denominator, \(3x^2 + 5x + 2\), is equal to zero.

    Let's solve for \( x \):

    \[ 3x^2 + 5x + 2 = 0 \]

    We will use the quadratic formula \( x = \frac{-b \pm \sqrt{b^2 - 4ac}}{2a} \), where \( a = 3 \), \( b = 5 \), and \( c = 2 \).

    Substitute \( a, b, \) and \( c \) into the quadratic formula:

    \[ x = \frac{-5 \pm \sqrt{5^2 - 4 \times 3 \times 2}}{2 \times 3} \]

    \[ x = \frac{-5 \pm \sqrt{25 - 24}}{6} \]

    \[ x = \frac{-5 \pm \sqrt{1}}{6} \]

    \[ x = \frac{-5 \pm 1}{6} \]

    This gives two solutions:

    \[ x = \frac{-5 + 1}{6} = \frac{-4}{6} = -\frac{2}{3} \]

    \[ x = \frac{-5 - 1}{6} = \frac{-6}{6} = -1 \]

    So, the vertical asymptotes occur at \( x = -\frac{2}{3} \) and \( x = -1 \).

    \vspace{1.2em}

    Thus, \( a = -\frac{2}{3} \) and \( b = -1 \).

    \vspace{1.2em}

    To find \( a + b \):

    \[
        a + b = -\frac{2}{3} + (-1)
    \]

    \[
        a + b = -\frac{2}{3} - 1
    \]

    \[
        a + b = -\frac{2}{3} - \frac{3}{3}
    \]

    \[
        a + b = \textcolor{reasoninggreen}{-\frac{5}{3}} \tag*{\cmark}
    \]

    Therefore, the answer is:

    \[
        \textcolor{failurered}{\boxed{1}}  \tag*{\xmark}
    \]

\end{exbox}

\caption{A Case Illustrating Reasoning--Answer Mismatch.}
\label{fig:case_mismatch}
\end{figure*}

\begin{figure*}
\centering

\begin{exbox}{Case: Reasoning--Answer Consistency}
    \small
    \textbf{Question:}
    The graph of $y=\frac{5x^2-9}{3x^2+5x+2}$ has vertical asymptotes at $x = a$ and $x = b$. Find $a + b$.

    \vspace{0.5em}
    \hrule
    \vspace{0.5em}

    \textbf{Model Output:} \\

    To find the vertical asymptotes of the function \( y = \frac{5x^2 - 9}{3x^2 + 5x + 2} \), we need to determine the values of \( x \) where the denominator is zero (since the function approaches infinity or negative infinity at these points).

    Let's start by solving the equation \( 3x^2 + 5x + 2 = 0 \). First, we need to factorize the quadratic equation.

    1. Factor the denominator:
       \[
       3x^2 + 5x + 2 = 0
       \]

    To factor this, we look for two numbers that multiply to \(3 \cdot 2 = 6\) and add up to \(5\). These numbers are \(2\) and \(3\). Thus, we can rewrite the middle term \(5x\) as \(2x + 3x\):
    \[
    3x^2 + 2x + 3x + 2 = 0
    \]

    Now, we factor by grouping:
    \[
    3x^2 + 2x + 3x + 2 = x(3x + 2) + 1(3x + 2) = (3x + 2)(x + 1) = 0
    \]

    Setting each factor equal to zero gives us the potential asymptotes:
    \[
    3x + 2 = 0 \quad \text{and} \quad x + 1 = 0
    \]

    Solving these linear equations:
    \[
    3x + 2 = 0 \implies 3x = -2 \implies x = -\frac{2}{3}
    \]
    \[
    x + 1 = 0 \implies x = -1
    \]

    Thus, the vertical asymptotes are at \( x = -\frac{2}{3} \) and \( x = -1 \). The values \( a \) and \( b \) are therefore \( -\frac{2}{3} \) and \( -1 \) respectively.

    We need to find \( a + b \):
    \[
    a + b = -\frac{2}{3} + (-1) = -\frac{2}{3} - 1 = -\frac{2}{3} - \frac{3}{3} = \textcolor{reasoninggreen}{-\frac{5}{3}} \tag*{\cmark}
    \]

    Thus, the final answer is:
    \[
    \textcolor{reasoninggreen}{\boxed{-\frac{5}{3}}}  \tag*{\cmark}
    \]

\end{exbox}

\caption{A Case Demonstrating Reasoning--Answer Consistency Achieved by OM-GRPO.}
\label{fig:case_ours}
\end{figure*}

\end{document}